\documentclass{article} 
\usepackage{iclr2021_conference,times}
\iclrfinalcopy

\usepackage[breaklinks=true]{hyperref}
\usepackage{url}

\usepackage[T1]{fontenc}    
\usepackage{hyperref}       
\usepackage{url}            
\usepackage{booktabs}       
\usepackage{amsfonts}       
\usepackage{nicefrac}       
\usepackage{microtype}      

\usepackage[pdftex]{graphicx}
\usepackage{natbib} 
\usepackage{algorithm} 
\usepackage{algorithmic} 
\usepackage{multirow}
\usepackage{amsmath}
\usepackage{amssymb}
\usepackage{amsthm}
\usepackage{here}
\usepackage{wrapfig}
\usepackage{centernot}
\usepackage{enumitem}
\usepackage{comment}
\usepackage {subfig}

\allowdisplaybreaks[1]

\newtheorem{theorem}{Theorem}
\newtheorem{lemma}{Lemma}
\newtheorem{proposition}{Proposition}

\newtheorem{corollary}{Corollary}
\newtheorem{assumption}{Assumption}

\newif\ifWITHSUPP
\WITHSUPPtrue 

\usepackage{color}
\usepackage{CJKutf8}
\usepackage{ifthen}
\newcommand{\COMM}[2]{{
\begin{CJK}{UTF8}{ipxm}
\ifthenelse{\equal{#1}{AN}}{\color{red}}{
\ifthenelse{\equal{#1}{TS}}{\color{blue}}{
\ifthenelse{\equal{#1}{BB}}{\color{green}}}}
[#1: #2]
\end{CJK}
}}

\newcommand{\defeq}{\overset{def}{=}}

\newcommand{\sgn}{{\rm sgn}}

\newcommand{\argmin}{{\rm arg}\min}

\newcommand{\plim}{{\rm plim}}

\def\pd<#1>{\left\langle #1 \right\rangle}
\def\floor[#1]{\left\lfloor #1 \right\rfloor}
\def\ceil[#1]{\left\lceil #1 \right\rceil}

\newcommand{\tr}[1]{\mathrm{Tr}\left( #1 \right)}

\newcommand{\realsp}{\mathbb{R}}

\newcommand{\integers}{\mathbb{Z}}
\newcommand{\posintegers}{\mathbb{Z}_{+}}
\newcommand{\expec}{\mathbb{E}}
\newcommand{\prob}{\mathbb{P}}

\newcommand{\hilsp}{\mathcal{H}}
\newcommand{\featuresp}{\mathcal{X}}
\newcommand{\labelsp}{\mathcal{Y}}
\newcommand{\tpr}{\rho}

\def\error{\mathcal{R}}
\def\risk{\mathcal{L}}

\newcommand{\el}[1]{L_{#1}(\rho_X)}
\newcommand{\tensor}{\otimes}


\title{Optimal Rates for Averaged Stochastic Gradient Descent under Neural Tangent Kernel Regime}

\author{Atsushi Nitanda$^{1,2,3,\dag}$, Taiji Suzuki$^{1,2,\star}$
\vspace{2mm}\\
\normalsize{\textit{$^1$Graduate School of Information Science and Technology, The University of Tokyo}} \\
\normalsize{\textit{$^2$Center for Advanced Intelligence Project, RIKEN}} \\
\normalsize{\textit{$^3$PRESTO, Japan Science and Technology Agency}} \\
\small{Email: $^\dag$nitanda@mist.i.u-tokyo.ac.jp, $^\star$taiji@mist.i.u-tokyo.ac.jp}} 

\date{}

\begin{document}

\maketitle

\begin{abstract}
We analyze the convergence of the averaged stochastic gradient descent for overparameterized two-layer neural networks for regression problems.
It was recently found that a neural tangent kernel (NTK) plays an important role in showing the global convergence of gradient-based methods under the NTK regime, where the learning dynamics for overparameterized neural networks can be almost characterized by that for the associated reproducing kernel Hilbert space (RKHS). 
However, there is still room for a convergence rate analysis in the NTK regime.
In this study, we show that the averaged stochastic gradient descent can achieve the minimax optimal convergence rate, with the global convergence guarantee, by exploiting the complexities of the target function and the RKHS associated with the NTK.
Moreover, we show that the target function specified by the NTK of a ReLU network can be learned at the optimal convergence rate through a smooth approximation of a ReLU network under certain conditions.
\end{abstract}

\section{Introduction}
Recent studies have revealed why a stochastic gradient descent for neural networks converges to a global minimum and why it generalizes well under the overparameterized setting in which the number of parameters is larger than the number of given training examples.
One prominent approach is to map the learning dynamics for neural networks into function spaces and exploit the convexity of the loss functions with respect to the function.
The {\it neural tangent kernel} (NTK) \citep{jacot2018neural} has provided such a connection between the learning process of a neural network and a kernel method in a reproducing kernel Hilbert space (RKHS) associated with an NTK.

The global convergence of the gradient descent was demonstrated in \cite{du2018gradient,allen2019convergence,du2019gradient,allen2019convergence_rnn} through the development of a theory of NTK with the overparameterization.
In these theories, the positivity of the NTK on the given training examples plays a crucial role in exploiting the property of the NTK. 
Specifically, the positivity of the Gram-matrix of the NTK leads to a rapid decay of the training loss, and thus the learning dynamics can be localized around the initial point of a neural network with the overparameterization, resulting in the equivalence between two learning dynamics for neural networks and kernel methods with the NTK through a linear approximation of neural networks.
Moreover, \cite{arora2019fine} provided a generalization bound of $O(T^{-1/2})$, where $T$ is the number of training examples, on a gradient descent under the positivity assumption of the NTK.
These studies provided the first steps in understanding the role of the NTK. 

However, the eigenvalues of the NTK converge to zero as the number of examples increases, as shown in \cite{su2019learning} (also see Figure \ref{fig:ntk_eigen}), resulting in the degeneration of the NTK.
This phenomenon indicates that the convergence rates in previous studies in terms of generalization are generally slower than $O(T^{-1/2})$ owing to the dependence on the minimum eigenvalue.
Moreover, \cite{bietti2019inductive,ronen2019convergence,cao2019towards} also supported this observation by providing a precise estimation of the decay of the eigenvalues, 
and \cite{ronen2019convergence,cao2019towards} proved the {\it spectral bias} \citep{rahamanspectral2019} for a neural network, where lower frequencies are learned first using a gradient descent. 

By contrast, several studies showed faster convergence rates of the (averaged) stochastic gradient descent in the RKHS in terms of the generalization \citep{cesa2004generalization,smale2006online,ying2006online,neu2018iterate,lin2020optimal}.
In particular, by extending the results in a finite-dimensional case \citep{bach2013non}, \cite{dieuleveut2016nonparametric,dieuleveut2017harder} showed convergence rates of $O(T^{\frac{-2r\beta}{2r \beta +1}})$ depending on the complexity $r\in [1/2,1]$ of the target functions and the decay rate $\beta > 1 $ of the eigenvalues of the kernel (a.k.a. the complexity of the hypothesis space).
In addition, extensions to the random feature settings \citep{rahimi2007random,rudi2017generalization,carratino2018learning}, to the multi-pass variant \citep{pillaud2018statistical}, and to the tail-averaging and mini-batching variant \citep{mucke2019beating} have been developed.

\vspace{-3mm}
\paragraph{Motivation.} The convergence rate of $O(T^{\frac{-2r\beta}{2r \beta +1}})$ is always faster than $O(T^{-1/2})$ and is known as the minimax optimal rate \citep{caponnetto2007optimal,blanchard2018optimal}.
Hence, a gap exists between the theories regarding NTK and kernel methods.
In other words, there is still room for an investigation into a stochastic gradient descent due to a lack of specification of the complexities of the target function and the hypothesis space.
That is, to obtain faster convergence rates, we should specify the eigenspaces of the NTK that mainly contain the target function (i.e., the complexity of the target function), 
and specify the decay rates of the eigenvalues of the NTK (i.e., the complexity of the hypothesis space), as studied in kernel methods \citep{caponnetto2007optimal,steinwart2009optimal,dieuleveut2016nonparametric}.
In summary, the fundamental question in this study is 
\begin{center}
{\it Can stochastic gradient descent for overparameterized neural networks achieve the optimal rate in terms of the generalization by exploiting the complexities of the target function and hypothesis space?}
\end{center}
In this study, we answer this question in the affirmative, thereby bridging the gap between the theories of overparameterized neural networks and kernel methods.

\begin{figure}[ht]
\begin{center}
  \includegraphics[angle=0,width=100mm]{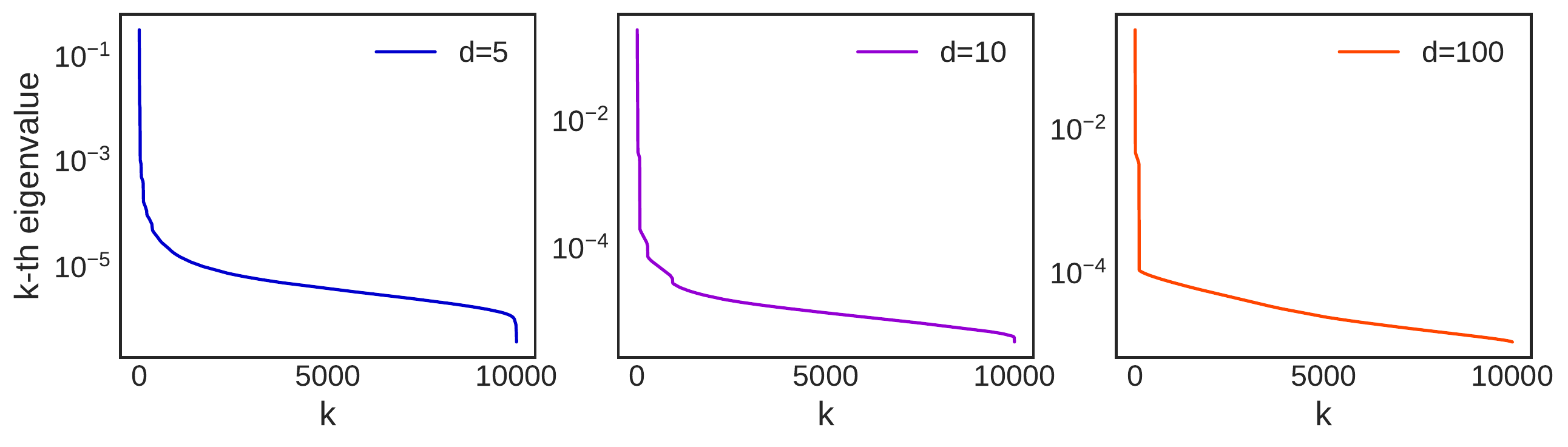}
\vspace{-2mm}
\caption{An estimation of the eigenvalues of $\Sigma_\infty$ using two-layer ReLU networks with a width of $M=2\times 10^4$. 
The number of uniformly randomly generated samples on the unit sphere is $n=10^4$ and the dimensionality of the input space is $d\in \{5,10,100\}$.}  \label{fig:ntk_eigen}
\end{center}
\vspace{-3mm}
\end{figure}

\vspace{-1mm}
\subsection{Contributions} 
\vspace{-2mm}
The connection between neural networks and kernel methods is being understood via the NTK, but it is still unknown whether the optimal convergence rate faster than $O(T^{-1/2})$ is achievable by a certain algorithm for neural networks.
This is the first paper to overcome technical challenges of achieving the optimal convergence rate under the NTK regime.
We obtain {\it the minimax optimal convergence rates} (Corollary \ref{cor:fast_rate}), inherited from the learning dynamics in an RKHS, for an averaged stochastic gradient descent for neural networks.
That is, we show that smooth target functions efficiently specified by the NTK are learned rapidly at faster convergence rates than $O(1/\sqrt{T})$.
Moreover, we obtain an explicit optimal convergence rate of $O\left(T^{\frac{-2rd}{2rd + d - 1}}\right)$ for a smooth approximation of the ReLU network (Corollary \ref{cor:fast_rate_smooth_activation}), where $d$ is the dimensionality of the data space and $r$ is the complexity of the target function specified by the NTK of the ReLU network.

\vspace{-1mm}
\subsection{Technical Challenge}
\vspace{-2mm}
The key to showing a global convergence (Theorem \ref{thm:main_theorem_1}) is making the connection between kernel methods and neural networks in some sense.
Although this sort of analysis has been developed in several studies \citep{du2018gradient,arora2019fine,weinan2019comparative,arora2019exact,lee2019wide,lee2020generalized}, we would like to emphasize that our results cannot be obtained by direct application of their results.
A naive idea is to simply combine their results with the convergence analysis of the stochastic gradient descent for kernel methods, but it does not work.
The main reason is that we need the $L_2$-bound weighted by a true data distribution on the gap between dynamics of stochastic gradient descent for neural networks and kernel methods if we try to derive a convergence rate of population risks for neural networks from that for kernel methods. 
However, such a bound is not provided in related studies.
Indeed, to the best of our knowledge, all related studies make this kind of connection regarding the gap on training dataset or sample-wise high probability bound \citep{lee2019wide,arora2019exact}.
That is, a statement ``for every input data $x$ with high probability $| g^{(t)}_{\mathrm{nn}}(x) - g^{(t)}_{\mathrm{ntk}}(x)| < \epsilon$'' cannot yield
a desired statement ``$\| g^{(t)}_{\mathrm{nn}} - g^{(t)}_{\mathrm{ntk}}\|_{L_2(\rho_X)} < \epsilon$'' where $g^{(t)}_{\mathrm{nn}}$ and $g^{(t)}_{\mathrm{ntk}}$ are $t$-th iterate of gradient descent for a neural network and corresponding iterate described by NTK, and $\|\cdot\|_{L_2(\rho_X)}$ is the $L_2$-norm weighted by a marginal data distribution $\rho_X$ over the input space.
Moreover, we note that we cannot utilize the positivity of the Gram-matrix of NTK which plays a crucial role in related studies because we consider the population risk with respect to $\|\cdot\|_{L_2(\rho_X)}$ rather than the empirical risk.

To overcome these difficulties we develop a different strategy of the proof. 
First, we make a bound on the gap between two dynamics of the averaged stochastic gradient descent for a two-layer neural network and its NTK with width $M$ (Proposition \ref{prop:equiv_asgd}), and obtain a generalization bound for this intermediate NTK (Theorem \ref{thm:asgd_in_approx_ntk} in Appendix).
Second, we remove the dependence on the width of $M$ from the intermediate bound. 
These steps are not obvious because we need a detailed investigation to handle the misspecification of the target function by an intermediate NTK. 
Based on detailed analyses, we obtain a faster and precise bound than those in previous results \citep{arora2019fine}.

The following is an informal version of Proposition \ref{prop:equiv_asgd} providing {\it a new connection} between a two-layer neural networks and corresponding NTK with width $M$.
\begin{proposition}[Informal]\label{prop:equiv_asgd_informal}
Under appropriate conditions we simultaneously run averaged stochastic gradient descent for a neural network with width of $M$ and for its NTK.
Assume they share the same hyper-parameters and examples to compute stochastic gradients.
Then, for arbitrary number of iterations $T\in \posintegers$ and $\epsilon > 0$, there exists $M\in \posintegers$ depending only on $T$ and $\epsilon$ such that $\forall t \leq T$,
\begin{equation*}
\| \overline{g}^{(t)}_{\mathrm{nn}} - \overline{g}^{(t)}_{\mathrm{ntk}}\|_{L_{\infty}(\rho_X)} \leq \epsilon,
\end{equation*}
where $\overline{g}^{(t)}_{\mathrm{nn}}$ and $\overline{g}^{(t)}_{\mathrm{ntk}}$ are iterates obtained by averaged stochastic gradient descent.
\end{proposition}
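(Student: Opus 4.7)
The plan is to prove the closeness by induction on $t$, running both averaged SGD dynamics on the same sample stream so that, conditional on the data, the only gap comes from the difference between the true network and its linearization at initialization. Throughout, I rely on three ingredients: (i) a priori radial bounds on the parameter trajectory, (ii) smoothness / near-constancy of the tangent features in the overparameterized regime, and (iii) a uniform-in-$x$ upper bound on $\nabla_\theta f(x;\theta^{(0)})$ over the support of $\rho_X$.

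First I would establish that, under the assumed step size and initialization scaling, both parameter sequences $\theta_{\mathrm{nn}}^{(t)}$ and $\theta_{\mathrm{ntk}}^{(t)}$ stay within a ball of radius $R_T$ around $\theta^{(0)}$ for every $t \le T$, where $R_T$ depends on $T$ and on the problem constants but not on $M$. For the NTK dynamics this is immediate from the convex-in-function structure and the bounded tangent features; for the NN dynamics it follows by a bootstrap argument: as long as the NN iterate is close to the NTK iterate in function value, the loss gradient is bounded, which in turn controls the parameter drift.

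Next I would use the $1/\sqrt{M}$ scaling of the two-layer network to show that $\nabla_\theta f(x;\theta)$ is Lipschitz in $\theta$ with constant $O(1/\sqrt{M})$ uniformly over $x$ in the (bounded) support of $\rho_X$. Hence for any $\theta$ in the $R_T$-ball one has $\sup_x \|\nabla_\theta f(x;\theta) - \nabla_\theta f(x;\theta^{(0)})\| \le R_T/\sqrt{M}$. Plugging this into the one-step SGD update, the difference between the NN update and the NTK update splits into a Jacobian-gap term bounded by $R_T/\sqrt{M}$ and a loss-gradient-gap term controlled by the previous $L_\infty$ function gap via smoothness of the loss. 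Summing the resulting recursion gives $\|\theta_{\mathrm{nn}}^{(t)} - \theta_{\mathrm{ntk}}^{(t)}\| \le C(T)/\sqrt{M}$ and, after multiplying by the uniform bound on $\|\nabla_\theta f(x;\theta^{(0)})\|$, the corresponding $L_\infty$ bound $\sup_x |g_{\mathrm{nn}}^{(t)}(x) - g_{\mathrm{ntk}}^{(t)}(x)| \le C'(T)/\sqrt{M}$. Since averaging is a convex combination, the $L_\infty$ bound transfers verbatim to $\overline g^{(t)}$, and choosing $M$ with $C'(T)/\sqrt{M} \le \epsilon$ finishes the proof.

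The main obstacle is keeping the gap in the strong $L_\infty(\rho_X)$ norm rather than only on the training sample or in high-probability pointwise form, which is exactly the issue the authors flag as missing from prior NTK work. This forces the Jacobian-stability argument in Step~2 to be uniform over $x \in \operatorname{supp}(\rho_X)$, which in turn requires the activation to be smooth enough that the random tangent features concentrate in the sup-norm sense (so the ReLU case must be handled through a smooth surrogate, as the paper later does). A secondary technical nuisance is controlling the dependence of $C(T)$ on $T$: a crude Gronwall expansion gives an exponential factor, whereas a careful use of the non-expansiveness of the averaged SGD operator in the RKHS is needed to keep the growth polynomial, which is the regime in which the downstream choice of $M = M(T,\epsilon)$ remains quantitatively useful.
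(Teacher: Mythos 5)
Your proposal follows essentially the same route as the paper's proof of Proposition~\ref{prop:equiv_asgd}: bound the per-neuron parameter drift by $O(1/\sqrt M)$ via induction so that the overall trajectory radius is independent of $M$, use the $O(1/\sqrt M)$ Hessian/Taylor smoothness of the model to control the linearization error uniformly over $\supp(\rho_X)$, and unroll the one-step gap recursion to obtain an $L_\infty(\rho_X)$ bound that transfers to the averaged iterates because the linearized model is affine in $\Theta$. The core argument is sound and matches the paper's.

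The one point to correct is the final paragraph. You assert that a careful use of non-expansiveness of the averaged SGD operator in the RKHS keeps the $T$-dependence of the gap constant polynomial, and suggest that this is needed for the choice of $M(T,\epsilon)$ to be quantitatively useful. The paper neither achieves nor requires polynomial growth: in the $L_\infty(\rho_X)$ recursion the per-step factor is $1-\eta\lambda+12\eta \ge 1$, so the final bound carries a factor on the order of $(1+12\eta)^T$, and the authors explicitly acknowledge the resulting exponential dependence of $M$ on $T$. Non-expansiveness does hold for the reference ASGD in the $\hilsp_M$-norm, but the recursion that must be closed here is in $L_\infty(\rho_X)$, because the per-step error injected by the linearization is a uniform bound over $\supp(\rho_X)$, and the relevant update operator is not a contraction in $L_\infty$. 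Since the proposition only requires that some $M=M(T,\epsilon)$ exist, the exponential factor is harmless; the polynomial-control claim should simply be dropped unless you can substantiate it by a genuinely different norm-tracking argument.
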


This proposition is the key because it connects two learning dynamics for a neural network and its NTK through overparameterization without the positivity of the NTK.
Instead of the positivity, this proposition says that overparameterization increases the time stayed in the NTK regime where the learning dynamics for neural networks can be characterized by the NTK.
As a result, the averaged stochastic gradient descent for the overparameterized two-layer neural networks can fully inherit preferable properties from learning dynamics in the NTK as long as the network width is sufficiently large. See Appendix \ref{sec:proof_sketch} for detail.

\vspace{-1mm}
\subsection{Additional Related Work}
\vspace{-2mm}

Besides the abovementioned studies, there are several works \citep{chizat2018note,wu2019global,zou2019improved} that have shown the global convergence of (stochastic) gradient descent for overparameterized neural networks essentially relying on the positivity condition of NTK.
Moreover, faster convergence rates of the second-order methods such as the natural gradient descent and Gauss-Newton method have been demonstrated \citep{zhang2019fast,cai2019gram} in the similar setting, and the further improvement of Gauss-Newton method with respect to the cost per iteration has been conducted in \cite{brand2020training}.

There have been several attempts to improve the overparameterization size in the NTK theory.
For the regression problem, \cite{song2019quadratic} has succeeded in reducing the network width required in \cite{du2018gradient} by utilizing matrix Chernoff bound.
For the classification problem, the positivity condition can be relaxed to a separability condition using another reference model \citep{cao2019generalization,cao2019generalization_b,nitanda2019gradient,ji2019polylogarithmic}, resulting in mild overparameterization and generalization bounds of $O(T^{-1/2})$ or $O(T^{-1/4})$ on classification errors.

For an averaged stochastic gradient descent on classification problems in RKHSs, linear convergence rates of the expected classification errors have been demonstrated in \cite{pillaud2018exponential,nitanda2019stochastic}. 
Although our study focuses on regression problems, we describe how to combine their results with our theory in the Appendix.

The mean field regime \citep{nitanda2017stochastic,mei2018mean,chizat2018global} that is a different limit of neural networks from the NTK is also important for the global convergence analysis of the gradient descent.
In the mean field regime, the learning dynamics follows the Wasserstein gradient flow which enables us to establish convergence analysis in the probability space.

Moreover, several studies \citep{allen2019can,bai2019beyond,ghorbani2019limitations,allen2020backward,li2020learning,suzuki2020generalization} attempt to show the superiority of neural networks over kernel methods including the NTK.
Although it is also very important to study the conditions beyond the NTK regime, they do not affect our contribution and vice versa.
Indeed, which method is better depends on the assumption on the target function and data distribution, 
so it is important to investigate the optimal convergence rate and optimal method in each regime.
As shown in our study, the averaged stochastic gradient descent for learning neural network achieves the optimal convergence rate if the target function is included in RKHS associated with the NTK with the small norm. 
It means there are no methods that outperform the averaged stochastic gradient descent under this setting.

\vspace{-1mm}
\section{Preliminary} \label{sec:problem_setup}
\vspace{-1mm}
Let $\featuresp \subset \realsp^d$ and $\labelsp\subset \realsp$ be the measurable feature and label spaces, respectively.
We denote by $\tpr$ a data distribution on $\featuresp \times \labelsp$, by $\tpr_{X}$ the marginal distribution on $X$, and by $\tpr(\cdot| X)$ the conditional distribution on $Y$,
where $(X,Y)\sim\tpr$.
Let $\ell(z,y)$ ($z \in \realsp, y \in \labelsp$) be the squared loss function $\frac{1}{2}(z-y)^2$, and let $g:\featuresp \rightarrow \realsp$ be a hypothesis. 
The expected risk function is defined as follows: 
\begin{equation} \label{eq:expected_risk}
\risk(g) \defeq \expec_{(X,Y)\sim \rho}[\ell(g(X),Y)]. 
\end{equation}
The Bayes rule $g_\rho: \featuresp \rightarrow \realsp$ is a global minimizer of $\risk$ over all measurable functions.

For the least squares regression, the Bayes rule is known to be $g_\rho(X)=\expec_{Y}[Y|X]$ and the excess risk of a hypothesis $g$ (which is the difference between the expected risk of $g$ and the expected risk of the Bayes rule $g_\rho$) is expressed as a squared $L_2(\rho_X)$-distance between $g$ and $g_\rho$ (for details, see \cite{cucker2002mathematical}) up to a constant:
\begin{equation*}
\risk(g) - \risk(g_\rho) = \frac{1}{2}\| g - g_\rho\|_{L_2(\rho_X)}^2,
\end{equation*}
where $\|\cdot\|_{L_2(\rho_X)}$ is $L_2$-norm weighted by $\rho_X$ defined as $\|g\|_{L_2(\rho_X)} \defeq \left(\int g^2(X) \mathrm{d}\rho_X(X) \right)^{1/2}$ $(g \in L_2(\rho_X))$.
Hence, the goal of the regression problem is to approximate $g_\rho$ in terms of the $L_2(\rho_X)$-distance in a given hypothesis class. 

\vspace{-2mm}
\paragraph{Two-layer neural networks. }
The hypothesis class considered in this study is the set of two-layer neural networks, which is formalized as follows.
Let $M \in \posintegers$ be the network width (number of hidden nodes).
Let $a = (a_1,\ldots, a_M)^\top \in \realsp^M$ ($a_r \in \realsp$) be the parameters of the output layer, 
$B = (b_1,\ldots,b_M) \in \realsp^{d\times M}$ ($b_r \in \realsp^d$) be the parameters of the input layer, 
and $c = (c_1,\ldots, c_M)^\top \in \realsp^M$ ($c_r \in \realsp$) be the bias parameters.
We denote by $\Theta$ the collection of all parameters $(a,B,c)$, 
and consider two-layer neural networks: 
\begin{equation}\label{eq:2nn}
 g_\Theta(x) = \frac{1}{\sqrt{M}} \sum_{r=1}^M a_r \sigma(b_r^\top x + \gamma c_r),   
\end{equation}
where $\sigma: \realsp \rightarrow \realsp$ is an activation function and $\gamma>0$ is a scale of the bias terms.
\vspace{-2mm}
\paragraph{Symmetric initialization.}
We adopt symmetric initialization for the parameters $\Theta$.
Let $a^{(0)}=(a_1^{(0)},\ldots,a_M^{(0)})^\top$, $B^{(0)}=(b_1^{(0)},\ldots,b_M^{(0)})$, and $c^{(0)}=(c_1^{(0)},\ldots, c_M^{(0)})^\top$ denote the initial values for $a$, $B$, and $c$, respectively.
Assume that the number of hidden units $M \in \posintegers$ is even. 
The parameters for the output layer are initialized as $a_r^{(0)}=R$ for $r \in \{1,\ldots,\frac{M}{2}\}$ and $a_r^{(0)}=-R$ for $r \in \{\frac{M}{2}+1,\ldots, M\}$, where $R>0$ is a positive constant. 
Let $\mu_0$ be a uniform distribution on the sphere $\mathbb{S}^{d-1}=\{ b \in \realsp^d \mid \|b\|_2=1 \}\subset \realsp^{d}$ used to initialize the parameters for the input layer. 
The parameters for the input layer are initialized as $b_r^{(0)} = b_{r+\frac{M}{2}}^{(0)}$ for $r \in \{1,\ldots,\frac{M}{2}\}$, 
where $(b_r^{(0)})_{r=1}^{\frac{M}{2}}$ are independently drawn from the distribution $\mu_0$. 
The bias parameters are initialized as $c_r^{(0)} = 0$ for $r \in \{1,\ldots,M\}$.
The aim of the symmetric initialization is to make an initial function $g_{\Theta^{(0)}}=0$, where $\Theta^{(0)}=(a^{(0)},B^{(0)}, c^{(0)})$.
This is just for theoretical simplicity. Indeed, we can relax the symmetric initialization by considering an additional error stemming from the nonzero initialization in the function space.
\vspace{-2mm}
\paragraph{Regularized expected risk minimization.}
Instead of minimizing the expected risk (\ref{eq:expected_risk}) itself, we consider the minimization problem of the regularized expected risk around the initial values:
\begin{equation} \label{eq:expected_risk_minimization}
 \min_\Theta \left\{ \risk(g_\Theta) + \frac{\lambda}{2}\left(\|a-a^{(0)}\|_2^2 + \|B-B^{(0)}\|_F^2 + \|c-c^{(0)}\|_2^2\right) \right\}. 
\end{equation}
where the last term is the $L_2$-regularization at an initial point with a regularization parameter $\lambda > 0$.
This regularization forces iterations obtained by optimization algorithms to stay close to the initial value, which enables us to utilize the better convergence property of regularized kernel methods. 
\vspace{-2mm}
\paragraph{Averaged stochastic gradient descent.}
Stochastic gradient descent is the most popular method for solving large-scale machine learning problems, and its averaged variant is also frequently used to stabilize and accelerate the convergence.
In this study, we analyze the generalization ability of an averaged stochastic gradient descent. 
The update rule is presented in Algorithm \ref{alg:asgd}.
Let $\Theta^{(t)}=(a^{(t)},B^{(t)},c^{(t)})$ denote the collection of $t$-th iterates of parameters $a \in \realsp^M$, $B \in \realsp^{d\times M}$, and $c \in \realsp^M$.
At $t$-th iterate, stochastic gradient descent using a learning rate $\eta_t$ for the problem (\ref{eq:expected_risk_minimization}) with respect to $a, B, c$ is performed on lines 4--6 for a randomly sampled example $(x_t,y_t)\sim \rho$.
These updates can be rewritten in an element-wise fashion as follows. For $r \in \{1,\ldots,M\}$,
\begin{align*}
a_r^{(t+1)} - a_r^{(0)} &= (1-\eta_t \lambda)(a_r^{(t)} - a_r^{(0)}) - \eta_t M^{-1/2} ( g_{\Theta^{(t)}}(x_t) - y_t ) \sigma(b_r^{(t)\top}x_t + \gamma c_r^{(t)}), \\
b_r^{(t+1)} - b_r^{(0)} &= (1-\eta_t \lambda)(b_r^{(t)} - b_r^{(0)}) - \eta_t M^{-1/2} ( g_{\Theta^{(t)}}(x_t) - y_t )a_r^{(t)}\sigma'(b_r^{(t)\top}x_t + \gamma c_r^{(t)})x_t, \\
c_r^{(t+1)} - c_r^{(0)} &= (1-\eta_t \lambda)(c_r^{(t)} - c_r^{(0)}) - \eta_t M^{-1/2} ( g_{\Theta^{(t)}}(x_t) - y_t ) a_r^{(t)}\gamma \sigma'(b_r^{(t)\top}x_t + \gamma c_r^{(t)}),
\end{align*}
where $a^{(t)}=(a_1^{(t)},\ldots,a_M^{(t)})^\top$, $B^{(t)}=(b_1^{(t)},\ldots,b_M^{(t)})$, and $c^{(t)}=(c_1^{(t)},\ldots, c_M^{(t)})^\top$.
Finally, a weighted average using weights $\alpha_t$ of the history of parameters is computed on line 9.
In our theory, we consider the constant learning rate $\eta_t=\eta$ and uniform averaging $\alpha_t = 1/(T+1)$.

\begin{algorithm}[ht]
  \caption{Averaged Stochastic Gradient Descent}
  \label{alg:asgd}
\begin{algorithmic}[1]
  \STATE {\bfseries Input:}
  number of iterations $T$,
  regularization parameter $\lambda$,
  learning rates $(\eta_t)_{t=0}^{T-1}$,
  averaging weights $(\alpha_t)_{t=0}^{T}$,
  initial values $\Theta^{(0)} = (a^{(0)}, B^{(0)}, c^{(0)})$ 
\\
\vspace{1mm}
   \FOR{$t=0$ {\bfseries to} $T-1$}
   \STATE Randomly draw a sample $(x_t,y_t) \sim \tpr$\\
   \STATE $a^{(t+1)} \ \leftarrow a^{(t)} - \eta_t \partial_a \ell(g_{\Theta^{(t)}}(x_t),y_t) - \eta_t \lambda (a^{(t)} - a^{(0)})$ \\
   \STATE $B^{(t+1)} \leftarrow B^{(t)} - \eta_t \partial_B \ell(g_{\Theta^{(t)}}(x_t),y_t) - \eta_t \lambda (B^{(t)} - B^{(0)})$ \\
   \STATE $c^{(t+1)} \ \leftarrow c^{(t)} - \eta_t \partial_c \ell(g_{\Theta^{(t)}}(x_t),y_t) - \eta_t \lambda (c^{(t)} - c^{(0)}) $ \\
   \STATE $\Theta^{(t+1)}\leftarrow (a^{(t+1)}, B^{(t+1)}, c^{(t+1)})$ \\
   \ENDFOR
   \STATE $\overline{\Theta}^{(T)}=(\sum_{t=0}^{T} \alpha_t a^{(t)},\sum_{t=0}^{T} \alpha_t B^{(t)},\sum_{t=0}^{T} \alpha_t c^{(t)})$\\
   \STATE Return $g_{\overline{\Theta}^{(T)}}$
\end{algorithmic}
\end{algorithm}
\vspace{-2mm}
\paragraph{Integral and Covariance Operators.} 
The integral and covariance operators associated with the kernels, which are the limit of the Gram-matrix as the number of examples goes to infinity,  play a crucial role in determining the learning speed. 
For a given Hilbert space $\hilsp$, we denote by $\tensor_\hilsp$ the tensor product on $\hilsp$, that is, $\forall (f, g) \in \hilsp^2$, $f \tensor_\hilsp g$ defines a linear operator; $h \in \hilsp \mapsto (f\tensor_\hilsp g)h = \pd<f,h>_\hilsp g \in \hilsp$.
Note that $f\tensor_\hilsp g$ naturally induces a bilinear function: $(h,h') \in \hilsp \times \hilsp \mapsto \pd<(f\tensor_\hilsp g) h,h'>_\hilsp = \pd<f,h>_\hilsp \pd<g,h'>_\hilsp$.
When $\hilsp$ is a reproducing kernel Hilbert space (RKHS) associated with a bounded kernel $k: \featuresp \times \featuresp \rightarrow \realsp$, the {\it covariance operator} $\Sigma: \hilsp \mapsto \hilsp$ is defined as follows: 
Set $K_X \defeq k(X,\cdot)$ and
\[ \Sigma=\expec_{X\sim \rho_X}[ K_X \tensor_\hilsp K_X]. \]
Note that the covariance operator is a restriction of the {\it integral operator} on $\el{2}$:
\[ f \in \el{2} \longmapsto \Sigma f = \int_\featuresp f(X)K_X \mathrm{d}\rho_X \in \el{2}. \]
We use the same symbol as above for convenience with a slight abuse of notation.
Because $\Sigma$ is a compact self-adjoint operator on $\el{2}$, $\Sigma$ has the following eigendecomposition:
$\Sigma f= \sum_{i=1}^\infty \lambda_i \pd<f,\phi_i>_{\el{2}} \phi_i$ for $f \in \el{2}$, 
where $\{(\lambda_i,\phi_i)\}_{i=1}^\infty$ is a pair of eigenvalues and orthogonal eigenfunctions in $\el{2}$.
For $s \in \realsp$, the power $\Sigma^s$ is defined as $\Sigma^s f= \sum_{i=1}^\infty \lambda_i^s \pd<f,\phi_i>_{\el{2}} \phi_i$.

\section{Main Results: Minimax Optimal Convergence Rates} \label{sec:main_results}
In this section, we present the main results regarding the fast convergence rates of the averaged stochastic gradient descent under a certain condition on the NTK and target function $g_\rho$.
\vspace{-2mm}
\paragraph{Neural tangent kernel.} The NTK is a recently developed kernel function and has been shown to be extremely useful in demonstrating the global convergence of the gradient descent method for neural networks (cf., \cite{jacot2018neural,chizat2018note,du2018gradient,allen2019convergence,allen2019convergence_rnn,arora2019fine}).
The NTK in our setting is defined as follows: $\forall x, \forall x' \in \featuresp$,
\begin{align}
\hspace{-2mm} k_{\infty}(x,x') 
\defeq \expec_{b^{(0)}}[ \sigma(b^{(0)\top}x)\sigma(b^{(0)\top}x') ] 
+R^2(x^\top x' + \gamma^2)\expec_{b^{(0)}}[ \sigma'(b^{(0)\top}x)\sigma'(b^{(0)\top}x') ],  \label{eq:ntk}
\end{align}
where the expectation is taken with respect to $b^{(0)}\sim \mu_0$.
The NTK is the key to the global convergence of a neural network because it makes a connection between the (averaged) stochastic gradient descent for a neural network and the RKHS associated with $k_\infty$ (see Proposition \ref{prop:equiv_asgd}). 
Although this type of connection has been shown in previous studies \citep{arora2019exact,weinan2019comparative,lee2019wide,lee2020generalized}, note that their results are inapplicable to our theory because we consider the population risk.
Indeed, our study is the first to establish this connection for an (averaged) stochastic gradient descent in terms of the uniform distance on the support of the data distribution, enabling us to obtain faster convergence rates.
We note that an NTK $k_\infty$ is the sum of two NTKs, that is, the first and second terms in (\ref{eq:ntk}) are NTKs for the output and input layers with bias, respectively. 

\subsection{Global Convergence Analysis}
Let $\hilsp_{\infty}$ be an RKHS associated with NTK $k_\infty$, and let $\Sigma_\infty$ be the corresponding integral operator. 
Let $\{\lambda_i\}_{i=1}^\infty$ denote the eigenvalues of $\Sigma_\infty$ sorted in decreasing order: $\lambda_1 \geq \lambda_2 \geq \cdots$.
\begin{assumption} \ 
\begin{description} \label{assump:convergence}
\item{\bf(A1)} There exists $C>0$ such that $\|\sigma''\|_{\infty} \leq C$, $\|\sigma'\|_{\infty} \leq 2$, and $|\sigma(u)| \leq 1+|u|$ for $\forall u \in \realsp$. 
\item{\bf(A2)} $\mathrm{supp}(\rho_X) \subset \{ x \in \realsp^d \mid \|x\|_2 \leq 1 \}$, $ \labelsp \subset [-1,1]$, $R=1$, and $\gamma \in [0,1]$. 
\item{\bf(A3)} There exists $r \in [1/2, 1]$ such that $g_\rho \in \Sigma_\infty^r (\el{2})$, i.e., $\| \Sigma_\infty^{-r}g_\rho \|_{\el{2}} < \infty$.
\item{\bf(A4)} There exists $\beta > 1$ such that $\lambda_i = \Theta(i^{-\beta})$.
\end{description}
\end{assumption}

\paragraph{Remark.}
\begin{itemize} 
\setlength{\leftskip}{-5mm}
\item {\bf(A1):} Typical smooth activation functions, such as sigmoid and tanh functions, 
and smooth approximations of the ReLU, such as swish \citep{ramachandran2017}, which performs as well as or even better than the ReLU, satisfy Assumption {\bf(A1)}. 
This condition is used to relate the two learning dynamics between neural networks and kernel methods (see Proposition \ref{prop:equiv_asgd}). 
\item {\bf(A2):} The boundedness {\bf(A2)} of the feature space and label are often assumed for stochastic optimization and least squares regression for theoretical guarantees (see \cite{steinwart2009optimal}). 
Note that these constants in {\bf(A2)} can be relaxed to arbitrary constants.
\item {\bf(A3):} Assumption {\bf(A3)} measures the complexity of $g_\rho$ because $\Sigma_\infty$ can be considered as a smoothing operator using a kernel $k_\infty$. A larger $r$ indicates a faster decay of the coefficients of expansion of $g_\rho$ based on the eigenfunctions of $\Sigma_\infty$ and smoothens $g_\rho$. In addition, $\Sigma_\infty^r (\el{2})$ shrinks with respect to $r$ and $\Sigma_\infty^{1/2} (\el{2}) = \hilsp_\infty$, resulting in $g_\rho \in \hilsp_\infty$.
This condition is used to control the bias of the estimators through $L_2$-regularization. 
The notation $\Sigma_\infty^{-r}g_\rho$ represents any function $G \in \el{2}$ such that $g_\rho = \Sigma_\infty^r G$. 
\item {\bf(A4):} Assumption {\bf(A4)} controls the complexity of the hypothesis class $\hilsp_\infty$. 
A larger $\beta$ indicates a faster decay of the eigenvalues and makes $\hilsp_\infty$ smaller. This assumption is essentially needed to bound the variance of the estimators efficiently and derive a fast convergence rate. 
Theorem 1 and Corollary 1, 2 hold even though the condition in {\bf(A4)} is relaxed to $\lambda_i = O(i^{-\beta})$ and the lower bound $\lambda_i=\Omega(i^{-\beta})$ is necessary only for making obtained rates minimax optimal.
\end{itemize}

Under these assumptions, we derive the convergence rate of the averaged stochastic gradient descent for an overparameterized two-layer neural network, the proof is provided in the Appendix.

\begin{theorem}\label{thm:main_theorem_1}
Suppose Assumptions {\bf(A1)-(A3)} hold.
Run Algorithm \ref{alg:asgd} with a constant learning rate $\eta$ satisfying $4 (6 + \lambda) \eta \leq 1$.
Then, for any $\epsilon>0$, $\|\Sigma_\infty\|_{\mathrm{op}} \geq \lambda > 0$, $\delta \in (0,1)$, and $T\in \posintegers$,
there exists $M_0 \in \posintegers$ such that for any $M\geq M_0$, the following holds with high probability at least $1-\delta$ over the random choice of features $\Theta^{(0)}$:
\begin{align*}
\expec\big[ \| g_{\overline{\Theta}^{(T)}} - g_{\rho} \|_{\el{2}}^2 \big]
&\leq \epsilon 
+ \alpha \left( \lambda^{2r} \| \Sigma_\infty^{-r}g_\rho\|_{\el{2}}^2 
+\frac{1}{T+1}\| g_\rho \|_{\hilsp_\infty}^2  
+ \frac{1}{\lambda \eta^2(T+1)^2} \| g_\rho \|_{\hilsp_\infty}^2 \right) \\
&+\frac{\alpha}{T+1}\left( 1 + \|g_\rho \|_{\el{2}}^2
+ \|\Sigma_\infty^{-r} g_\rho \|_{\el{2}}^2 \right) \tr{\Sigma_\infty (\Sigma_\infty + \lambda I)^{-1}},
\end{align*}
where $\alpha>0$ is a universal constant and $g_{\overline{\Theta}^{(T)}}$ is an iterate obtained through Algorithm \ref{alg:asgd}.
\end{theorem}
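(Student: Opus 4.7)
The plan is to interpolate between the neural network dynamics and a tractable kernel dynamics through an intermediate finite-width NTK. Let $\overline{g}^{(T)}_{\mathrm{ntk},M}$ denote the averaged iterate obtained by running Algorithm \ref{alg:asgd} directly in the RKHS $\hilsp_M$ associated with the empirical (width-$M$) NTK built from the same random features $\Theta^{(0)}$, fed the same data stream $(x_t,y_t)$ and using identical step size $\eta$ and regularizer $\lambda$. A triangle inequality in $L_2(\rho_X)$ gives
\begin{equation*}
\| g_{\overline{\Theta}^{(T)}} - g_\rho \|_{L_2(\rho_X)}
\leq \| g_{\overline{\Theta}^{(T)}} - \overline{g}^{(T)}_{\mathrm{ntk},M} \|_{L_2(\rho_X)}
+ \| \overline{g}^{(T)}_{\mathrm{ntk},M} - g_\rho \|_{L_2(\rho_X)}.
\end{equation*}
Proposition \ref{prop:equiv_asgd_informal}, applied with tolerance $\sqrt{\epsilon}$, makes the first term bounded by $\sqrt{\epsilon}$ uniformly in $t\leq T$ for all $M\geq M_0(T,\epsilon)$, on an event of probability $\geq 1-\delta$ over $\Theta^{(0)}$. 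Squaring and taking expectation over the sample stream contributes the $\epsilon$ slack in the statement.

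Next I would analyze the kernel-space iterate $\overline{g}^{(T)}_{\mathrm{ntk},M}$ by a standard bias--variance decomposition around the regularized population minimizer $g^\star_{\lambda,M} \defeq (\Sigma_M + \lambda I)^{-1}\Sigma_M g_\rho$, where $\Sigma_M$ is the covariance operator on $\hilsp_M$. The averaged SGD on the $\lambda$-regularized quadratic is a contraction with rate $1-\eta\lambda$, and the usual Pythagorean-type recursion (as in the analyses of kernel SGD cited in the introduction) yields a bound of the form
\begin{align*}
\expec\bigl[\| \overline{g}^{(T)}_{\mathrm{ntk},M} - g_\rho \|_{L_2(\rho_X)}^2 \bigr]
&\lesssim \| g^\star_{\lambda,M} - g_\rho \|_{L_2(\rho_X)}^2
+ \frac{1}{T+1}\| g_\rho \|_{\hilsp_M}^2
+ \frac{1}{\lambda \eta^2 (T+1)^2}\| g_\rho \|_{\hilsp_M}^2 \\
&\quad + \frac{1 + \|g_\rho\|_{L_2(\rho_X)}^2 + \|\Sigma_M^{-r} g_\rho\|_{L_2(\rho_X)}^2}{T+1}\, \tr{\Sigma_M (\Sigma_M+\lambda I)^{-1}},
\end{align*}
where the $\tfrac{1}{T+1}$ term is the averaging variance, the $\tfrac{1}{\lambda\eta^2(T+1)^2}$ term is the deterministic bias of the regularized recursion, and the trace term is the effective dimension bound on the noise. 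This is precisely the intermediate bound alluded to by Theorem \ref{thm:asgd_in_approx_ntk} in the appendix.

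The third step is to transfer the finite-$M$ bound to the target bound written in terms of $\Sigma_\infty$. Because $\supp(\rho_X)$ is compact, a Hoeffding plus covering argument shows that $k_M \to k_\infty$ uniformly on $\supp(\rho_X)^2$ with high probability, which transfers to operator-norm closeness $\Sigma_M \to \Sigma_\infty$ on $L_2(\rho_X)$. Combining this with the source-condition identity
\begin{equation*}
(I - (\Sigma_\infty + \lambda I)^{-1}\Sigma_\infty)\Sigma_\infty^r = \lambda (\Sigma_\infty + \lambda I)^{-1}\Sigma_\infty^r
\end{equation*}
and the spectral inequality $\lambda \|(\Sigma_\infty+\lambda I)^{-1}\Sigma_\infty^r\|_{\mathrm{op}} \leq \lambda^r$ gives $\|g^\star_{\lambda,M} - g_\rho\|_{L_2(\rho_X)}^2 \to \lambda^{2r}\|\Sigma_\infty^{-r}g_\rho\|_{L_2(\rho_X)}^2$ under Assumption (A3). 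The same perturbation yields $\|g_\rho\|_{\hilsp_M}^2 \to \|g_\rho\|_{\hilsp_\infty}^2$ (bounded by $\|\Sigma_\infty^{-1/2}g_\rho\|^2 \leq \|\Sigma_\infty^{-r}g_\rho\|^2$ since $r\geq 1/2$) and $\tr{\Sigma_M(\Sigma_M+\lambda I)^{-1}} \to \tr{\Sigma_\infty(\Sigma_\infty+\lambda I)^{-1}}$. Enlarging $M_0$ if necessary absorbs the residual differences into the $\epsilon$ slack, producing the stated bound up to the universal constant $\alpha$.

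The main obstacle will be Step 1: upgrading Proposition \ref{prop:equiv_asgd_informal} from a pointwise approximation of the iterate to an $L_\infty(\rho_X)$ approximation usable uniformly in $t \leq T$, which the authors single out as the key technical novelty. Because the neural network's stochastic gradient depends on $\sigma'(b_r^{(t)\top}x + \gamma c_r^{(t)})$ with $b_r^{(t)}$ drifting by $O(1/\sqrt{M})$ from initialization, one must run a Grönwall-style argument simultaneously on the parameter drift $\|\Theta^{(t)}-\Theta^{(0)}\|$ and on the function-space gap, showing both remain small across all $T$ iterations once $M$ is large enough. A secondary delicacy is disentangling the high-probability event over $\Theta^{(0)}$ (which controls the NTK approximation) from the expectation over the sample stream (which controls the kernel-SGD analysis); since the event is $\Theta^{(0)}$-measurable, the expectation bound in Step 2 must be stated conditionally on the random feature draw.
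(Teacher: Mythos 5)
The high-level decomposition you propose --- approximate the network's averaged iterate by the random-feature kernel iterate in $L_\infty$, run a bias--variance analysis around the regularized population minimizer $g_{M,\lambda}=(\Sigma_M+\lambda I)^{-1}\Sigma_M g_\rho$, then transfer from $\Sigma_M$ to $\Sigma_\infty$ --- is exactly the paper's strategy (Proposition \ref{prop:equiv_asgd}, Theorem \ref{thm:asgd_in_approx_ntk}, and the decomposition \eqref{eq:gen_error_decomposition} combined with Propositions \ref{prop:high_prob_bound_on_operators}, \ref{prop:ntk_approx_error}, \ref{prop:approx_error}), so the outline is correct.

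There is, however, a genuine technical gap in your Steps~2--3: the intermediate bound you write down involves $\|g_\rho\|_{\hilsp_M}^2$ and $\|\Sigma_M^{-r} g_\rho\|_{L_2(\rho_X)}^2$. These are generically infinite (indeed ill-defined) at finite $M$: the random-feature RKHS $\hilsp_M$ is finite-dimensional, so $g_\rho \in \hilsp_\infty$ will almost surely not lie in $\hilsp_M$, and certainly not in the range of $\Sigma_M^r$. Your Step~2 bound is therefore vacuous, and the Step~3 ``take $M\to\infty$'' transfer cannot rescue it since it is not a convergence of finite quantities. This is exactly the misspecification issue the paper flags as one of its main technical obstacles. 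The fix --- which \emph{is} the content of the paper's proof --- is to state the kernel-SGD bound (Theorem \ref{thm:asgd_in_approx_ntk}) purely in terms of $\|(\Sigma_M+\lambda I)^{-1} g_{M,\lambda}\|_{\el{2}}$, $\|(\Sigma_M+\lambda I)^{-1/2} g_{M,\lambda}\|_{\el{2}}$ (finite because $g_{M,\lambda} \in \hilsp_M$ by construction), and $\|\Sigma_\infty^{-r} g_\rho\|_{\el{2}}$ (finite by (A3), which concerns $\Sigma_\infty$, not $\Sigma_M$). The latter emerges in Lemma \ref{lemma:noise_bound} through a delicate uniform-norm comparison that routes through $g_{\infty,\lambda}$ rather than applying a source condition to $\Sigma_M$; and Proposition \ref{prop:high_prob_bound_on_operators} then transfers the two $\Sigma_M$-dependent operator norms to $\|g_\rho\|_{\hilsp_\infty}^2$ via the high-probability operator inequality $(\Sigma_M + \lambda I)^{-1} \preccurlyeq 2(\Sigma_\infty + \lambda I)^{-1}$, without ever invoking a $\hilsp_M$-norm of $g_\rho$. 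You should re-express your intermediate bound in these terms, and introduce $g_{\infty,\lambda}$ as a second comparison point so the source-condition step is applied only to $\Sigma_\infty$.
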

\paragraph{Remark.} The first term $\epsilon$ and second term $\lambda^{2r} \| \Sigma_\infty^{-r}g_\rho\|_{\el{2}}^2 $ are the approximation error and bias, which can be chosen to be arbitrarily small.
The first term comes from the approximation of the NTK using finite-sized neural networks, and the second term comes from the $L_2$-regularization, which coincides with a bias term in the theory of least squares regression \citep{caponnetto2007optimal}.
The third and fourth terms come from the convergence of the averaged semi-stochastic gradient descent (which is considered in the proof) in terms of the optimization. The appearance of an inverse dependence on $\lambda$ in the fourth term is common because a smaller $\lambda$ indicates a weaker strong convexity, which slows down the convergence speed of the optimization methods \citep{rakhlin2012making}.  
The term $\tr{\Sigma_\infty (\Sigma_\infty + \lambda I)^{-1}}$ is the variance from the stochastic approximation of the gradient, and it is referred to as the {\it degree of freedom} or the {\it effective dimension}, which is known to be unavoidable in kernel regression problems \citep{caponnetto2007optimal,dieuleveut2016nonparametric,rudi2017generalization}.
\vspace{-2mm}
\paragraph{Global convergence in NTK regime.}
This theorem shows the global convergence to the Bayes rule $g_\rho$, which is a minimizer over all measurable maps because the approximation term $\epsilon$ can be arbitrarily small by taking a sufficiently large network width $M$. 
The required value of $M$ has an exponential dependence on $T$; note, however, that reducing $M$ is not the main focus of the present study.
The key technique is to relate two learning dynamics for two-layer neural networks and kernel methods in an RKHS approximating $\hilsp_\infty$ up to a small error. 
Unlike existing studies \citep{du2018gradient,arora2019fine,arora2019exact,weinan2019comparative,lee2019wide,lee2020generalized} showing such connections, we establish this connection in term of the $\el{\infty}$-norm, which is more useful in a generalization analysis.
Moreover, existing studies essentially rely on the strict positivity of the Gram-matrix to localize all iterates around an initial value, which can slow down the convergence rate in terms of the generalization because the convergence of the eigenvalues of the NTK to zero affects the Rademacher complexity.
By contrast, our theory succeeds in demonstrating the global convergence in the NTK regime without the positivity of the NTK. 

\subsection{Optimal Convergence Rate}
We derive the fast convergence rate from Theorem \ref{thm:main_theorem_1} by utilizing Assumption {\bf(A4)}, which defines the complexity of the NTK.
The regularization parameter $\lambda$ mainly controls the trade-off within the generalization bound, that is, a smaller value decreases the bias term but increases the variance term including the degree of freedom.
The degree of freedom $\tr{\Sigma_\infty (\Sigma_\infty + \lambda I)^{-1}}$ can be specified by imposing Assumption {\bf(A4)} because it determines the decay rate of the eigenvalues of $\Sigma_\infty$.
As a result, this trade-off between bias and variance depending on the choice of $\lambda$ becomes clear, and we can determine the optimal value.
Concretely, by setting $\lambda = T^{-\beta/(2r\beta + 1)}$, the sum of the bias and variance terms is minimized, and these terms become asymptotically equivalent.

\begin{corollary}\label{cor:fast_rate}
Suppose Assumptions {\bf(A1)-(A4)} hold.
Run Algorithm \ref{alg:asgd} with the constant learning rate $\eta=O(1)$ satisfying $4 (6 + \lambda) \eta \leq 1$ and $\lambda=T^{-\beta/(2r\beta + 1)}$.
Then, for any $\epsilon>0$, $\delta \in (0,1)$ and $T\in \posintegers$ satisfying $\|\Sigma_\infty\|_{\mathrm{op}} \geq \lambda$,
there exists $M_0 \in \posintegers$ such that for any $M\geq M_0$, the following holds with high probability at least $1-\delta$ over the random choice of random features $\Theta^{(0)}$:
\begin{align*}
\expec\big[ \|g_{\overline{\Theta}^{(T)}} - g_{\rho} \|_{\el{2}}^2 \big] \leq \epsilon 
+ \alpha T^{\frac{-2r\beta}{2r \beta +1}} \left( 1 +\| \Sigma_\infty^{-r}g_\rho\|_{\el{2}}^2 \right),
\end{align*}
where $\alpha > 0$ is a universal constant and $g_{\overline{\Theta}^{(T)}}$ is an iterate obtained by Algorithm \ref{alg:asgd}.
\end{corollary}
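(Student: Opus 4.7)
The plan is to directly apply Theorem \ref{thm:main_theorem_1} with the prescribed choice $\lambda = T^{-\beta/(2r\beta+1)}$ and use Assumption \textbf{(A4)} to turn the abstract degree-of-freedom $\tr{\Sigma_\infty(\Sigma_\infty+\lambda I)^{-1}}$ into an explicit power of $\lambda$. The corollary then reduces to verifying that, with this choice of $\lambda$, every term in the theorem's bound matches the target rate $T^{-2r\beta/(2r\beta+1)}$ — a classical bias-variance balancing exercise.

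First I would estimate the degree of freedom using the eigenvalue decay. Writing
\begin{equation*}
\tr{\Sigma_\infty(\Sigma_\infty + \lambda I)^{-1}} = \sum_{i=1}^{\infty} \frac{\lambda_i}{\lambda_i + \lambda},
\end{equation*}
and splitting the sum at $i^\star \asymp \lambda^{-1/\beta}$, the head is bounded by $i^\star = O(\lambda^{-1/\beta})$ since each summand is at most $1$, while the tail gives $\lambda^{-1}\sum_{i>i^\star} i^{-\beta} = O(\lambda^{-1}(i^\star)^{1-\beta}) = O(\lambda^{-1/\beta})$ using $\beta>1$. Hence $\tr{\Sigma_\infty(\Sigma_\infty+\lambda I)^{-1}} = O(\lambda^{-1/\beta})$, which is the key quantitative consequence of \textbf{(A4)}.

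Next I would substitute $\lambda = T^{-\beta/(2r\beta+1)}$ into the four terms of Theorem \ref{thm:main_theorem_1}. The bias becomes $\lambda^{2r}\|\Sigma_\infty^{-r}g_\rho\|_{\el{2}}^2 = T^{-2r\beta/(2r\beta+1)}\|\Sigma_\infty^{-r}g_\rho\|_{\el{2}}^2$. The variance contribution carrying the degree of freedom is $\frac{1}{T+1}\cdot O(\lambda^{-1/\beta}) = O(T^{-1+1/(2r\beta+1)}) = O(T^{-2r\beta/(2r\beta+1)})$, which is exactly the matching condition driving the choice of $\lambda$. The term $\|g_\rho\|_{\hilsp_\infty}^2/(T+1)$ is of order $T^{-1}$ and therefore lower order, while $\|g_\rho\|_{\hilsp_\infty}^2/(\lambda\eta^2(T+1)^2)$ is of order $T^{\beta/(2r\beta+1)-2}$, which is dominated by $T^{-2r\beta/(2r\beta+1)}$ iff $\beta(1-2r)\leq 2$; this is automatic because $r\geq 1/2$ makes the left-hand side nonpositive. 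The RKHS norm is controlled via $\|g_\rho\|_{\hilsp_\infty} \leq \|\Sigma_\infty\|_{\mathrm{op}}^{r-1/2}\|\Sigma_\infty^{-r}g_\rho\|_{\el{2}}$ (using $r \geq 1/2$ and $\hilsp_\infty = \Sigma_\infty^{1/2}(\el{2})$), so it is absorbed into the $(1 + \|\Sigma_\infty^{-r}g_\rho\|_{\el{2}}^2)$ prefactor already present in the theorem.

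The condition $\|\Sigma_\infty\|_{\mathrm{op}}\geq\lambda$ required by Theorem \ref{thm:main_theorem_1} is automatic for all sufficiently large $T$ since $\lambda\to 0$, and is simply recorded in the hypothesis of the corollary; the width threshold $M_0$ and the residual $\epsilon$ are inherited unchanged from the theorem. I do not foresee a genuine obstacle here: all of the delicate work — establishing the $\el{\infty}$-closeness between the neural network iterates and the NTK iterates, and controlling the resulting regularized kernel dynamics — has already been done in Theorem \ref{thm:main_theorem_1}. The content of the corollary is precisely that Assumption \textbf{(A4)} supplies the quantitative information about the effective dimension needed to recover the minimax rate of \cite{caponnetto2007optimal,dieuleveut2016nonparametric} via routine bias-variance optimization.
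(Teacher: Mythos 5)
Your proof is correct and follows essentially the same route as the paper: apply Theorem \ref{thm:main_theorem_1} with $\lambda = T^{-\beta/(2r\beta+1)}$, invoke the degree-of-freedom bound $\tr{\Sigma_\infty(\Sigma_\infty+\lambda I)^{-1}} = O(\lambda^{-1/\beta})$ from {\bf(A4)}, bound $\|g_\rho\|_{\hilsp_\infty}$ and $\|g_\rho\|_{\el{2}}$ by $O(\|\Sigma_\infty^{-r}g_\rho\|_{\el{2}})$ using $\|\Sigma_\infty\|_{\mathrm{op}}=O(1)$, and check that each term in the theorem is dominated by $T^{-2r\beta/(2r\beta+1)}$. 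The only substantive difference is that you derive the effective-dimension bound from first principles via the head--tail split rather than citing it, which is a harmless (and arguably more self-contained) addition.
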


The resulting convergence rate is $O(T^{\frac{-2r\beta}{2r \beta +1}})$ with respect to $T$ by considering a sufficiently large network width of $M$ such that the error $\epsilon$ stemming from the approximation of NTK can be ignored.
Because $T$ corresponds to the number of examples used to learn a predictor $g_{\overline{\Theta}^{(T)}}$, this convergence rate is simply the generalization error bound for the averaged stochastic gradient descent.
In general, this rate is always faster than $T^{-1/2}$ and is known to be the minimax optimal rate of estimation \citep{caponnetto2007optimal,blanchard2018optimal} in $\hilsp_\infty$ in the following sense.
Let $\mathcal{P}(\beta,r)$ be a data distribution class satisfying Assumptions {\bf(A2)-(A4)}. Then, 
\[ \lim_{\tau \rightarrow 0} \liminf_{T\rightarrow \infty} \inf_{h^{(T)}} \sup_{\rho}
\prob\left[ \| h^{(T)} - g_\rho \|_{\el{2}}^2 > \tau T^\frac{-2r\beta}{2r \beta +1}\right]=1, \]
where $\rho$ is taken in $\mathcal{P}(\beta,r)$ and $h^{(T)}$ is taken over all mappings $(x_t,y_t)_{t=0}^{T-1} \mapsto h^{(T)} \in \hilsp_\infty$.

\subsection{Explicit Optimal Convergence Rate for Smooth Approximation of ReLU}
For smooth activation functions that sufficiently approximate the ReLU, an optimal explicit convergence rate can be derived under the setting in which the target function is specified by NTK with the ReLU, and the data are distributed uniformly on a sphere.
We denote the ReLU activation by $\sigma(u) = \max\{0,u\}$ and a smooth approximation of ReLU by $\sigma^{(s)}$, which converges to ReLU, as $s \rightarrow \infty$ in the following sense.
We make alternative assumptions to {\bf(A1)}, {\bf(A2)}, and {\bf(A3)}:
\begin{assumption} \ 
\begin{description} \label{assump:convergence2}
\item{{\bf(A1')}} $\sigma^{(s)}$ satisfies {\bf(A1)}. $\sigma^{(s)}$ and $\sigma^{(s)'}$ converge pointwise almost surely to $\sigma$ and $\sigma'$ as $s \rightarrow \infty$.
\item{{\bf(A2')}} $\rho_X$ is a uniform distribution on $\{ x \in \realsp^d \mid \|x\|_2 = 1 \}$. $ \labelsp \subset [-1,1]$, $R=1$, and $\gamma \in (0,1]$. 
\item{{\bf(A3')}} The condition {\bf(A3)} is satisfied by the NTK associated with the ReLU activation $\sigma$.
\end{description}
\end{assumption}
{\bf(A1')} and {\bf(A2')} are special cases of {\bf(A1)} and {\bf(A2)}.
There are several activation functions that satisfy this condition, including swish \citep{ramachandran2017}: $\sigma^{(s)}(u) = \frac{u}{1+\exp(-su)}$.
Under these conditions, we can estimate the decay rate of the eigenvalues for the ReLU as $\beta = 1 + \frac{1}{d-1}$, yielding the explicit optimal convergence rate by adapting the proof of Theorem \ref{thm:main_theorem_1} to the current setting.
Note that Algorithm \ref{alg:asgd} is run for a neural network with a smooth approximation $\sigma^{(s)}$ of the ReLU.

\begin{corollary}\label{cor:fast_rate_smooth_activation}
Suppose Assumptions {\bf(A1')}, {\bf(A2')}, and {\bf(A3')} hold.
Run Algorithm \ref{alg:asgd} with the constant learning rate $\eta=O(1)$ satisfying $4 (6 + \lambda) \eta \leq 1$, and $\lambda=T^{-d/(2rd + d-1)}$.
Given any $\epsilon>0$, $\delta \in (0,1)$ and $T\in \posintegers$ satisfying $\|\Sigma_\infty\|_{\mathrm{op}} \geq 2\lambda$, let $s$ be an arbitrary and sufficiently large positive value. 
Then, there exists $M_0 \in \posintegers$ such that for any $M\geq M_0$, the following holds with high probability at least $1-\delta$ over the random choice of random features $\Theta^{(0)}$:
\begin{align*}
\expec\big[ \|g_{\overline{\Theta}^{(T)}} - g_{\rho} \|_{\el{2}}^2 \big] \leq \epsilon 
+ \alpha T^{\frac{-2rd}{2rd + d - 1}} \left( 1 +\| \Sigma_\infty^{-r}g_\rho\|_{\el{2}}^2 \right),
\end{align*}
where $\alpha > 0$ is a universal constant and $g_{\overline{\Theta}^{(T)}}$ is an iterate obtained by Algorithm \ref{alg:asgd}.
\end{corollary}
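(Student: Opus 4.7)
The plan is to reduce Corollary~\ref{cor:fast_rate_smooth_activation} to Corollary~\ref{cor:fast_rate} applied with the smooth NTK $k_\infty^{(s)}$ associated with $\sigma^{(s)}$, and then identify the explicit exponent $\beta = d/(d-1)$ arising from the ReLU NTK on the sphere. Under (A1'), the triple (A1),(A2),(A3) (with the NTK of $\sigma^{(s)}$) is what Algorithm~\ref{alg:asgd} actually sees; the only nontrivial work is to (i) determine the eigenvalue decay rate of $k_\infty^{(s)}$ for $s$ large and (ii) transfer the source condition (A3') stated for the ReLU NTK to the smooth NTK.

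First I would establish the eigenvalue decay. By (A2') the marginal $\rho_X$ is uniform on $\mathbb{S}^{d-1}$ and the ReLU NTK $k_\infty(x,x')$ is a dot-product kernel on the sphere, so $\Sigma_\infty$ is diagonalized by spherical harmonics. Plugging the ReLU into the explicit formula (\ref{eq:ntk}) (using the arc-cosine kernel identities of Cho--Saul), the eigenvalues on the degree-$k$ spherical-harmonic subspace decay like $k^{-d}$, and since the multiplicity of that subspace is $\Theta(k^{d-2})$, sorting in decreasing order yields $\lambda_i = \Theta(i^{-d/(d-1)})$, i.e. (A4) holds for the ReLU NTK with $\beta = d/(d-1)$.

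Next I would transfer both (A3') and this decay to $k_\infty^{(s)}$. By (A1') and dominated convergence (the integrands in (\ref{eq:ntk}) are uniformly bounded on the compact support by the linear-growth and $\|\sigma'\|_\infty$ bounds in (A1)), $k_\infty^{(s)} \to k_\infty$ uniformly on $\mathrm{supp}(\rho_X)^2$. Therefore $\Sigma_\infty^{(s)} \to \Sigma_\infty$ in operator (in fact Hilbert--Schmidt) norm. By standard perturbation theory for compact self-adjoint operators, the individual eigenvalues and spectral projections converge, so for any fixed $\lambda > 0$ and any $s$ sufficiently large one has $\|\Sigma_\infty^{(s)}\|_{\mathrm{op}} \geq \lambda$, the effective-dimension term $\mathrm{Tr}(\Sigma_\infty^{(s)}(\Sigma_\infty^{(s)}+\lambda I)^{-1})$ is within a constant factor of its ReLU counterpart (hence $O(\lambda^{-1/\beta})$ with $\beta=d/(d-1)$), and $\|(\Sigma_\infty^{(s)})^{-r} g_\rho\|_{L_2(\rho_X)} \leq 2 \|\Sigma_\infty^{-r} g_\rho\|_{L_2(\rho_X)}$. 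In short, (A3) and a quantitative form of (A4) hold for $k_\infty^{(s)}$ with the same $r$ and $\beta$, up to constants that are absorbed into the universal $\alpha$.

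Finally I would invoke Corollary~\ref{cor:fast_rate} with $\lambda = T^{-\beta/(2r\beta+1)}$ and $\beta = d/(d-1)$, for which a direct computation gives
\begin{equation*}
\frac{\beta}{2r\beta+1} \;=\; \frac{d/(d-1)}{2rd/(d-1)+1} \;=\; \frac{d}{2rd+d-1},
\qquad
\frac{2r\beta}{2r\beta+1} \;=\; \frac{2rd}{2rd+d-1},
\end{equation*}
matching the stated choice of $\lambda$ and the claimed rate. The approximation-error term $\epsilon$ is inherited verbatim from Theorem~\ref{thm:main_theorem_1} by choosing $M \geq M_0$ sufficiently large. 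The main obstacle I expect is step (ii): making the perturbation between $\Sigma_\infty^{(s)}$ and $\Sigma_\infty$ quantitative enough to preserve (A3) with the \emph{same} exponent $r$ (since $\Sigma_\infty^r(L_2(\rho_X))$ is not continuous in the operator in general); this is why the corollary only asserts the rate for $s$ sufficiently large depending on $T, \epsilon, \delta$, rather than uniformly in $s$.
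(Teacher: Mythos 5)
Your step (i) (the eigenvalue computation $\beta = d/(d-1)$ via spherical harmonics, and the exponent arithmetic $\beta/(2r\beta+1) = d/(2rd+d-1)$) matches the paper. The gap is exactly where you suspect it, in step (ii), and it is not a technicality that large $s$ repairs: the source condition {\bf(A3)} does \emph{not} transfer to the smooth NTK, even for arbitrarily large $s$. In this setting $\Sigma_\infty$ and $\Sigma_\infty^{(s)}$ are simultaneously diagonalized by spherical harmonics, so $\|(\Sigma_\infty^{(s)})^{-r}g_\rho\|_{\el{2}}^2 = \sum_i (\lambda_i^{(s)})^{-2r}\pd<g_\rho,\phi_i>^2$, and controlling this by $\|\Sigma_\infty^{-r}g_\rho\|_{\el{2}}^2$ requires $\sup_i \lambda_i/\lambda_i^{(s)} < \infty$. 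For any \emph{fixed} $s$ the smooth kernel (e.g.\ swish) is analytic, its tail eigenvalues decay much faster than the ReLU NTK's polynomial rate $i^{-d/(d-1)}$, so this supremum is infinite and $g_\rho$ generically lies outside $(\Sigma_\infty^{(s)})^r(\el{2})$ altogether. Pointwise convergence $\lambda_i^{(s)}\to\lambda_i$ gives no uniform control over the ratio in the tail, which is precisely why fractional powers $\Sigma^{-r}$ are not stable under operator-norm perturbations. The paper flags this explicitly: Proposition \ref{prop:ntk_approx_error} ``is not applicable \ldots\ because this proposition require the specification of the target function by $k_\infty^{(s)}$ which does not hold in general.'' Consequently, Corollary \ref{cor:fast_rate} cannot be invoked as a black box with the smooth kernel, and your reduction does not go through.

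The paper's route is built to never need {\bf(A3)} for $k_\infty^{(s)}$. It uses the four-term decomposition (\ref{eq:relu_decomp_1})--(\ref{eq:relu_decomp_4}) in which the bias term is $\| g_{\infty,\lambda} - g_\rho \|_{\el{2}}^2$ with $g_{\infty,\lambda}$ the regularized minimizer for the \emph{ReLU} NTK, so Proposition \ref{prop:approx_error} applies with the ReLU operator where {\bf(A3')} actually holds; the extra cross term $\|g^{(s)}_{\infty,\lambda} - g_{\infty,\lambda}\|_{\el{2}}^2$ is shown to vanish as $s\to\infty$ (Proposition \ref{prop:relu_decomp_123_bound}). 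For the optimization and variance terms one only needs $\|g^{(s)}_{M,\lambda}\|_{\el{\infty}}$ bounded (handled by a triangle inequality back to $g_{\infty,\lambda}$ and (\ref{eq:infty_norm_bound})), the resolvent comparison $(\Sigma_\infty^{(s)} + \lambda I)^{-1} \preccurlyeq 2(\Sigma_\infty + \lambda I)^{-1}$ for fixed $\lambda>0$ and large $s$, and an index-splitting argument for the degree of freedom (\ref{eq:dof_bound}). All of these are stable under the convergence $\Sigma_\infty^{(s)}\to\Sigma_\infty$ because they only involve the operators regularized by $+\lambda I$, never the unbounded fractional inverse $(\Sigma_\infty^{(s)})^{-r}$. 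To fix your argument you would need to replace step (ii) with this kind of decomposition through the ReLU regularized minimizer rather than attempting to verify {\bf(A3)}/{\bf(A4)} for the smooth kernel.
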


\vspace{-2mm}
\section{Experiments}\label{sed:mkl}
\vspace{-1mm}
We verify the importance of the specification of target functions by showing the misspecification significantly slows down the convergence speed.
To evaluate the misspecification, we consider single-layer learning as well as the two-layer learning, and we see the advantage of two-layer learning.
Here, note that, with evident modification of the proofs, the counterparts of Corollaries \ref{cor:fast_rate} and \ref{cor:fast_rate_smooth_activation} for learning a single layer also hold by replacing $\Sigma_\infty$ with the covariance operator $\Sigma_{a,\infty}$ ($\Sigma_{b,\infty}$) associated with $k_{a,\infty}$ ($k_{b,\infty}$), where
\begin{align*}
 k_{a,\infty}(x,x') &= \expec_{b^{(0)}}[ \sigma(b^{(0)\top}x)\sigma(b^{(0)\top}x') ],\\
 k_{b,\infty}(x,x') &= R^2(x^\top x' + \gamma^2)\expec_{b^{(0)}}[ \sigma'(b^{(0)\top}x)\sigma'(b^{(0)\top}x') ], 
\end{align*}
which are components of $k_\infty = k_{a,\infty} + k_{b,\infty} $ corresponding to the output and input layers, respectively.
Then, from Corollaries \ref{cor:fast_rate} and \ref{cor:fast_rate_smooth_activation}, a Bayes rule $g_\rho$ is learned efficiently by optimizing the layer which has a small norm $\| \Sigma^{-r} g_\rho \|_{\el{2}}$ for $\Sigma\in\{\Sigma_{a,\infty}, \Sigma_{b,\infty}, \Sigma_{\infty}\}$.
\begin{figure}[ht]
\begin{center}
\includegraphics[angle=0,width=100mm]{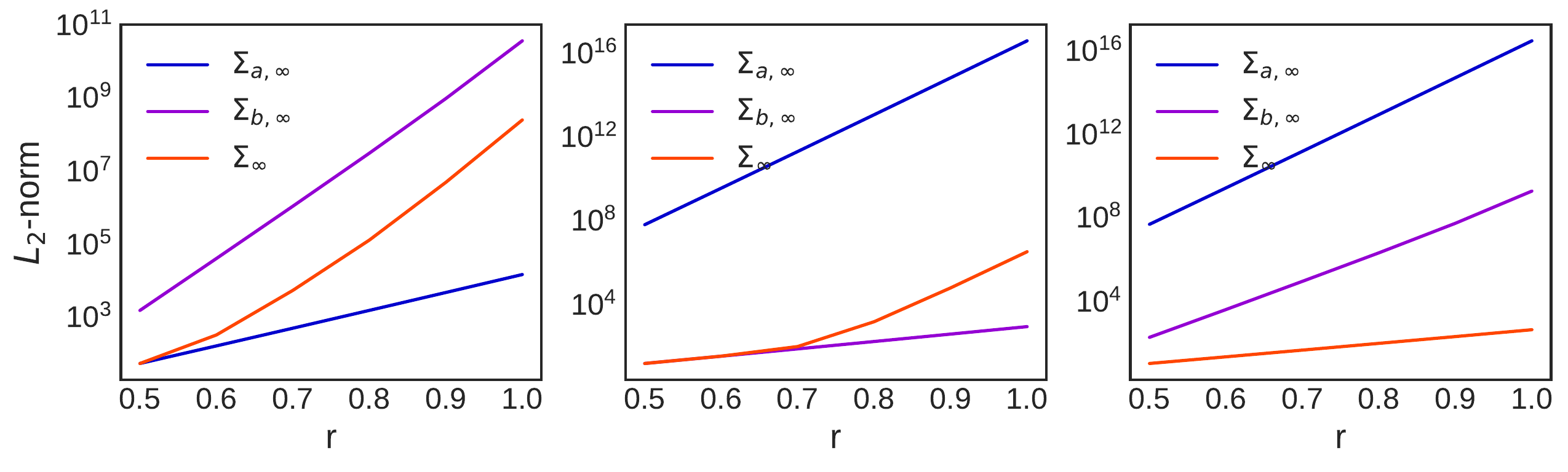} 
\includegraphics[angle=0,width=100mm]{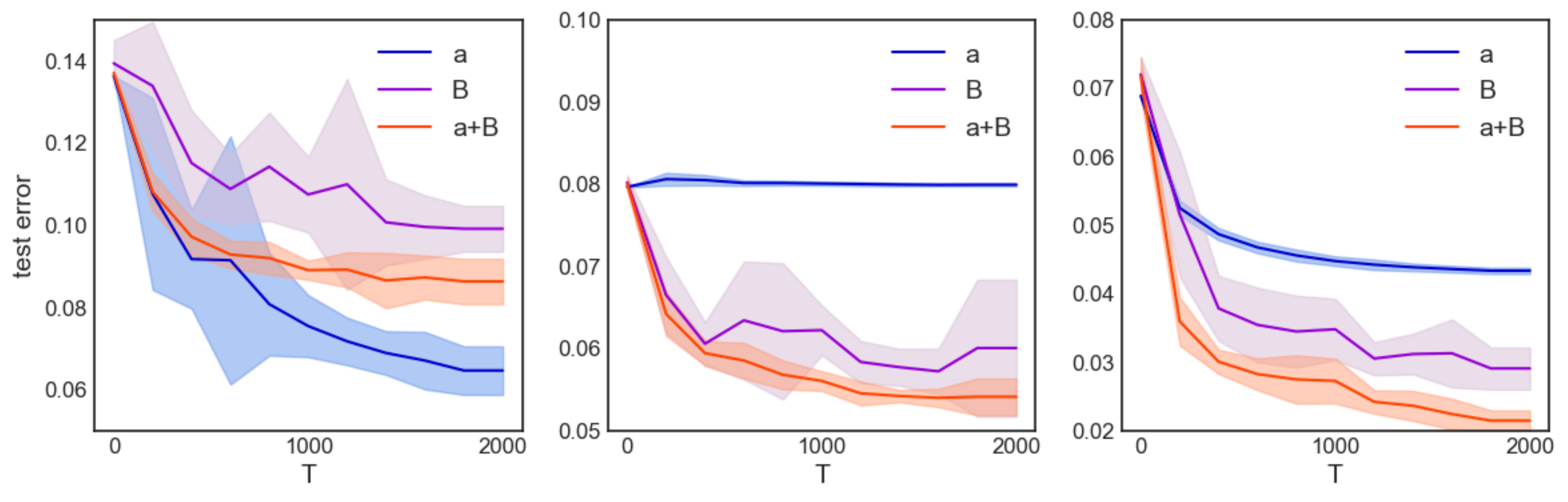}
\caption{Top: Estimation of $\|\Sigma^{-r}g_\rho\|_{\el{2}}$ ($r \in [0.5, 1]$) for integral operators $\Sigma \in \{\Sigma_{a,\infty}, \Sigma_{b,\infty}, \Sigma_{\infty}\}$ of two-layer ReLU networks. 
Bayes rules $g_\rho$ are set to the average eigenfunctions of $\Sigma_{a,\infty}$ (left), $\Sigma_{b,\infty}$ (middle), and $\Sigma_{\infty}$ (right).
Bottom: Learning curves of test errors for Algorithm \ref{alg:asgd} with two-layer swish networks.} \label{fig:empirical_results} 
\end{center} 
\end{figure}
\vspace{-2mm}
\paragraph{Experimental settings.}
Figure \ref{fig:empirical_results} (Top) depicts norms $\| \Sigma^{-r} g_\rho \|_{\el{2}}$ for $\Sigma \in \{\Sigma_{a,\infty}, \Sigma_{b,\infty}, \Sigma_{\infty}\}$.
Bayes rules $g_\rho$ are averages of eigenfunctions of $\Sigma_{a,\infty}$ (left), $\Sigma_{b,\infty}$ (middle), and $\Sigma_{\infty}$ (right) corresponding to the 10-largest eigenvalues excluding the first and second, with the setting: $R=1/(20\sqrt{2})$, $\gamma=10\sqrt{2}$, and $\rho_X$ is the uniform distribution on the unit sphere in $\realsp^2$.
To estimate eigenvalues and eigenfunctions, we draw $10^4$-samples from $\rho_X$ and $M=2\times 10^4$-hidden nodes of a two-layer ReLU.
\vspace{-2mm}
\paragraph{Empirical observations.}
We observe $g_\rho$ has the smallest norm with respect to the integral operator which specifies $g_\rho$ and has a comparably small norm with respect to $\Sigma_\infty$ even for the cases where $g_\rho$ is specified by $\Sigma_{a,\infty}$ or $\Sigma_{b,\infty}$.
This observation suggests the efficiency of learning a corresponding layer to $g_\rho$ and learning both layers, and it is empirically verified.
We run Algorithm \ref{alg:asgd} $10$-times with respect to output (blue), input (purple), and both layers (orange) of two-layer swish networks with $s=10$.
Figure \ref{fig:empirical_results} (Bottom) depicts the average and standard deviation of test errors.
From the figure, we see that learning a corresponding layer to $g_\rho$ and both layers exhibit faster convergence, and that misspecification significantly slows down the convergence speed in all cases.
\vspace{-2mm}
\section{Conclusion} \label{sec:conclusion}
\vspace{-2mm}
We analyzed the convergence of the averaged stochastic gradient descent for overparameterized two-layer neural networks for a regression problem.
Through the development of a new proof strategy that does not rely on the positivity of the NTK, we proved that the global convergence (Theorem \ref{thm:main_theorem_1}) relies only on the overparameterization. 
Moreover, we demonstrated the minimax optimal convergence rates (Corollary \ref{cor:fast_rate}) in terms of the generalization error depending on the complexities of the target function and the hypothesis class and showed the explicit optimal rate for the smooth approximation of the ReLU.

\subsubsection*{Acknowledgments}
AN was partially supported by JSPS Kakenhi (19K20337) and JST-PRESTO.
TS was partially supported by JSPS KAKENHI (18K19793, 18H03201, and 20H00576), Japan Digital Design, and JST CREST.

\bibliography{ref}
\bibliographystyle{iclr2021_conference}

\ifWITHSUPP
\appendix

\clearpage
\onecolumn
\renewcommand{\thesection}{\Alph{section}}
\renewcommand{\thesubsection}{\Alph{section}. \arabic{subsection}}
\renewcommand{\thetheorem}{\Alph{theorem}}
\renewcommand{\thelemma}{\Alph{lemma}}
\renewcommand{\theproposition}{\Alph{proposition}}
\renewcommand{\thedefinition}{\Alph{definition}}
\renewcommand{\thecorollary}{\Alph{corollary}}
\renewcommand{\theassumption}{\Alph{assumption}}

\setcounter{section}{0}
\setcounter{subsection}{0}
\setcounter{theorem}{0}
\setcounter{lemma}{0}
\setcounter{proposition}{0}
\setcounter{definition}{0}
\setcounter{corollary}{0}
\setcounter{assumption}{0}

\part*{\Large{Appendix}}
\section{Proof Sketch of the Main Results}\label{sec:proof_sketch}
We provide several key results and a proof sketch of Theorem \ref{thm:main_theorem_1} and Corollary \ref{cor:fast_rate}.
We first recall the definition of stochastic gradients of $\risk$ in a general RKHS $(\hilsp,\pd<,>_{\hilsp})$ associated with a uniformly bounded real-valued kernel function $k: \featuresp \times \featuresp \rightarrow \realsp$.
We set $K_X=k(X,\cdot)$. 
Then, it follows that for $\forall g, \forall h \in \hilsp$,
\[ \risk(g+h)=\risk(g)+ \pd<\expec[\partial_z \ell (g(X),Y)K_X],h>_{\hilsp}+o(\|h\|_{\hilsp}), \]
which is confirmed by the following equations:
\begin{align*}
\expec[ &l((g+h)(X),Y)] 
= \expec[ l(g(X),Y) + \partial_\zeta l(g(X),Y) h(X) + o(|h(X)|) ],    
\end{align*}
$h(X)=\pd<h,k(X,\cdot)>_{\hilsp}$, and $|h(X)|\leq \|h\|_{\hilsp}\sqrt{k(X,X)}$.
This means that the stochastic gradient of $\risk$ in $\hilsp$ is given by $\partial_\zeta \ell(g(X),Y)k(X,\cdot)$ for $(X,Y)\sim \tpr$.
In addition, the stochastic gradient of the $L_2$-regularized risk is given by $\partial_\zeta \ell(g(X),Y)k(X,\cdot) + \lambda g$.

\subsection{Reference Averaged Stochastic Gradient Descent}
We consider a {\it random feature approximation} of NTK $k_\infty$: for an initial value $B^{(0)}=(b_r^{(0)})_{r=1}^M$, $\forall x, \forall x' \in \featuresp$,
\begin{align}\label{rf:ntk}
\hspace{-2mm} k_{M}(x,x') 
\defeq \frac{1}{M} \sum_{r=1}^M \sigma(b_r^{(0)\top}x)\sigma(b_r^{(0)\top}x')
+\frac{(x^\top x' + \gamma^2)}{M}\sum_{r=1}^M \sigma'(b_r^{(0)\top}x)\sigma'(b_r^{(0)\top}x'),  
\end{align}
We can confirm that $k_M$ is an approximation of NTK, that is, $k_M$ converges to $k_{\infty}$ uniformly over $\mathrm{supp}(\rho_X) \times \mathrm{supp}(\rho_X)$ almost surely by the uniform law of large numbers. 
We denote by $(\hilsp_{M}, \pd<,>_{\hilsp_M})$ an RKHS associated with $k_M$.
By the assumptions, we see $k_M(x,x') \leq 12$ for $\forall (x,x') \in \mathrm{supp}(\rho_X) \times \mathrm{supp}(\rho_X)$.

We introduce averaged stochastic gradient descent in $\hilsp_M$ (see Algorithm \ref{alg:ref_asgd}) as a reference for Algorithm \ref{alg:asgd}.
The notation $G^{(t)}$ represents a stochastic gradient at the $t$-th iterate:
\[ G^{(t)} \defeq \partial_z \ell(g^{(t)}(x_t),y_t)k_M(x_t,\cdot). \]

\begin{algorithm}[ht]
  \caption{Reference ASGD in $\hilsp_M$}
  \label{alg:ref_asgd}
\begin{algorithmic}[1]
  \STATE {\bfseries Input:}
  number of iterations $T$,
  regularization parameter $\lambda$,
  learning rates $(\eta_t)_{0=1}^{T-1}$,
  averaging weights $(\alpha_t)_{t=0}^{T}$,
\\
\vspace{1mm}
   \STATE $g^{(0)}\leftarrow 0$
   \FOR{$t=0$ {\bfseries to} $T-1$}
   \STATE Randomly draw a sample $(x_t,y_t) \sim \tpr$\\
   \STATE $g^{(t+1)} \leftarrow (1-\eta_t \lambda) g^{(t)} - \eta_t G^{(t)}$ \\
   \ENDFOR
   \STATE Return $\overline{g}^{(T)} = \sum_{t=0}^{T}\alpha_t g^{(t)}$
\end{algorithmic}
\end{algorithm}

The following proposition shows the equivalence between the averaged stochastic gradient descent for two-layer neural networks and that in $\hilsp_M$ up to a small constant depending on $M$.

\begin{proposition}\label{prop:equiv_asgd}
Suppose Assumptions {\bf(A1)} and {\bf(A2)} hold.
Run Algorithms \ref{alg:asgd} and \ref{alg:ref_asgd} with the constant learning rate $\eta_t=\eta$ satisfying $\eta \lambda < 1$ and $\eta \leq 1$. 
Moreover, assume that they share the same hyper-parameter settings and the same examples $(x_t,y_t)_{t=0}^{T-1}$ to compute stochastic gradient.
Then, for arbitrary $T\in \posintegers$ and $\epsilon > 0$, there exists $M\in \posintegers$ depending only on $T$ and $\epsilon$ such that $\forall t \leq T$,
\begin{equation}
\|g_{\overline{\Theta}^{(t)}}-\overline{g}^{(t)}\|_{L_{\infty}(\rho_X)} \leq \epsilon,
\end{equation}
where $g_{\overline{\Theta}^{(t)}}$ and $\overline{g}^{(t)}$ are iterates obtained by Algorithm \ref{alg:asgd} and \ref{alg:ref_asgd}, respectively.
\end{proposition}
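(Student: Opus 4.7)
My plan is to establish the bound by linearizing the neural network around its initialization, identifying the linearized dynamics with Algorithm \ref{alg:ref_asgd}, and then controlling the nonlinearity error inductively in the uniform norm.

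First, I would introduce the linearized network $g^{\mathrm{lin}}_\Theta(x) = g_{\Theta^{(0)}}(x) + \langle \nabla_\Theta g_{\Theta^{(0)}}(x), \Theta - \Theta^{(0)}\rangle$, which under the symmetric initialization reduces to $\langle \Phi(x), \Theta - \Theta^{(0)}\rangle$, where $\Phi(x) := \nabla_\Theta g_{\Theta^{(0)}}(x)$ is the NTK feature map. A direct computation using $a_r^{(0)} \in \{\pm 1\}$ and $c_r^{(0)}=0$ yields $\langle \Phi(x), \Phi(x')\rangle = k_M(x,x')$, so $\Phi$ is a finite-dimensional realization of $\hilsp_M$. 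Consequently, running SGD on the linearized loss with parameter $L_2$-regularization of $\|\Theta - \Theta^{(0)}\|$, using the same samples, learning rate, and averaging weights, produces iterates $\tilde\Theta^{(t)}$ such that $g^{\mathrm{lin}}_{\tilde\Theta^{(t)}}$ coincides exactly with the kernel iterate $g^{(t)}$ of Algorithm \ref{alg:ref_asgd}. This is the standard identification of SGD on a finite-dimensional feature map with kernel SGD in the associated RKHS, and it is the reason symmetric initialization and $c^{(0)}=0$ are built in.

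Second, I would establish an $M$-independent bound on the aggregate displacement $\|\Theta^{(t)}-\Theta^{(0)}\|$ along the true neural network iterates. Using Assumptions {\bf(A1)}, {\bf(A2)} together with $\|\Phi(x)\|^2 = k_M(x,x) \leq 12$ deterministically, the per-step contribution is controlled by $\eta|g_{\Theta^{(t)}}(x_t)-y_t| \cdot \|\Phi(x_t)\|$ plus the $(1-\eta\lambda)$ contraction, and the output $|g_{\Theta^{(t)}}(x)|$ is bounded inductively via the approximate identity $g_{\Theta^{(t)}} \approx \langle \Phi, \Theta^{(t)}-\Theta^{(0)}\rangle$. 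This yields a bound $\|\Theta^{(t)}-\Theta^{(0)}\| \leq C_1(T,\eta)$ uniform in $M$, with per-node displacements $|a_r^{(t)}-a_r^{(0)}|, \|b_r^{(t)}-b_r^{(0)}\|_2, |c_r^{(t)}-c_r^{(0)}|$ each of order $O(M^{-1/2})$.

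Third, I would bound the trajectory gap $\Delta_t := \|g_{\Theta^{(t)}} - g^{(t)}\|_{L_\infty(\rho_X)}$ inductively. Two perturbations arise per step: (i) the nonlinearity error $\|g_{\Theta^{(t)}} - g^{\mathrm{lin}}_{\Theta^{(t)}}\|_{L_\infty(\rho_X)}$, which by Taylor expansion with remainder using $\|\sigma''\|_\infty \leq C$ is controlled by a sum of second-order terms in the per-node displacements and is uniformly $O(C_1(T,\eta)^2/\sqrt{M})$ on $\mathrm{supp}(\rho_X)$; and (ii) the feature-map error $\|\nabla_\Theta g_{\Theta^{(t)}}(x_t) - \Phi(x_t)\|$, of the same order. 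A single SGD step then propagates as $\Delta_{t+1} \leq (1+\eta C_2)\Delta_t + C_3(T,\eta)/\sqrt{M}$. Unrolling gives $\Delta_T \leq C_4(T,\eta)/\sqrt{M}$ with $C_4 = O((1+\eta C_2)^T)$; choosing $M \geq C_4(T,\eta)^2/\epsilon^2$ yields the pointwise bound for every $t \leq T$, and the weighted-average statement for $g_{\overline{\Theta}^{(t)}}$ and $\overline{g}^{(t)}$ follows from the triangle inequality applied to $\sum_s \alpha_s(g_{\Theta^{(s)}} - g^{(s)})$.

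The main obstacle is obtaining the discrepancy bound in the $L_\infty(\rho_X)$ norm rather than merely on the training points, without invoking positivity of an empirical Gram matrix. This forces a uniform-in-$x$ control of both the nonlinearity remainder and the feature-map drift over the whole support of $\rho_X$, and it is the multiplicative propagation factor $(1+\eta C_2)^T$ accumulated through the $T$ SGD steps that produces the exponential-in-$T$ dependence of the required width $M$ — precisely the cost of trading the positivity assumption for a pointwise bound usable in a population-risk analysis.
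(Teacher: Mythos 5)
Your proposal is correct and follows essentially the same route as the paper's proof: the paper likewise linearizes the network around the symmetric initialization (its $h_\Theta$ is your $g^{\mathrm{lin}}_\Theta$), inductively establishes $O(M^{-1/2})$ per-node parameter displacements to control both the Taylor remainder via $\|\sigma''\|_\infty \leq C$ and the feature-map drift uniformly over $\mathrm{supp}(\rho_X)$, and unrolls a one-step recursion with multiplicative factor $(1-\eta\lambda+12\eta)$ to obtain a $T(1+12\eta)^T d_4(T)/\sqrt{M}$ gap, handling the averaging by linearity of $h_\Theta$ and the triangle inequality. The only cosmetic difference is bookkeeping: the paper compares the linearized model evaluated along the true SGD trajectory against the kernel iterate, rather than introducing a separate exactly-linearized parameter sequence.
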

\paragraph{Remark.} Note that this proposition holds for non-averaged SGD too because it is a special case of averaged SGD by setting only one $\alpha_t$ to $1$.

\paragraph{Key idea.} This proposition is the key because it connects two learning dynamics for neural networks and RKHS $\hilsp_{M}$ by utilizing overparameterization without the positivity of NTK unlike existing studies \citep{weinan2019comparative,arora2019exact} that provide such a connection for continuous gradient flow with the positive NTK.
Instead of the positivity of NTK, Proposition \ref{prop:equiv_asgd} says that overparameterization increases the time stayed in the NTK regime where the learning dynamics for neural networks can be characterized by the NTK.
As a result, because $M$ is free from the other hyper-parameters, the averaged stochastic gradient descent for the overparameterized two-layer neural networks can fully inherit preferable properties from learning dynamics in $\hilsp_M$ with an appropriate choice of learning rates and regularization parameters as long as the network width is sufficiently large depending only on the number of iterations and the required accuracy.

\subsection{Convergence Rate of the Reference ASGD}
We give the convergence analysis of Algorithm \ref{alg:ref_asgd} in $\hilsp_M$, which will be a part of a bound in Theorem \ref{thm:main_theorem_1}.
Proofs essentially rely on several techniques developed in serial studies \citep{bach2013non,dieuleveut2016nonparametric,dieuleveut2017harder,pillaud2018exponential,rudi2017generalization,carratino2018learning} with several adaptations to our settings.

Let $M \in \posintegers \cup \{\infty\}$ be a positive number or $\infty$.
We set $K_{M,X} \defeq k_M(X,\cdot)$ and denote by $\Sigma_M$ the covariance operator defined by $k_M$: 
\[ \Sigma_M \defeq \expec_{X\sim \rho_X}[ K_{M,X}\tensor_{\hilsp_M} K_{M,X}]. \]
We denote by $g_{M,\lambda}$ the minimizer of the regularized risk over $\hilsp_M$: 
\[ g_{M,\lambda} \defeq \argmin_{g \in \hilsp_M} \left\{ \risk(g) + \frac{\lambda}{2}\|g\|_{\hilsp_M}^2 \right\}. \]
We remark that $\Sigma_M: \el{2} \rightarrow \hilsp_M$ is isometric \citep{cucker2002mathematical}, that is, $\forall (f,g) \in \el{2} \times \el{2}$,
\[ \pd<\Sigma_M^{1/2}f, \Sigma_M^{1/2}g>_{\hilsp_M} = \pd<f, g>_{\el{2}}, \] 
and we use this fact frequently.
It is known that $g_{M,\lambda}$ is represented as follows \citep{caponnetto2007optimal}:
\begin{align}
g_{M,\lambda} 
&= ( \Sigma_M + \lambda I )^{-1}\expec_{(X,Y)}[Y K_{M,X}] \notag\\
&= ( \Sigma_M + \lambda I )^{-1} \Sigma_M g_\rho.  \label{eq:minimizer_RERM}
\end{align}
The following theorem provides a convergence rate of Algorithm \ref{alg:ref_asgd} to the minimizer $g_{M,\lambda}$.
\begin{theorem}\label{thm:asgd_in_approx_ntk}
Suppose Assumptions {\bf(A1)}, {\bf(A2)} and {\bf(A3)} hold.
Run Algorithm \ref{alg:ref_asgd} with the constant learning rate $\eta_t=\eta$ satisfying $4 (6 + \lambda) \eta \leq 1$.
Then, for $\forall \lambda > 0$ and $\forall \delta \in (0,1)$ there exists $M_0 > 0$ such that 
for $\forall M \geq M_0$ the following holds with high probability at least $1-\delta$: 
\begin{align*}
\expec\left[ \left\| \overline{g}^{(T)} - g_{M,\lambda} \right\|_{\el{2}}^2 \right] 
&\leq \frac{4}{\eta^2(T+1)^2}\| (\Sigma_M + \lambda I)^{-1} g_{M,\lambda} \|_{\el{2}}^2 \\
&+ \frac{2\cdot24^2}{T+1} \| (\Sigma_M + \lambda I )^{-1/2} g_{M,\lambda} \|_{\el{2}}^2 \\
&+\frac{8}{T+1}\left( 1 + \|g_\rho \|_{\el{2}}^2
+ 24 \|\Sigma_\infty^{-r} g_\rho \|_{\el{2}}^2 \right) \tr{\Sigma_M (\Sigma_M + \lambda I)^{-1}},
\end{align*}
where $\overline{g}^{(T)}$ is an iterate obtained by Algorithm \ref{alg:ref_asgd}.
\end{theorem}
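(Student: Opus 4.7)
The starting point is to recast Algorithm \ref{alg:ref_asgd} as the linear stochastic recursion in $\hilsp_M$
\[
e^{(t+1)} = \bigl(I - \eta(\Sigma_M + \lambda I)\bigr)\, e^{(t)} - \eta\, \epsilon^{(t+1)}, \qquad e^{(t)} \defeq g^{(t)} - g_{M,\lambda},
\]
where, using the optimality condition (\ref{eq:minimizer_RERM}) for $g_{M,\lambda}$, the residual $\epsilon^{(t+1)} \defeq (g^{(t)}(x_t) - y_t) K_{M,x_t} - \Sigma_M g^{(t)} + \Sigma_M g_\rho$ is a martingale difference with respect to the natural filtration $\mathcal{F}_t$. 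Linearity lets me split $e^{(t)} = e_b^{(t)} + e_v^{(t)}$ into a noiseless bias trajectory ($e_b^{(0)} = -g_{M,\lambda}$) and a zero-start variance trajectory driven by $\epsilon^{(t+1)}$; the same split survives uniform averaging, so $\expec \|\overline{g}^{(T)} - g_{M,\lambda}\|_{\el{2}}^2 = \expec \pd<\Sigma_M(\overline{e}_b^{(T)} + \overline{e}_v^{(T)}), \overline{e}_b^{(T)} + \overline{e}_v^{(T)}>_{\hilsp_M}$ decomposes, up to a factor of $2$, into a deterministic bias part and a martingale variance part.

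\textbf{Bias.} Summing the geometric series gives $\overline{e}_b^{(T)} = -\frac{1}{\eta(T+1)}\bigl[I - (I - \eta(\Sigma_M + \lambda I))^{T+1}\bigr](\Sigma_M + \lambda I)^{-1} g_{M,\lambda}$. I diagonalize in the spectrum of $\Sigma_M$ and use the elementary bound $[1 - (1 - \eta(\mu+\lambda))^{T+1}]^2 \leq \min\{1,\, \eta(T+1)(\mu+\lambda)\}$, splitting the eigenmodes into ``quickly-contracting'' ones ($\eta(T+1)(\mu+\lambda) \gtrsim 1$), which contribute at most $\frac{4}{\eta^2(T+1)^2} \|(\Sigma_M + \lambda I)^{-1} g_{M,\lambda}\|_{\el{2}}^2$, and ``slow'' ones, which contribute at most $\frac{2\cdot 24^2}{T+1} \|(\Sigma_M + \lambda I)^{-1/2} g_{M,\lambda}\|_{\el{2}}^2$; the constant $24 \geq 1/\eta$ is absorbed from the step-size condition $4(6 + \lambda)\eta \leq 1$. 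This reproduces the first two terms of the theorem.

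\textbf{Variance and self-bounding.} For the variance I unroll $e_v^{(t)} = -\eta \sum_{s=0}^{t-1}(I - \eta(\Sigma_M + \lambda I))^{t-1-s} \epsilon^{(s+1)}$, average, take expectations, and exploit martingale orthogonality. The conditional second moment satisfies
\[
\expec\bigl[\epsilon^{(s+1)} \tensor_{\hilsp_M} \epsilon^{(s+1)} \,\big|\, \mathcal{F}_s\bigr] \preceq \expec_X\bigl[(g^{(s)}(X) - Y)^2\, K_{M,X} \tensor_{\hilsp_M} K_{M,X}\bigr],
\]
and splitting $(g^{(s)}(X) - Y)^2 \leq 2(g_\rho(X) - Y)^2 + 2(g^{(s)}(X) - g_\rho(X))^2$ separates (i) a label-noise piece bounded by $|Y| \leq 1$ and $\|K_{M,X}\|_{\hilsp_M}^2 \leq 12$, which after summing the geometric operator series yields the effective dimension $\tr{\Sigma_M(\Sigma_M + \lambda I)^{-1}}$, from (ii) a self-referential piece proportional to $\expec \|g^{(s)} - g_\rho\|_{\el{2}}^2$. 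I further decompose the latter as $g^{(s)} - g_\rho = (g^{(s)} - g_{M,\lambda}) + (g_{M,\lambda} - g_\rho)$; the first summand is fed back into a Grönwall-type argument that closes thanks to the strict contraction $\|I - \eta(\Sigma_M + \lambda I)\|_{\mathrm{op}} \leq 1 - \eta\lambda$ and the step-size condition $4(6+\lambda)\eta \leq 1$, while the second is a misspecification term bounded via Assumption (A3). The main obstacle of the whole proof is precisely this self-bounding step: a naive control of $\expec \|g^{(s)}\|_{\hilsp_M}^2$ grows in $s$ and would wipe out the $1/(T+1)$ rate, so the induction must be run jointly with the bias and variance estimates and the regularization $\lambda$ must act as a genuine strict-contraction mechanism.

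\textbf{From $\hilsp_M$ to $\hilsp_\infty$.} The only finite-$M$ quantities in the bound are $\Sigma_M$ and $g_{M,\lambda}$; the explicit coefficient $(1 + \|g_\rho\|_{\el{2}}^2 + 24\|\Sigma_\infty^{-r} g_\rho\|_{\el{2}}^2)$ is expressed in terms of the population NTK. To pass from $\hilsp_M$ to $\hilsp_\infty$, I invoke Bernstein-type concentration of the random-feature covariance operator (in the spirit of Rudi--Rosasco and Bach): for any $\delta, \lambda > 0$ there exists $M_0(\delta, \lambda)$ such that for $M \geq M_0$, with probability at least $1 - \delta$ over $\Theta^{(0)}$, the comparison operators $(\Sigma_M + \lambda I)^{\pm 1/2}(\Sigma_\infty + \lambda I)^{\mp 1/2}$ are bounded by a universal constant. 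Applying this comparison to the misspecification norm $\|g_{M,\lambda} - g_\rho\|_{\el{2}}$ and to the residual appearing in the self-bounding step converts them into $\Sigma_\infty$-quantities controlled by $\|\Sigma_\infty^{-r} g_\rho\|_{\el{2}}$ under (A3), which completes the bound. The random-feature comparison at scales as small as $\lambda \leq \|\Sigma_\infty\|_{\mathrm{op}}$ is a secondary subtlety handled by letting $M_0$ scale with $\lambda^{-1}$ and $\delta^{-1}$.
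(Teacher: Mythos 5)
Your overall architecture (linear stochastic recursion, bias--variance split, martingale orthogonality, effective dimension) matches the paper's, and your bias analysis is fine: a spectral split of the deterministic averaged bias does recover the first two terms (the paper instead gets the second term from the gap between the stochastic recursion $A_{t+1}=(I-\eta K_{M,x_t}\tensor_{\hilsp_M}K_{M,x_t}-\eta\lambda I)A_t$ and its deterministic counterpart, but both routes work). The genuine gap is in your choice of decomposition for the noise. You push \emph{all} stochasticity into an additive term $\epsilon^{(t+1)}$ evaluated at the current iterate $g^{(t)}$, so its conditional covariance involves $\expec[(g^{(t)}(X)-Y)^2 K_{M,X}\tensor_{\hilsp_M}K_{M,X}]$ and hence the quantity you are trying to bound. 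Your proposed Gr\"onwall closure does not obviously succeed: to compare this covariance with $\Sigma_M$ you need an $L_\infty$ (or $\hilsp_M$-norm) control of $g^{(t)}-g_\rho$ uniformly in $t$, and the natural recursion for $\expec\|g^{(t)}-g_{M,\lambda}\|_{\hilsp_M}^2$ contracts only at rate $1-\eta\lambda$ against a variance injection of order $\eta^2$, yielding a stationary level of order $\eta/\lambda$. Feeding that back into the variance term produces an extra $\lambda^{-1}$ in front of $\tr{\Sigma_M(\Sigma_M+\lambda I)^{-1}}$, which destroys the $O(T^{-2r\beta/(2r\beta+1)})$ rate downstream. "The regularization acts as a strict contraction" is exactly the mechanism that is too weak here.

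The paper avoids the circularity by keeping the multiplicative randomness attached to the error ($\alpha_t=I-\eta K_{M,x_t}\tensor_{\hilsp_M}K_{M,x_t}-\eta\lambda I$) and taking the additive noise at the \emph{fixed point}: $\xi = YK_{M,X}-(K_{M,X}\tensor_{\hilsp_M}K_{M,X}+\lambda I)g_{M,\lambda}$, which is a mean-zero residual whose covariance is bounded once and for all by $2(1+\|g_\rho\|_{\el{2}}^2+24\|\Sigma_\infty^{-r}g_\rho\|_{\el{2}}^2)\Sigma_M$ (Lemma \ref{lemma:noise_bound}). That lemma is also where the "with high probability for $M\geq M_0$" clause of the theorem actually originates: one needs $\|g_{M,\lambda}\|_{\el{\infty}}\lesssim\|g_\rho\|_{\el{2}}+\|\Sigma_\infty^{-r}g_\rho\|_{\el{2}}$, proved by comparing $g_{M,\lambda}$ with $g_{\infty,\lambda}$ via concentration of $k_M$ and $\Sigma_M$. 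Your final paragraph gestures at the right concentration tools but deploys them in service of the self-bounding scheme rather than of this fixed-point residual bound; the remaining multiplicative-noise interaction in the variance is then handled in the paper by Lemma 21 of \cite{pillaud2018exponential}, for which the step-size condition $4(6+\lambda)\eta\leq 1$ and the bound $\expec[(K_{M,X}\tensor_{\hilsp_M}K_{M,X})^2]\preccurlyeq 12\,\Sigma_M$ are the needed ingredients. To repair your proof, replace your $\epsilon^{(t+1)}$ by the residual at $g_{M,\lambda}$ and prove the $L_\infty$ bound on $g_{M,\lambda}$; the rest of your plan then goes through.
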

\paragraph{Remark.}
The first and second terms stem from the optimization speed of a semi-stochastic part of averaged stochastic gradient descent. The first term has a better dependency on $T$, but it has a worse dependency on $\lambda$ than the second one. This kind of deterioration due to the weak strong convexity is common in first-order optimization methods.
However, as confirmed later, these two terms are dominated by the variance term corresponding to the third term by setting hyper-parameters appropriately. 

To make the bound in Theorem \ref{thm:asgd_in_approx_ntk} free from the size of $M$, we introduce the following proposition.
\begin{proposition}\label{prop:high_prob_bound_on_operators}
Suppose $g_\rho \in \hilsp_\infty$ holds. Under Assumption {\bf(A1)} and {\bf(A2)}, for any $\delta \in (0,1)$, there exists $M_0 \in \posintegers$ such that for any $M \geq M_0$,
the following holds with high probability at least $1-\delta$:
\begin{align*}
\| (\Sigma_M + \lambda I)^{-1} g_{M,\lambda} \|_{\el{2}}^2 
&\leq 2\lambda^{-1}\| g_\rho\|_{\hilsp_\infty}^2, \\
\| (\Sigma_M + \lambda I )^{-1/2} g_{M,\lambda} \|_{\el{2}}^2
&\leq 2\| g_\rho\|_{\hilsp_\infty}^2,
\end{align*}
and if $\lambda \leq \| \Sigma_\infty \|_{\mathrm{op}} $, then 
\[ \tr{\Sigma_M (\Sigma_M + \lambda I)^{-1}}  \leq 3\tr{\Sigma_\infty (\Sigma_\infty + \lambda I)^{-1}}. \]
\end{proposition}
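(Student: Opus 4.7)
The plan is to first establish a weighted spectral equivalence between $\Sigma_M$ and $\Sigma_\infty$ that holds with high probability once $M$ is sufficiently large, and then to derive all three claims by spectral calculus on top of that equivalence. By the random-feature structure of $k_M$ in (\ref{rf:ntk}), $\Sigma_M$ is the mean of $M$ i.i.d.\ positive self-adjoint trace-class operators $T_r$ on $\el{2}$ (indexed by $b_r^{(0)}\sim\mu_0$) with $\expec[T_r]=\Sigma_\infty$, and {\bf(A1)}--{\bf(A2)} give $k_M(x,x)\le 12$ uniformly, hence $\|T_r\|_{\mathrm{op}}\le 12$ almost surely. Applying an operator Bernstein inequality to the preconditioned summands $(\Sigma_\infty+\lambda I)^{-1/2}T_r(\Sigma_\infty+\lambda I)^{-1/2}$ (in the spirit of the random-feature analysis of Rudi--Rosasco, 2017) produces a threshold $M_0=M_0(\lambda,\delta)$ such that, for every $M\ge M_0$, with probability at least $1-\delta$,
\[
\bigl\|(\Sigma_\infty+\lambda I)^{-1/2}(\Sigma_M-\Sigma_\infty)(\Sigma_\infty+\lambda I)^{-1/2}\bigr\|_{\mathrm{op}}\le\tfrac{1}{2},
\]
which is equivalent to the two-sided operator inequality $\tfrac{1}{2}(\Sigma_\infty+\lambda I)\preceq\Sigma_M+\lambda I\preceq\tfrac{3}{2}(\Sigma_\infty+\lambda I)$ and, in particular, $\|(\Sigma_M+\lambda I)^{-1/2}(\Sigma_\infty+\lambda I)^{1/2}\|_{\mathrm{op}}\le\sqrt 2$.

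For the first two claims I use the identity $g_{M,\lambda}=(\Sigma_M+\lambda I)^{-1}\Sigma_M g_\rho$ from (\ref{eq:minimizer_RERM}) and write $g_\rho=\Sigma_\infty^{1/2}h$ with $\|h\|_{\el{2}}=\|g_\rho\|_{\hilsp_\infty}$, which is possible because $g_\rho\in\hilsp_\infty=\Sigma_\infty^{1/2}(\el{2})$. Then the chain
\[
\|(\Sigma_M+\lambda I)^{-1/2}g_{M,\lambda}\|_{\el{2}}\le\|(\Sigma_M+\lambda I)^{-1}\Sigma_M\|_{\mathrm{op}}\cdot\|(\Sigma_M+\lambda I)^{-1/2}(\Sigma_\infty+\lambda I)^{1/2}\|_{\mathrm{op}}\cdot\|(\Sigma_\infty+\lambda I)^{-1/2}\Sigma_\infty^{1/2}\|_{\mathrm{op}}\cdot\|h\|_{\el{2}}
\]
bounds the three operator factors by $1$, $\sqrt 2$, and $1$; squaring gives claim 2. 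Claim 1 is obtained identically by inserting one extra $(\Sigma_M+\lambda I)^{-1/2}$ on the left, which contributes a factor $\lambda^{-1/2}$ and, after squaring, the desired $2\lambda^{-1}\|g_\rho\|_{\hilsp_\infty}^2$.

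For the third claim, sandwich $\tr{\Sigma_M(\Sigma_M+\lambda I)^{-1}}=\tr{\Sigma_M^{1/2}(\Sigma_M+\lambda I)^{-1}\Sigma_M^{1/2}}$ and use $(\Sigma_M+\lambda I)^{-1}\preceq 2(\Sigma_\infty+\lambda I)^{-1}$ to obtain $\tr{\Sigma_M(\Sigma_M+\lambda I)^{-1}}\le 2\,\tr{\Sigma_M(\Sigma_\infty+\lambda I)^{-1}}$. Expanding $\Sigma_M=\Sigma_\infty+(\Sigma_M-\Sigma_\infty)$, the main piece contributes $2\,\tr{\Sigma_\infty(\Sigma_\infty+\lambda I)^{-1}}$; the perturbation is the trace of $(\Sigma_\infty+\lambda I)^{-1/2}(\Sigma_M-\Sigma_\infty)(\Sigma_\infty+\lambda I)^{-1/2}$ sandwiched against $\Sigma_\infty^{1/2}(\Sigma_\infty+\lambda I)^{-1/2}$-type factors, and is controlled by the Step-1 operator-norm bound $\tfrac{1}{2}$ times $\tr{\Sigma_\infty(\Sigma_\infty+\lambda I)^{-1}}$. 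The hypothesis $\lambda\le\|\Sigma_\infty\|_{\mathrm{op}}$ yields the lower bound $\tr{\Sigma_\infty(\Sigma_\infty+\lambda I)^{-1}}\ge\|\Sigma_\infty\|_{\mathrm{op}}/(\|\Sigma_\infty\|_{\mathrm{op}}+\lambda)\ge 1/2$, which absorbs the remaining constants and delivers the final factor of $3$.

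The hard part is Step 1. Although each $T_r$ is uniformly bounded by {\bf(A1)}--{\bf(A2)}, the preconditioned summands carry an operator norm of order $12/\lambda$ and a variance proxy of order $\tr{\Sigma_\infty(\Sigma_\infty+\lambda I)^{-1}}/\lambda$; to keep $M_0$ a reasonable function of $\lambda$ and $\delta$ one must use an intrinsic-dimension version of Bernstein (Tropp-style) rather than a naive union bound. A minor technicality is that the input-layer piece of $k_M$ produces rank-$(d{+}1)$ summands $T_r$ rather than rank-one ones, which only shifts a multiplicative constant in the variance bound; once that is absorbed the remainder of the argument is routine spectral calculus.
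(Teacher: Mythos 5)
Your treatment of the first two inequalities is correct and follows essentially the same route as the paper: a preconditioned operator-Bernstein bound yielding $(\Sigma_M+\lambda I)^{-1}\preccurlyeq 2(\Sigma_\infty+\lambda I)^{-1}$ with high probability, combined with $g_{M,\lambda}=(\Sigma_M+\lambda I)^{-1}\Sigma_M g_\rho$ and $g_\rho\in\Sigma_\infty^{1/2}(\el{2})$; the paper also reduces the first inequality to the second by pulling out the extra $\lambda^{-1/2}$ factor exactly as you do. (The paper derives $(\Sigma_M+\lambda I)^{-1}\preccurlyeq 2(\Sigma_\infty+\lambda I)^{-1}$ from the one-sided bound $\Sigma_\infty-\Sigma_M\preccurlyeq\frac12(\Sigma_\infty+\lambda I)$ via a Neumann-type argument attributed to Bach; your two-sided version is equivalent for this purpose.)

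For the third inequality the paper does not give a proof at all — it cites \cite{rudi2017generalization} — and your attempted direct proof has a genuine gap at the perturbation step. After using $(\Sigma_M+\lambda I)^{-1}\preccurlyeq 2(\Sigma_\infty+\lambda I)^{-1}$ you must control
\begin{equation*}
\tr{(\Sigma_M-\Sigma_\infty)(\Sigma_\infty+\lambda I)^{-1}}=\tr{(\Sigma_\infty+\lambda I)^{-1/2}(\Sigma_M-\Sigma_\infty)(\Sigma_\infty+\lambda I)^{-1/2}},
\end{equation*}
and you claim this is controlled by the Step-1 operator-norm bound of $\tfrac12$ times $\tr{\Sigma_\infty(\Sigma_\infty+\lambda I)^{-1}}$. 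That inference fails in infinite dimensions: an operator whose norm is at most $\tfrac12$ relative to the identity (after preconditioning) can have a trace vastly exceeding $\tfrac12\tr{\Sigma_\infty(\Sigma_\infty+\lambda I)^{-1}}$, since the operator-norm bound says nothing about how the perturbation is distributed across the (infinitely many) directions where $\Sigma_\infty$ is small. The correct fix — and what Rudi--Rosasco actually do — is a separate \emph{scalar} concentration: $\tr{\Sigma_M(\Sigma_\infty+\lambda I)^{-1}}$ is an average of i.i.d.\ nonnegative scalars $\tr{T_r(\Sigma_\infty+\lambda I)^{-1}}$ with mean $\tr{\Sigma_\infty(\Sigma_\infty+\lambda I)^{-1}}$ and uniform bound of order $\lambda^{-1}$, so a scalar Bernstein inequality gives $\tr{\Sigma_M(\Sigma_\infty+\lambda I)^{-1}}\leq \tfrac32\tr{\Sigma_\infty(\Sigma_\infty+\lambda I)^{-1}}$ with high probability for $M$ large (the hypothesis $\lambda\leq\|\Sigma_\infty\|_{\mathrm{op}}$ guarantees the mean is at least $\tfrac12$, so the multiplicative deviation is achievable); combined with the factor $2$ from the operator comparison this yields the stated constant $3$. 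Your observation about the rank-$(d{+}1)$ summands and the intrinsic-dimension Bernstein is fine but does not substitute for this missing scalar concentration step.
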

\paragraph{Remark.} 
The last inequality on the degree of freedom was shown in \cite{rudi2017generalization}.

To show the convergence to $g_\rho$, we utilize the following decomposition:
\begin{align}
\frac{1}{3}\| \overline{g}^{(T)} - g_\rho \|_{\el{2}}^2
&\leq 
\| \overline{g}^{(T)} - g_{M,\lambda} \|_{\el{2}}^2 
+ \|  g_{M,\lambda} - g_{\infty,\lambda} \|_{\el{2}}^2
+ \|  g_{\infty,\lambda} - g_\rho \|_{\el{2}}^2, \label{eq:gen_error_decomposition}
\end{align}
where $g_{\infty,\lambda} \defeq \argmin_{g\in \hilsp_\infty} \{ \risk(g) + \frac{\lambda}{2}\| g \|_{\hilsp_\infty}^2\}$.

The first term is the optimization speed evaluated in Theorem \ref{thm:asgd_in_approx_ntk}, 
and the second and third terms are approximation errors from a random feature approximation of NTK and imposing $L_2$-regularization, respectively. 
These approximation terms can be evaluated by the following existing results.
The next proposition is a simplified version of Lemma 8 in \cite{carratino2018learning}
\begin{proposition}[\cite{carratino2018learning}]\label{prop:ntk_approx_error}
Under Assumption {\bf(A1)}, {\bf(A2)}, and {\bf(A3)}, for any $\epsilon, \lambda > 0$ and $\delta \in (0,1]$, there exists $M_0 \in \posintegers$ depending on $\epsilon, \lambda, \delta$ such that for any $M \geq M_0$, the following holds with high probability at least $1-\delta$:
\[ \| g_{M,\lambda} - g_{\infty,\lambda} \|_{\el{2}}^2 \leq \epsilon. \]
\end{proposition}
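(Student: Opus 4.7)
The plan is to reduce the claim to an operator-norm concentration statement for $\Sigma_M$ around $\Sigma_\infty$ on $L_2(\rho_X)$, and then conclude by a standard resolvent-identity computation. First I would use the closed form \eqref{eq:minimizer_RERM}, viewing $g_{M,\lambda}$ and $g_{\infty,\lambda}$ simultaneously as elements of $L_2(\rho_X)$ via the integral operator. Rewriting $(\Sigma + \lambda I)^{-1}\Sigma = I - \lambda (\Sigma+\lambda I)^{-1}$ for $\Sigma \in \{\Sigma_M, \Sigma_\infty\}$ yields
\[
g_{M,\lambda} - g_{\infty,\lambda} = \lambda\bigl[(\Sigma_\infty + \lambda I)^{-1} - (\Sigma_M + \lambda I)^{-1}\bigr] g_\rho,
\]
and then the second resolvent identity gives
\[
g_{M,\lambda} - g_{\infty,\lambda} = \lambda (\Sigma_\infty + \lambda I)^{-1}(\Sigma_M - \Sigma_\infty)(\Sigma_M + \lambda I)^{-1} g_\rho.
\]
Since $\|(\Sigma + \lambda I)^{-1}\|_{\mathrm{op}} \leq \lambda^{-1}$ for both operators, taking $L_2(\rho_X)$-norms produces
\[
\|g_{M,\lambda} - g_{\infty,\lambda}\|_{L_2(\rho_X)} \leq \lambda^{-1}\|\Sigma_M - \Sigma_\infty\|_{\mathrm{op}} \,\|g_\rho\|_{L_2(\rho_X)},
\]
with $\|g_\rho\|_{L_2(\rho_X)} \leq 1$ by Assumption \textbf{(A2)}. (If one prefers sharper $\lambda$-dependence, one can insert factors of $\Sigma_\infty^{1/2}$ using \textbf{(A3)}, but for a fixed-$\lambda$, fixed-$\epsilon$ statement this is unnecessary.)

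Next I would establish the operator-norm concentration $\|\Sigma_M - \Sigma_\infty\|_{\mathrm{op}} \to 0$ in probability. The kernel $k_M$ in \eqref{rf:ntk} is an empirical mean of $M$ i.i.d.\ random kernels $k_M(x,x') = \frac{1}{M}\sum_{r=1}^M \psi_{b_r^{(0)}}(x,x')$, and by \textbf{(A1)}--\textbf{(A2)} each summand is uniformly bounded on $\mathrm{supp}(\rho_X) \times \mathrm{supp}(\rho_X)$. This induces the decomposition $\Sigma_M - \Sigma_\infty = \frac{1}{M}\sum_{r=1}^M (T_r - \Sigma_\infty)$, where each $T_r$ is a bounded self-adjoint operator on $L_2(\rho_X)$ with uniformly bounded Hilbert--Schmidt norm. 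A Bernstein-type inequality for sums of i.i.d.\ self-adjoint Hilbert--Schmidt operators then yields $\|\Sigma_M - \Sigma_\infty\|_{\mathrm{op}} \leq \|\Sigma_M - \Sigma_\infty\|_{\mathrm{HS}} \leq c(\delta)/\sqrt{M}$ with probability at least $1-\delta$. Choosing $M_0$ large enough that $c(\delta)/(\lambda \sqrt{M_0}) \leq \sqrt{\epsilon}$ finishes the proof via the bound from the previous paragraph.

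The main technical obstacle is the operator-norm concentration itself, since $\Sigma_M$ and $\Sigma_\infty$ act on $L_2(\rho_X)$ (not on a fixed RKHS), so one needs a concentration inequality for sums of i.i.d.\ random operators in an infinite-dimensional Hilbert space rather than the simpler matrix Bernstein setting. Two clean routes handle this: either work directly with the Hilbert--Schmidt norm of $\Sigma_M - \Sigma_\infty$, which equals the $L_2(\rho_X \otimes \rho_X)$-norm of $k_M - k_\infty$ and reduces to a scalar law of large numbers with a quantitative rate under \textbf{(A1)}--\textbf{(A2)}, or invoke the Pinelis--Bernstein inequality for Hilbert-space-valued random variables. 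Either route, combined with the resolvent identity above, delivers the stated $O(\epsilon)$ bound; this is exactly the content of the cited Lemma~8 in \cite{carratino2018learning}, and the proposition follows by taking $M_0$ accordingly.
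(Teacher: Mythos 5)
Your proof is correct and follows the same strategy the paper uses for analogous bounds: the resolvent identity $(\Sigma_\infty + \lambda I)^{-1} - (\Sigma_M + \lambda I)^{-1} = (\Sigma_\infty + \lambda I)^{-1}(\Sigma_M - \Sigma_\infty)(\Sigma_M + \lambda I)^{-1}$ combined with operator-norm concentration of $\Sigma_M$ around $\Sigma_\infty$ via Bernstein for Hilbert-space-valued random variables (the paper invokes Proposition~3 of Rudi and Rosasco, 2017, for exactly this purpose in the proofs of Lemma~\ref{lemma:noise_bound} and Proposition~\ref{prop:high_prob_bound_on_operators}, and an identical resolvent computation appears as inequality~\eqref{eq:f_bound}). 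The paper itself only cites Lemma~8 of \cite{carratino2018learning} for Proposition~\ref{prop:ntk_approx_error} without reproducing a proof, so your self-contained derivation fills that in correctly; your observation that the crude $\lambda^{-1}$ bound suffices because the statement is for fixed $\lambda,\epsilon$ (and hence (A3) is not truly needed here) is also accurate.
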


\begin{proposition}[\cite{caponnetto2007optimal}]\label{prop:approx_error}
Under Assumption {\bf(A3)}, it follows that 
\[ \|  g_{\infty,\lambda} - g_\rho \|_{\el{2}}^2 \leq \lambda^{2r} \| \Sigma_\infty^{-r}g_\rho\|_{\el{2}}^2. \]
\end{proposition}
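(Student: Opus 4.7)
The plan is to reduce the claim to a scalar spectral estimate for a function of $\Sigma_\infty$. First, by the same variational calculation that produced the representation $g_{M,\lambda} = (\Sigma_M + \lambda I)^{-1} \Sigma_M g_\rho$ in (\ref{eq:minimizer_RERM})---differentiating $\risk(g) + \frac{\lambda}{2}\|g\|_{\hilsp_\infty}^2$ in $\hilsp_\infty$ and using $\expec[Y K_X] = \Sigma_\infty g_\rho$---I would write $g_{\infty,\lambda} = (\Sigma_\infty + \lambda I)^{-1} \Sigma_\infty g_\rho$. Subtracting $g_\rho$ gives
\[
g_{\infty,\lambda} - g_\rho \;=\; (\Sigma_\infty + \lambda I)^{-1}\bigl(\Sigma_\infty - (\Sigma_\infty + \lambda I)\bigr) g_\rho \;=\; -\lambda (\Sigma_\infty + \lambda I)^{-1} g_\rho,
\]
viewed as an identity in $\el{2}$ via the inclusion $\hilsp_\infty \hookrightarrow \el{2}$.

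Next I would invoke Assumption {\bf(A3)}: by definition there exists $G \in \el{2}$ with $g_\rho = \Sigma_\infty^r G$, and we may take $G$ to be the minimal-norm preimage so that $\|G\|_{\el{2}} = \|\Sigma_\infty^{-r} g_\rho\|_{\el{2}}$ (the component orthogonal to $\ker \Sigma_\infty^r$). Substituting gives
\[
\|g_{\infty,\lambda} - g_\rho\|_{\el{2}}^2 \;=\; \bigl\| \lambda (\Sigma_\infty + \lambda I)^{-1} \Sigma_\infty^r G \bigr\|_{\el{2}}^2,
\]
so the whole task reduces to bounding the $\el{2}$-operator norm of $A_\lambda \defeq \lambda (\Sigma_\infty + \lambda I)^{-1} \Sigma_\infty^r$.

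Finally I would diagonalise $A_\lambda$ in the eigenbasis $\{\phi_i\}$ of $\Sigma_\infty$: it acts as multiplication by $\lambda \lambda_i^r / (\lambda_i + \lambda)$. The key scalar estimate is
\[
\frac{\lambda \lambda_i^r}{\lambda_i + \lambda} \;\leq\; \lambda^r \qquad (r \in [1/2, 1]),
\]
equivalent to $\lambda^{1-r}\lambda_i^r \leq \lambda_i + \lambda$, which follows from the weighted Young inequality $\lambda^{1-r}\lambda_i^r \leq (1-r)\lambda + r\lambda_i$ with conjugate exponents $1/(1-r)$ and $1/r$ (the boundary case $r=1$ is trivial). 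Taking a supremum over $i$ yields $\|A_\lambda\|_{\mathrm{op}} \leq \lambda^r$, and squaring gives $\|g_{\infty,\lambda} - g_\rho\|_{\el{2}}^2 \leq \lambda^{2r}\|G\|_{\el{2}}^2 = \lambda^{2r}\|\Sigma_\infty^{-r}g_\rho\|_{\el{2}}^2$, as required. I do not anticipate a substantive obstacle here; the only point demanding minor care is the convention that $\Sigma_\infty^{-r} g_\rho$ denotes a preimage of $g_\rho$ under $\Sigma_\infty^r$, which is exactly the reading already flagged in the remark to Assumption {\bf(A3)}.
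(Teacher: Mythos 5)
Your argument is correct, and it is the standard proof of this fact; the paper itself does not spell out a proof, attributing the proposition directly to \cite{caponnetto2007optimal}, so there is nothing to contrast it against except the reference. The chain
\[
g_{\infty,\lambda} - g_\rho = -\lambda(\Sigma_\infty + \lambda I)^{-1} g_\rho = -\lambda(\Sigma_\infty + \lambda I)^{-1}\Sigma_\infty^r G
\]
together with the scalar spectral estimate $\lambda\lambda_i^r/(\lambda_i+\lambda) \le \lambda^r$ (via $\lambda^{1-r}\lambda_i^r \le (1-r)\lambda + r\lambda_i \le \lambda + \lambda_i$) is exactly how this bound is derived in the source-condition literature, and your handling of the $r=1$ boundary and the zero-eigenvalue case is sound. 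Two small remarks: the estimate actually holds for all $r \in [0,1]$, so the restriction to $[1/2,1]$ in {\bf(A3)} is not needed for this particular inequality (it is needed elsewhere, to ensure $g_\rho\in\hilsp_\infty$); and since the paper's remark on {\bf(A3)} lets $\Sigma_\infty^{-r}g_\rho$ denote \emph{any} preimage, your bound in fact holds verbatim for whichever $G$ is fixed, so singling out the minimal-norm preimage is a harmless but unnecessary refinement.
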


By combining Theorem \ref{thm:asgd_in_approx_ntk}, Proposition \ref{prop:high_prob_bound_on_operators}, \ref{prop:ntk_approx_error}, and \ref{prop:approx_error} with the decomposition (\ref{eq:gen_error_decomposition}), we can establish the convergence rate of reference ASGD to reach $g_\rho$, which is simply the generalization error bound.
\begin{theorem}\label{thm:generalization_error_of_asgd_in_approx_ntk}
Assume the same conditions as in Theorem \ref{thm:asgd_in_approx_ntk}.
Then, for $\forall \epsilon>0$, $\|\Sigma_\infty\|_{\mathrm{op}} \geq \forall \lambda > 0$, and $\forall \delta \in (0,1)$, 
there exists $M_0\in \posintegers$ such that for $\forall M \geq M_0$, the following holds with high probability at least $1-\delta$ over the random choice of random features $\Theta^{(0)}$:
\begin{align*}
\expec\big[ \| \overline{g}^{(T)} - g_{\rho} \|_{\el{2}}^2 \big]
&\leq \epsilon + 3 \lambda^{2r} \| \Sigma_\infty^{-r}g_\rho\|_{\el{2}}^2 \\
&+\frac{24}{T+1}\left( 288 + \frac{1}{\lambda \eta^2(T+1)} \right) \| g_\rho \|_{\hilsp_\infty}^2 \\
&+\frac{24}{T+1}\left( 1 + \|g_\rho \|_{\el{2}}^2
+ 24 \|\Sigma_\infty^{-r} g_\rho \|_{\el{2}}^2 \right) \tr{\Sigma_\infty (\Sigma_\infty + \lambda I)^{-1}},
\end{align*}
where $\overline{g}^{(T)}$ is an iterate obtained by Algorithm \ref{alg:ref_asgd}.
\end{theorem}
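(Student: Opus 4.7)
The plan is to chain the three ingredients listed just before the statement, applied to the decomposition \eqref{eq:gen_error_decomposition}, and then use Proposition \ref{prop:high_prob_bound_on_operators} to strip the residual $M$-dependence from the optimization term so that the final estimate is uniform in $M$. Fix $\delta \in (0,1)$ and $\lambda, \epsilon > 0$ with $\lambda \le \|\Sigma_\infty\|_{\mathrm{op}}$. I would split the probability budget evenly across the three randomized ingredients (Theorem \ref{thm:asgd_in_approx_ntk}, Proposition \ref{prop:high_prob_bound_on_operators}, and Proposition \ref{prop:ntk_approx_error}) via a union bound, and take $M_0$ to be the maximum of the widths that each of these results demands at confidence $1-\delta/3$.

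For the first summand $\|\overline{g}^{(T)} - g_{M,\lambda}\|_{\el{2}}^2$, Theorem \ref{thm:asgd_in_approx_ntk} gives an expectation bound whose right-hand side still depends on the random features through $\Sigma_M$ and $g_{M,\lambda}$. Proposition \ref{prop:high_prob_bound_on_operators} is tailored to discharge this dependence: on the event it provides (where the hypothesis $\lambda \le \|\Sigma_\infty\|_{\mathrm{op}}$ is required for the third inequality), I would substitute $\|(\Sigma_M+\lambda I)^{-1}g_{M,\lambda}\|_{\el{2}}^2 \le 2\lambda^{-1}\|g_\rho\|_{\hilsp_\infty}^2$, $\|(\Sigma_M+\lambda I)^{-1/2}g_{M,\lambda}\|_{\el{2}}^2 \le 2\|g_\rho\|_{\hilsp_\infty}^2$, and $\tr{\Sigma_M(\Sigma_M+\lambda I)^{-1}} \le 3\tr{\Sigma_\infty(\Sigma_\infty+\lambda I)^{-1}}$. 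These substitutions produce precisely the optimization and degree-of-freedom lines of the target bound after multiplying by the factor $3$ from the decomposition.

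For the second summand, Proposition \ref{prop:ntk_approx_error} bounds $\|g_{M,\lambda} - g_{\infty,\lambda}\|_{\el{2}}^2$ by an arbitrary accuracy by enlarging $M$; I would pick this accuracy so that its contribution after the factor $3$ from the decomposition equals the leading $\epsilon$ in the target. For the third summand, Proposition \ref{prop:approx_error} supplies the deterministic regularization-bias estimate $\lambda^{2r}\|\Sigma_\infty^{-r}g_\rho\|_{\el{2}}^2$, contributing $3\lambda^{2r}\|\Sigma_\infty^{-r}g_\rho\|_{\el{2}}^2$. Taking expectation over the stochastic gradient noise (which affects only the first summand) and intersecting the three high-probability events in $\Theta^{(0)}$ yields the claim.

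The proof is essentially a careful accounting exercise, so there is no deep technical hurdle beyond those already packaged in the cited results. The only delicate points I foresee are keeping the union-bound accounting clean so that a single event of probability $\ge 1-\delta$ supports all three substitutions simultaneously, and arranging the order of quantifiers correctly: since the $M$-threshold coming out of Proposition \ref{prop:ntk_approx_error} depends on the target accuracy $\epsilon$ (and on $\lambda, \delta$), one must first fix $\epsilon$, then extract the three $M_0^{(i)}$, and finally declare $M_0 := \max_i M_0^{(i)}$ before asserting the final high-probability bound.
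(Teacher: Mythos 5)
Your proposal follows exactly the paper's strategy: the same three-part decomposition~\eqref{eq:gen_error_decomposition}, the same use of Theorem~\ref{thm:asgd_in_approx_ntk} plus Proposition~\ref{prop:high_prob_bound_on_operators} to strip the $M$-dependence from the optimization term, Proposition~\ref{prop:ntk_approx_error} for the random-feature approximation error, Proposition~\ref{prop:approx_error} for the regularization bias, and a union bound over the $\Theta^{(0)}$-events with $M_0$ taken as the maximum of the induced thresholds. The only caveat is arithmetic: carrying the factor~$3$ from the decomposition through the substituted degree-of-freedom term yields $72/(T+1)$ rather than the $24/(T+1)$ written in the theorem, so your claim that the substitutions produce ``precisely'' that line is slightly optimistic; this looks like a benign constant slip in the paper (which is harmless, since Theorem~\ref{thm:main_theorem_1} absorbs it into the universal constant $\alpha$).
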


\subsection{Convergence Rates of ASGD for Neural Networks}
As explained earlier, the generalization bound for the reference ASGD is inherited by that for two-layer neural networks through Proposition \ref{prop:equiv_asgd} with the following decomposition: for an iterate $\overline{\Theta}_{T}$ obtained by Algorithm \ref{alg:asgd},
\begin{align*} 
\| g_{\overline{\Theta}^{(T)}} - g_\rho \|_{\el{2}} 
\leq \| g_{\overline{\Theta}^{(T)}} - \overline{g}^{(T)} \|_{\el{2}} + \| \overline{g}^{(T)} - g_\rho \|_{\el{2}}.
\end{align*}
That is, these two terms are bounded by Proposition \ref{prop:equiv_asgd} and Theorem \ref{thm:generalization_error_of_asgd_in_approx_ntk} under Assumption {\bf(A1)-(A3)}, resulting in Theorem \ref{thm:main_theorem_1}, which exhibits comparable generalization error to Theorem \ref{thm:generalization_error_of_asgd_in_approx_ntk} as long as the network width $M$ is sufficiently large.

Theorem \ref{thm:main_theorem_1} immediately leads to the fast convergence rate in Corollary \ref{cor:fast_rate} by setting $\eta_t=\eta = O(1)$ satisfying $4 (6 + \lambda) \eta \leq 1$ and $\lambda=T^{-\beta/(2r\beta + 1)}$ with the bounds on $\|g_\rho\|_{\hilsp_\infty}^2$, $\|g_\rho\|_{\el{2}}^2$, and the degree of freedom.
Because $\beta$ in Assumption {\bf(A4)} controls the complexity of the hypothesis space $\hilsp_\infty$, it derives a bound on the degree of freedom, as shown in \cite{caponnetto2007optimal}:
\[ \tr{\Sigma_\infty (\Sigma_\infty + \lambda I)^{-1}} = O(\lambda^{-1/\beta}). \]
In addition, the boundedness of $\|\Sigma_\infty\|_{\mathrm{op}} \leq O(1)$ gives 
\begin{align*}
&\|g_\rho\|_{\hilsp_\infty} = \|\Sigma_\infty^{r-\frac{1}{2}} \Sigma_\infty^{-r}g_\rho \|_{\el{2}} 
\leq O\left( \|\Sigma_\infty^{-r}g_\rho\|_{\el{2}}\right),\\
&\|g_\rho\|_{\el{2}} \leq \|\Sigma_\infty^r \Sigma_\infty^{-r}g_\rho\|_{\el{2}} 
\leq O\left( \|\Sigma_\infty^{-r}g_\rho\|_{\el{2}} \right).    
\end{align*}

This finishes the proof of Corollary \ref{cor:fast_rate}.
\section{Proof of Proposition \ref{prop:equiv_asgd}}
We first show the Proposition \ref{prop:equiv_asgd} that says the equivalence between averaged stochastic gradient descent for two-layer neural networks and that in an RKHS associated with $k_M$.
\begin{proof}{Proof of Proposition \ref{prop:equiv_asgd}}
\paragraph{Bound the growth of $\| g_{\Theta^{(t)}}\|_{L_\infty (\rho_X)}$.}
We first show that there exist increasing functions $d(t)$ and $M(t)$ depending only on $t$ uniformly over the choice of the history of examples $(x_t,y_t)_{t=1}^\infty$ used in Algorithms such that $\| g_{\Theta^{(s)}}\|_{L_\infty (\rho_X)} \leq d(t)$ for $\forall s \leq t$ when $M \geq M(t)$.
We show this statement by the induction. 

Without loss of generality, we assume that there is no bias term, $\|b_r^{(0)}\|_2=1$, and $\mathrm{supp}(\rho_X) \subset \{ x \in \realsp^{d+1} \mid \|x\|_2 \leq 2\}$ by setting $x \leftarrow (x,\gamma)$ (where $\gamma \in (0,1)$).
Hence, we consider the update only for parameters $a$ and $B$.
The above statement clearly holds for $t=0$.
Thus, we assume it holds for $t$. 
We recall the specific update rules of the stochastic gradient descent:
\begin{align}
a_r^{(t+1)} - a_r^{(0)} &= (1-\eta \lambda)(a_r^{(t)} - a_r^{(0)}) - \frac{\eta}{\sqrt{M}} ( g_{\Theta^{(t)}}(x_t) - y_t ) \sigma(b_r^{(t)\top}x_t), \label{eq:sgd_a}\\
b_r^{(t+1)} - b_r^{(0)} &= (1-\eta \lambda)(b_r^{(t)} - b_r^{(0)}) - \frac{\eta}{\sqrt{M}}( g_{\Theta^{(t)}}(x_t) - y_t )a_r^{(t)}\sigma'(b_r^{(t)\top}x_t)x_t. \label{eq:sgd_b}
\end{align}
Here, let us consider $\forall M \geq M(t)$. 
Set $d_b^M(t) = \max_{s \leq t, 1\leq r \leq M} \| b_r^{(s)}\|_2$.
Then, by expanding equation (\ref{eq:sgd_a}), we get
\begin{align}
|a_r^{(t+1)} - a_r^{(0)}|
&\leq |a_r^{(t)} - a_r^{(0)}| + \frac{\eta}{\sqrt{M}} (d(t)+1)(1+2d_b^{M}(t)) \notag \\
&\leq \frac{\eta}{\sqrt{M}} \sum_{s=0}^{t}(d(s)+1)(1+2d_b^{M}(t)) \notag \\
&\leq \frac{\eta (t+1)}{\sqrt{M}} (d(t)+1)(1+2d_b^{M}(t)), \label{eq:a_reccurence}
\end{align}
where we used $\|\sigma(u)\| \leq 1 + u$ and $|y_t| \leq 1$.
As for the term $|a_r^{(s)}|$ ($s \leq t+1$), from the similar augment for $s$ and the monotonicity,
we have for $s \leq t+1$, 
\begin{align*}
|a_r^{(s)}| 
&\leq 1 + |a_r^{(s)}-a_r^{(0)}| \\
&\leq 1 + \frac{\eta (t+1)}{\sqrt{M}}(d(t)+1)(1+2d_b^{M}(t)) \\
&\leq 1 + \eta (t+1) (d(t)+1)(1+2d_b^{M}(t)).
\end{align*}

We next give a bound on $\|b_r^{(t+1)}-b_r^{(0)}\|_2$. 
By expanding equation (\ref{eq:sgd_b}), we get 
\begin{align}
\|b_r^{(t+1)} - b_r^{(0)}\|_2 
&\leq \|b_r^{(t)} - b_r^{(0)}\|_2 + \frac{4\eta}{\sqrt{M}}|a_r^{(t)}| (d(t) + 1) \notag \\
&\leq \frac{4\eta}{\sqrt{M}}  \sum_{s=0}^{t}|a_r^{(s)}| (d(s)+1) \notag \\
&\leq \frac{4\eta (t+1)}{\sqrt{M}}  (d(t) + 1) \left(1 + \frac{\eta (t+1)}{\sqrt{M}} (d(t)+1)(1+2d_b^{M}(t))\right), \label{eq:b_reccurence}
\end{align}
where we used $\|\sigma'\|_\infty \leq 2$ and $\|x_t\|_2 \leq 2$.
Here, we evaluate $d_b^{M}(t)$. 
From the similar augment for $s \leq t$, the monotonicity, and $\|b_r^{(s)}\|_2 \leq 1 + \| b_r^{(s)}-b_r^{(0)}\|_2$, 
we get
\[ d_b^{M}(t) \leq 1 +  \frac{4\eta (t+1)}{\sqrt{M}}  (d(t) + 1) \left(1 + \frac{\eta (t+1)}{\sqrt{M}} (d(t)+1)(1+2d_b^{M}(t))\right). \]

Let $M'(t+1)$ be a positive integer depending on $t$ and $d(t)$ such that $\frac{t+1}{\sqrt{M}}(d(t)+1) \leq \frac{1}{4}$.
Let us reconsider $\forall M \geq M'(t+1)$.
Then, since $\eta \leq 1$, we have 
\[ d_b^{M}(t) \leq \frac{5}{2},\ \ |a_r^{(s)}| \leq \frac{5}{2}\ \ \ \ (\forall s \leq t+1). \]

From the derivation of (\ref{eq:a_reccurence}) and (\ref{eq:b_reccurence}) and since $\eta\leq 1$, we have for $0\leq \forall s \leq t+1$, 
\begin{align}
|a_r^{(s)} - a_r^{(0)}| \leq \frac{d_1(t+1)}{\sqrt{M}}, \hspace{3mm}
\|b_r^{(s)} - b_r^{(0)}\|_2 \leq \frac{d_2(t+1)}{\sqrt{M}}, \label{eq:param_bound}
\end{align}
where $d_1(t+1)$ and $d_2(t+1)$ are set to
\begin{align*}
d_1(t+1) \defeq 6(t+1)(d(t)+1), \hspace{3mm}
d_2(t+1) \defeq 10(t+1)(d(t)+1).
\end{align*}

We next bound $|g_{\Theta^{(t+1)}}(x)|$ for $x \in \forall \mathrm{supp}(\rho_X)$ as follows. Since $g_{\Theta^{(0)}}\equiv 0$, 
\begin{align*}
|g_{\Theta^{(t+1)}}(x)| 
& = |g_{\Theta^{(t+1)}}(x) - g_{\Theta^{(0)}}(x) | \\
&\leq \frac{1}{\sqrt{M}}\sum_{r=1}^M \left\{ \left| (a_r^{(t+1)} - a_r^{(0)}) \sigma(b_r^{(0)\top} x) \right| 
+ \left| a_r^{(t+1)}(\sigma(b_r^{(t+1)\top} x)) - \sigma(b_r^{(0)\top} x) \right| \right\} \\
&\leq \frac{1}{\sqrt{M}}\sum_{r=1}^M \left\{ 2\left| a_r^{(t+1)} - a_r^{(0)} \right| 
+ 4 \left| a_r^{(t+1)} \right| \left\| b_r^{(t+1)} - b_r^{(0)} \right\| \right\} \\
&\leq 2d_1(t+1) + 10d_2(t+1).
\end{align*}

In summary, by setting $M(t+1) = \max\{M(t), 16(t+1)^2 (d(t)+1)^2 \}$ and $d(t+1) = 2d_1(t+1) + 10d_2(t+1)$, 
we get $\|g_{\Theta^{(t+1)}}\|_{L_\infty(\rho_X)} \leq d(t+1)$ when $M \geq M(t+1)$.
We note that from the above construction, $d(t), d_1(t), d_2(t)$ depend only on $t$ and inequalities (\ref{eq:param_bound}) are always hold for $\forall t \in \posintegers$ when $M \geq M(t+1)$.

\paragraph{Linear approximation of the model.}
For a given $T \in \posintegers$, we consider $\forall M \geq M(T)$ and define the neighborhood of $\Theta^{(0)}=(a_r^{(0)},b_r^{(0)})_{r=1}^M$:
\[ B_T(\Theta^{(0)}) \defeq \left\{ (a_r,b_r)_{r=1}^M \in (\realsp \times \realsp^{d+1})^M \mid |a_r| \leq \frac{5}{2},\ 
|a_r - a_r^{(0)}| \leq \frac{d_1(T)}{\sqrt{M}}, \ 
\|b_r - b_r^{(0)}\|_2 \leq \frac{d_2(T)}{\sqrt{M}} \right\}. \]
From Taylor's formula 
$|\sigma(b_r^\top x) - \sigma(b_r^{(0)\top} x) - \sigma'(b_r^{(0)\top} x)(b_r-b_r^{(0)})^\top x| \leq 2\|\sigma''\|_\infty \|b_r-b_r^{(0)}\|_2^2$ and the smoothness of $\sigma$, 
we get for $\Theta \in B_T(\Theta^{(0)})$ and $x \in \mathrm{supp}(\rho_X)$, 
\begin{align}
\bigl| a_r \sigma(b_r^\top x) &- ( a_r^{(0)}\sigma(b_r^{(0)\top} x) + (a_r - a_r^{(0)})\sigma(b_r^{(0)\top} x) + a_r^{(0)}\sigma'(b_r^{(0)\top} x) (b_r-b_r^{(0)})^\top x ) \bigr| \notag \\
&\leq 4|a_r - a_r^{(0)}|\| b_r - b_r^{(0)}\|_2 + 2 C|a_r|\|b_r - b_r^{(0)}\|_2^2 \notag\\
&\leq \frac{2d_1(T)d_2(T)}{M} + \frac{5C d_2^2(T)}{M}. \label{eq:taylor_expansion}
\end{align}
We here define a linear model:
\[ h_{\Theta}(x) \defeq \frac{1}{\sqrt{M}}\sum_{r=1}^M \left((a_r - a_r^{(0)})\sigma(b_r^{(0)\top} x) + a_r^{(0)}\sigma'(b_r^{(0)\top} x) (b_r-b_r^{(0)})^\top x \right). \]
By taking the sum of (\ref{eq:taylor_expansion}) over $r\in \{1,\ldots,M\}$ and by $g_{\Theta^{(0)}}\equiv 0$,
\begin{align*}
\left |g_\Theta(x) - h_{\Theta}(x) \right|
&\leq \frac{1}{\sqrt{M}} \sum_{r=1}^M \left( \frac{2d_1(T)d_2(T)}{M} + \frac{5C d_2^2(T)}{M} \right) \\
&\leq \frac{1}{\sqrt{M}} \left( 2d_1(T)d_2(T) + 5C d_2^2(T) \right). 
\end{align*}
We denote $d_3(T) \defeq 2d_1(T)d_2(T) + 5C d_2^2(T)$.
Since iterates $(\Theta^{(t)})_{t=0}^T$ obtained by Algorithm \ref{alg:asgd} are contained in $B_T(\Theta^{(0)})$, weighted averages $(\overline{\Theta}^{(t)})_{t=0}^T$ are also contained in $B_T(\Theta^{(0)})$.
Thus, we get for $\forall t \in \{1,\ldots,T\}$,
\begin{equation} 
\left |g_{\Theta^{(t)}}(x) - h_{\Theta^{(t)}}(x) \right| 
\leq \frac{d_3(T)}{\sqrt{M}}, \hspace{3mm}
\left |g_{\overline{\Theta}^{(t)}}(x) - h_{\overline{\Theta}^{(t)}}(x) \right| 
\leq \frac{d_3(T)}{\sqrt{M}}. \label{eq:linearization}
\end{equation}

\paragraph{Recursion of $h_{\Theta^{(t)}}$ using the random feature approximation of NTK. }
We here derive a recursion of $h_{\Theta^{(t)}}$ using $k_M$.
From the updates (\ref{eq:sgd_a}) and (\ref{eq:sgd_b}), we have
\begin{align}
h_{\Theta^{(t+1)}}(x)
&= \frac{1}{\sqrt{M}}\sum_{r=1}^M \left((a_r^{(t+1)}- a_r^{(0)})\sigma(b_r^{(0)\top} x) + a_r^{(0)}\sigma'(b_r^{(0)\top} x) (b_r^{(t+1)}-b_r^{(0)})^\top x \right) \notag\\
&= (1-\eta\lambda)h_{\Theta^{(t)}}(x) 
- \frac{\eta}{M}\sum_{r=1}^M  ( g_{\Theta^{(t)}}(x_t) - y_t ) \sigma(b_r^{(t)\top}x_t)\sigma(b_r^{(0)\top} x) \notag\\
&- \frac{\eta}{M}\sum_{r=1}^M a_r^{(0)}\sigma'(b_r^{(0)\top} x) ( g_{\Theta^{(t)}}(x_t) - y_t )a_r^{(t)}\sigma'(b_r^{(t)\top}x_t)x_t^\top x. \label{eq:linear_update}
\end{align}
Note that for $t \in \{0,\ldots,T\}$, 
\begin{align*} 
&\hspace{-5mm} | ( g_{\Theta^{(t)}}(x_t) - y_t ) \sigma(b_r^{(t)\top}x_t) 
- ( h_{\Theta^{(t)}}(x_t) - y_t ) \sigma(b_r^{(0)\top}x_t)| \\
&\leq | ( g_{\Theta^{(t)}}(x_t) - h_{\Theta^{(t)}}(x_t) )\sigma(b_r^{(0)\top}x_t) |
+ | (g_{\Theta^{(t)}}(x_t)-y_t) (\sigma(b_r^{(t)\top}x_t)  - \sigma(b_r^{(0)\top}x_t) )| \\
&\leq \frac{2d_3(T)}{\sqrt{M}} + 4(d(T)+1)\|b_r^{(t)} - b_r^{(0)}\|_2 \\
&\leq \frac{2}{\sqrt{M}}\left( d_3(T) + 2(d(T)+1)d_2(T)\right),
\end{align*}
and
\begin{align*}
&| ( g_{\Theta^{(t)}}(x_t) - y_t )a_r^{(t)}\sigma'(b_r^{(t)\top}x_t)
- ( h_{\Theta^{(t)}}(x_t) - y_t )a_r^{(0)}\sigma'(b_r^{(0)\top}x_t) | \\
& \leq 
| ( g_{\Theta^{(t)}}(x_t) - h_{\Theta^{(t)}}(x_t) )a_r^{(0)}\sigma'(b_r^{(0)\top}x_t) |
+ | ( g_{\Theta^{(t)}}(x_t) - y_t )(a_r^{(t)}\sigma'(b_r^{(t)\top}x_t) -  a_r^{(0)}\sigma'(b_r^{(0)\top}x_t)) | \\
& \leq 
\frac{2d_3(T)}{\sqrt{M}}
+ ( d(T) + 1 )|a_r^{(t)}\sigma'(b_r^{(t)\top}x_t) -  a_r^{(0)}\sigma'(b_r^{(0)\top}x_t)) | \\
& \leq 
\frac{2d_3(T)}{\sqrt{M}}
+ ( d(T) + 1 )\left\{ |a_r^{(0)}(\sigma'(b_r^{(t)\top}x_t) - \sigma'(b_r^{(0)\top}x_t) )| 
+ |(a_r^{(t)} - a_r^{(0)})\sigma'(b_r^{(t)\top}x_t) | \right\} \\
& \leq 
\frac{2d_3(T)}{\sqrt{M}}
+ 2( d(T) + 1 )\left\{ C\| b_r^{(t)} - b_r^{(0)} \|_2
+ |a_r^{(t)} - a_r^{(0)} | \right\} \\
& \leq 
\frac{2d_3(T)}{\sqrt{M}}
+ \frac{ 2(d(T) + 1) }{\sqrt{M}}\left( d_1(T) + Cd_2(T) \right) \\
&= \frac{2}{\sqrt{M}}\left( d_3(T) + (d(T)+1)(d_1(T)+Cd_2(T))\right).
\end{align*}
Plugging these two inequalities into (\ref{eq:linear_update}), we have $\forall t \in \{1,\ldots,T-1\}$,
\begin{align*}
h_{\Theta^{(t+1)}}(x)
&\leq (1-\eta\lambda)h_{\Theta^{(t)}}(x) 
- \frac{\eta}{M}\sum_{r=1}^M  ( h_{\Theta^{(t)}}(x_t) - y_t ) \sigma(b_r^{(0)\top}x_t)\sigma(b_r^{(0)\top} x) \notag\\
&- \frac{\eta}{M}\sum_{r=1}^M \sigma'(b_r^{(0)\top} x) ( h_{\Theta^{(t)}}(x_t) - y_t )\sigma'(b_r^{(0)\top}x_t)x_t^\top x \notag \\
&+ \frac{2\eta}{\sqrt{M}}\left( 2d_3(T) + (d(T)+1)\left(d_1(T) + (C+2)d_2(T)\right)\right) \notag \\
&= (1-\eta\lambda)h_{\Theta^{(t)}}(x) 
- \eta  ( h_{\Theta^{(t)}}(x_t) - y_t )
\frac{1}{M}\sum_{r=1}^M  \left( \sigma(b_r^{(0)\top}x_t)\sigma(b_r^{(0)\top} x)
+ \sigma'(b_r^{(0)\top} x) \sigma'(b_r^{(0)\top}x_t)x_t^\top x \right) \notag \\
&+ \frac{2\eta}{\sqrt{M}}\left( 2d_3(T) + (d(T)+1)\left(d_1(T) + (C+2)d_2(T)\right) \right) \notag \\
 &= (1-\eta\lambda)h_{\Theta^{(t)}}(x) 
- \eta  ( h_{\Theta^{(t)}}(x_t) - y_t )k_M(x,x_t) + \frac{\eta}{\sqrt{M}}d_4(T), 
\end{align*}
where $d_4(T) = 2d_3(T) + (d(T)+1)\left(d_1(T) + (C+2)d_2(T)\right)$. 
Clearly, the inverse inequality also holds:
\begin{align*}
h_{\Theta^{(t+1)}}(x)
\geq (1-\eta\lambda)h_{\Theta^{(t)}}(x) 
- \eta  ( h_{\Theta^{(t)}}(x_t) - y_t )k_M(x,x_t) - \frac{\eta}{\sqrt{M}}d_4(T).
\end{align*}
Thus, we get
\begin{equation}
\left| h_{\Theta^{(t+1)}}(x) - (1-\eta\lambda)h_{\Theta^{(t)}}(x) + \eta  ( h_{\Theta^{(t)}}(x_t) - y_t )k_M(x,x_t) \right| 
\leq \frac{\eta}{\sqrt{M}}d_4(T). \label{eq:linear_model_recursion}
\end{equation}

\paragraph{Equivalence between Algorithm \ref{alg:asgd} and \ref{alg:ref_asgd}. }
We provide a bound between recursions of Algorithm \ref{alg:ref_asgd} and (\ref{eq:linear_model_recursion}).
Noting that $h_{\Theta^{(0)}} \equiv g^{(0)} \equiv 0$, we have for $\forall t \in \{0,\ldots,T-1\}$,
\begin{align*}
| h_{\Theta^{(t+1)}}(x) - g^{(t+1)}(x)|
\leq (1-\eta \lambda) | h_{\Theta^{(t)}}(x) - g^{(t)}(x)|
+ \eta | h_{\Theta^{(t)}}(x_t) - g^{(t)}(x_t) |k_M(x,x_t) 
+ \frac{\eta}{\sqrt{M}}d_4(T).
\end{align*}
Noting $\|k_M\|_{L_\infty(\rho_X)} \leq 12$ and taking a supremum over $x, x_t \in \mathrm{supp}(\rho_X)$ in both sides, we have
\begin{align*}
\| h_{\Theta^{(t+1)}} - g^{(t+1)} \|_{L_\infty(\rho_X)}
&\leq (1-\eta \lambda) \| h_{\Theta^{(t)}} - g^{(t)}\|_{L_\infty(\rho_X)}
+ \eta \| h_{\Theta^{(t)}} - g^{(t)}\|_{L_\infty(\rho_X)} \|k_M\|_{L_\infty(\rho_X)} 
+ \frac{\eta}{\sqrt{M}}d_4(T) \\
&\leq (1-\eta \lambda + 12\eta ) \| h_{\Theta^{(t)}} - g^{(t)}\|_{L_\infty(\rho_X)}
+ \frac{\eta}{\sqrt{M}}d_4(T) \\
&\leq \sum_{s=0}^{t}(1+12\eta)^{t-s} \frac{\eta}{\sqrt{M}}d_4(T) \\
&\leq \frac{T}{\sqrt{M}} (1+12\eta)^{T}d_4(T).
\end{align*}

Since $h_\Theta$ is a linear model, we have $h_{\overline{\Theta}^{(T)}} = \sum_{t=0}^T \alpha_t h_{\Theta^{(t)}}$ and
\begin{align*}
\| h_{\overline{\Theta}^{(T)}} - \overline{g}^{(T)} \|_{L_\infty(\rho_X)}   
\leq \sum_{t=0}^T \alpha_t \| h_{\Theta^{(t)}} - g^{(t)} \|_{L_\infty(\rho_X)}  
\leq \frac{T}{\sqrt{M}} (1+12\eta)^{T}d_4(T).
\end{align*}
Combining this inequality with (\ref{eq:linearization}), we finally have 
\begin{align*}
\| g_{\overline{\Theta}^{(T)}} - \overline{g}^{(T)} \|_{L_\infty(\rho_X)} 
&\leq \| g_{\overline{\Theta}^{(T)}}- h_{\overline{\Theta}^{(T)}} \|_{L_\infty(\rho_X)}  
+ \| h_{\overline{\Theta}^{(T)}} - \overline{g}^{(T)} \|_{L_\infty(\rho_X)} \\
&\leq \frac{1}{\sqrt{M}}(d_3(T) + 13^{T}T d_4(T) ). 
\end{align*} 
Because $(d_3(T) + 13^{T}T d_4(T) )$ depends only on $T$ and $C$ from the construction, 
$\| g_{\Theta^{(T)}} - \overline{g}^{(T)} \|_{L_\infty(\rho_X)} \rightarrow 0$ as $M \rightarrow \infty$.
This finishes the proof of Proposition \ref{prop:equiv_asgd}.
\end{proof}

\section{Proof of Theorem \ref{thm:asgd_in_approx_ntk}}
In this section, we give the proof of the convergence theory for the reference ASGD (Algorithm \ref{alg:ref_asgd}).
We introduce an auxiliary result for proving Theorem \ref{thm:asgd_in_approx_ntk}.
\begin{lemma}\label{lemma:noise_bound}
Suppose Assumption ({\bf A1}), ({\bf A2}), and ({\bf A3}) hold. 
Set $\xi \defeq YK_{M,X} - (K_{M,X}\tensor_{\hilsp_M}K_{M,X} + \lambda I) g_{M,\lambda}$.
Then, for $\forall \lambda > 0$ and $\forall \delta \in (0,1)$ there exists $M_0 > 0$ such that 
for $\forall M \geq M_0$ the following holds with high probability at least $1-\delta$: 
\[ \expec_{(X,Y)\sim \rho} [ \xi \tensor_{\hilsp_M} \xi] \preccurlyeq 
2( 1 + \|g_\rho \|_{\el{2}}^2
+ 24 \|\Sigma_\infty^{-r} g_\rho \|_{\el{2}}^2)  \Sigma_M. \]
\end{lemma}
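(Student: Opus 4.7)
My strategy has three parts: a centering argument to collapse the tensor-product expectation, a reproducing-property step factoring a scalar multiple of $\Sigma_M$, and a random-feature high-probability bound on $\|g_{M,\lambda}\|_{\hilsp_M}$.

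For centering, the closed form (\ref{eq:minimizer_RERM}) gives $(\Sigma_M + \lambda I)\,g_{M,\lambda} = \Sigma_M\,g_\rho = \expec[Y K_{M,X}]$, so $\expec[(Y - g_{M,\lambda}(X))K_{M,X}] = \lambda\,g_{M,\lambda}$ and $\expec[\xi] = 0$. Writing $\xi = A - \lambda g_{M,\lambda}$ with $A := (Y - g_{M,\lambda}(X))K_{M,X}$ and expanding $\xi \tensor_{\hilsp_M} \xi$ into four pieces, the two cross terms contribute $-2\lambda^2\,g_{M,\lambda}\tensor_{\hilsp_M} g_{M,\lambda}$, which, together with the $+\lambda^2\,g_{M,\lambda}\tensor_{\hilsp_M} g_{M,\lambda}$ from the deterministic square, leave a negative-semidefinite correction $-\lambda^2\,g_{M,\lambda}\tensor_{\hilsp_M} g_{M,\lambda}$. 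Hence
\[
\expec[\xi \tensor_{\hilsp_M} \xi] \;\preccurlyeq\; \expec\!\left[(Y - g_{M,\lambda}(X))^2\,K_{M,X}\tensor_{\hilsp_M} K_{M,X}\right].
\]

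Next I would split $(Y - g_{M,\lambda}(X))^2 \leq 2Y^2 + 2\,g_{M,\lambda}(X)^2$. The $Y^2$ piece contributes $\preccurlyeq 2\Sigma_M$ since $|Y| \leq 1$. For the second piece, the reproducing property and the uniform bound $k_M(x,x) \leq 12$ on $\mathrm{supp}(\rho_X)$ (following from (A1)--(A2)) give the pointwise estimate $g_{M,\lambda}(X)^2 \leq 12\,\|g_{M,\lambda}\|_{\hilsp_M}^2$, so
\[
\expec\!\left[g_{M,\lambda}(X)^2\,K_{M,X}\tensor_{\hilsp_M} K_{M,X}\right] \;\preccurlyeq\; 12\,\|g_{M,\lambda}\|_{\hilsp_M}^2\,\Sigma_M,
\]
yielding the intermediate bound $\expec[\xi \tensor_{\hilsp_M} \xi] \preccurlyeq (2 + 24\,\|g_{M,\lambda}\|_{\hilsp_M}^2)\,\Sigma_M$.

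The remaining step, and the main obstacle, is a high-probability bound of the form $\|g_{M,\lambda}\|_{\hilsp_M}^2 \leq C\,\|g_\rho\|_{\hilsp_\infty}^2$. The plan is to build a random-feature lift $\tilde g \in \hilsp_M$ of $g_\rho$ (coefficients obtained by evaluating the integral representation of $g_\rho \in \hilsp_\infty$ at the sampled $b_r^{(0)}$), use it as a competitor in the regularized-ERM optimality $\risk(g_{M,\lambda}) + \tfrac{\lambda}{2}\|g_{M,\lambda}\|_{\hilsp_M}^2 \leq \risk(\tilde g) + \tfrac{\lambda}{2}\|\tilde g\|_{\hilsp_M}^2$, and apply a Bernstein-type concentration (in the spirit of the proof of Proposition \ref{prop:high_prob_bound_on_operators} and the random-feature arguments of \cite{rudi2017generalization,carratino2018learning}) to show $\|\tilde g\|_{\hilsp_M}^2 \leq 2\|g_\rho\|_{\hilsp_\infty}^2$ and $\|\tilde g - g_\rho\|_{\el{2}}^2 \leq \lambda\,\|g_\rho\|_{\hilsp_\infty}^2$ on a $1-\delta$ event once $M \geq M_0(\lambda, \delta)$. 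This is genuinely new work because Proposition \ref{prop:high_prob_bound_on_operators} only controls the weighted norms $\|(\Sigma_M + \lambda I)^{-s}g_{M,\lambda}\|_{\el{2}}$ for $s \in \{1/2, 1\}$, which are strictly weaker than $\|\cdot\|_{\hilsp_M}$. Finally the source condition (A3) with $r \geq 1/2$ and the boundedness of $\Sigma_\infty$ under (A1)--(A2) give $\|g_\rho\|_{\hilsp_\infty}^2 \leq \|\Sigma_\infty\|_{\mathrm{op}}^{2r-1}\|\Sigma_\infty^{-r}g_\rho\|_{\el{2}}^2 \leq C'\|\Sigma_\infty^{-r}g_\rho\|_{\el{2}}^2$; chaining everything produces $\expec[\xi \tensor_{\hilsp_M} \xi] \preccurlyeq 2(1 + 24\|\Sigma_\infty^{-r}g_\rho\|_{\el{2}}^2)\Sigma_M$ after absorbing universal constants, which is dominated by the stated bound since the additional $\|g_\rho\|_{\el{2}}^2 \geq 0$ term only loosens the right-hand side.
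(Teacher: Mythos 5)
Your centering step and the reduction to $\expec[(Y-g_{M,\lambda}(X))^2 K_{M,X}\tensor_{\hilsp_M}K_{M,X}]$ are exactly the paper's argument. Where you diverge is in controlling the remaining scalar: the paper bounds $\|g_{M,\lambda}\|_{L_\infty(\rho_X)}$ \emph{directly}, by writing $g_{M,\lambda}=\Sigma_M(\Sigma_M+\lambda I)^{-1}g_\rho$ and $g_{\infty,\lambda}=\Sigma_\infty(\Sigma_\infty+\lambda I)^{-1}g_\rho$, showing $\|g_{M,\lambda}-g_{\infty,\lambda}\|_{L_\infty}\leq \tfrac{1}{\lambda}\|k_\infty-k_M\|_{L_\infty}\|g_\rho\|_{\el{2}}+\tfrac{12}{\lambda^2}\|\Sigma_\infty-\Sigma_M\|_{\mathrm{op}}\|g_\rho\|_{\el{2}}$ (small for large $M$ by uniform LLN and operator Bernstein), and bounding $\|g_{\infty,\lambda}\|_{L_\infty}\leq 2\sqrt{3}\,\|\Sigma_\infty^{-r}g_\rho\|_{\el{2}}$ via the source condition. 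You instead pass through $g_{M,\lambda}(X)^2\leq 12\|g_{M,\lambda}\|_{\hilsp_M}^2$ and then try to control the $\hilsp_M$-norm by a random-feature competitor in the regularized ERM. That route can be made to work, but it is heavier than needed: from the closed form, $\|g_{M,\lambda}\|_{\hilsp_M}=\|\Sigma_M^{1/2}(\Sigma_M+\lambda I)^{-1}g_\rho\|_{\el{2}}\leq\|(\Sigma_M+\lambda I)^{-1/2}g_\rho\|_{\el{2}}$, and the operator inequality $(\Sigma_M+\lambda I)^{-1}\preccurlyeq 2(\Sigma_\infty+\lambda I)^{-1}$ already proved (via Bernstein) in the proof of Proposition \ref{prop:high_prob_bound_on_operators} gives $\|g_{M,\lambda}\|_{\hilsp_M}^2\leq 2\|g_\rho\|_{\hilsp_\infty}^2$ in two lines, with no lift construction and no measurability/moment issues about the representer $w(b)$.

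The one genuine gap is the constant. Your chain accumulates a factor $12$ from $k_M(X,X)\leq 12$, a factor from the competitor argument (optimality of $g_{M,\lambda}$ yields $\|g_{M,\lambda}\|_{\hilsp_M}^2\leq 3\|g_\rho\|_{\hilsp_\infty}^2$ under your stated hypotheses on $\tilde g$, not $2$), and the factor $\|\Sigma_\infty\|_{\mathrm{op}}^{2r-1}$, which is bounded but not by $1$. So ``absorbing universal constants'' does not land on $2(1+\|g_\rho\|_{\el{2}}^2+24\|\Sigma_\infty^{-r}g_\rho\|_{\el{2}}^2)$; it lands on $C(1+\|\Sigma_\infty^{-r}g_\rho\|_{\el{2}}^2)$ for a strictly larger universal $C$. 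Your closing remark that the extra $\|g_\rho\|_{\el{2}}^2$ term ``only loosens the right-hand side'' is true but does not rescue this: loosening the target does not compensate for your coefficient on $\|\Sigma_\infty^{-r}g_\rho\|_{\el{2}}^2$ exceeding $48$. Since the lemma feeds Theorem \ref{thm:asgd_in_approx_ntk} with these explicit constants, you should either track them (the paper's pointwise comparison of $g_{M,\lambda}$ with $g_{\infty,\lambda}$ is what keeps them as small as stated) or restate the lemma with an unspecified universal constant, which is harmless for the final results but is not the statement as written.
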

\begin{proof}
Since $\xi = (Y - g_{M,\lambda}(X))K_{M,X} - \lambda g_{M,\lambda}$, we get
\begin{align*}
\expec[ \xi \tensor_{\hilsp_M} \xi] 
&=  \expec[ (Y - g_{M,\lambda}(X))^2 K_{M,X}\tensor_{\hilsp_M}K_{M,X} ] \\ 
&- \lambda \expec[ (Y - g_{M,\lambda}(X)) K_{M,X} ] \tensor_{\hilsp_M} g_{M,\lambda} \\
&- \lambda  g_{M,\lambda}\tensor_{\hilsp_M}\expec[ (Y - g_{M,\lambda}(X)) K_{M,X} ] \\
& + \lambda^2 g_{M,\lambda} \tensor_{\hilsp_M} g_{M,\lambda}.
\end{align*}
We evaluate an expectation in the second and third terms in the right hand side of the above equation as follows:
\begin{align*} 
\expec[ (Y - g_{M,\lambda}(X)) K_{M,X} ] 
&= \expec[ Y K_{M,X} - (K_{M,X}\tensor_{\hilsp_M}K_{M,X}) g_{M,\lambda}] \\
&= \expec[ Y K_{M,X} ] - \Sigma_M g_{M,\lambda} \\
&= \expec[ Y K_{M,X} ] - \Sigma_M ( \Sigma_M + \lambda I )^{-1} \expec[ Y K_{M,X} ] \\
&= \expec[ Y K_{M,X} ] - (\Sigma_M + \lambda I - \lambda I) ( \Sigma_M + \lambda I )^{-1} \expec[ Y K_{M,X} ] \\
&= \lambda ( \Sigma_M + \lambda I )^{-1} \expec[ Y K_{M,X} ] \\
&= \lambda g_{M,\lambda}.
\end{align*}
Hence, we get 
\begin{equation*}
\expec[ \xi \tensor_{\hilsp_M} \xi] 
\preccurlyeq \expec[ (Y - g_{M,\lambda}(X))^2 K_{M,X}\tensor_{\hilsp_M}K_{M,X} ].
\end{equation*}
For $h \in \hilsp_M$, 
\begin{align}
&\hspace{-10mm}\pd< \expec[ (Y - g_{M,\lambda}(X))^2 K_{M,X}\tensor_{\hilsp_M}K_{M,X} ]h, h>_{\hilsp_M} \notag\\
&= \expec[ (Y - g_{M,\lambda}(X))^2 \pd< (K_{M,X}\tensor_{\hilsp_M}K_{M,X})h,h>_{\hilsp_M} ] \notag \\
&\leq \| Y - g_{M,\lambda}(X) \|_{\el{\infty}}^2  \expec[ \pd< (K_{M,X}\tensor_{\hilsp_M}K_{M,X})h,h>_{\hilsp_M} ] \notag\\
&\leq 2( 1 + \| g_{M,\lambda} \|_{\el{\infty}}^2)  \expec[ \pd< (K_{M,X}\tensor_{\hilsp_M}K_{M,X})h,h>_{\hilsp_M} ], \label{eq:variance_bound}
\end{align}
where we used Assumption {\bf(A2)} for the last inequality.

Finally, we provide an upper-bound on $\| g_{M,\lambda} \|_{\el{\infty}}$.
Since $S^{-1}-T^{-1}  = - S^{-1} (S-T)T^{-1}$ for arbitrary operators $S$ and $T$, we get
\begin{align}
&\hspace{-10mm}\| \left((\Sigma_\infty + \lambda I)^{-1} - (\Sigma_M + \lambda I)^{-1}\right) g_\rho \|_{\el{2}} \notag\\
&= \|(\Sigma_\infty + \lambda I)^{-1} (\Sigma_\infty - \Sigma_M) (\Sigma_M + \lambda I)^{-1} g_\rho \|_{\el{2}} \notag\\
&= \|(\Sigma_\infty + \lambda I)^{-1} \|_{\mathrm{op}} \| \Sigma_\infty - \Sigma_M \|_{\mathrm{op}} \|(\Sigma_M + \lambda I)^{-1}\|_{\mathrm{op}} \|g_\rho \|_{\el{2}} \notag\\
&\leq \frac{1}{\lambda^2} \| \Sigma_\infty - \Sigma_M \|_{\mathrm{op}} \|g_\rho \|_{\el{2}}. \label{eq:f_bound}
\end{align}
We denote $F_\infty = (\Sigma_\infty + \lambda I )^{-1} g_\rho$ and $F_M = (\Sigma_M + \lambda I )^{-1} g_\rho$.
Noting $g_{\infty,\lambda} = \Sigma_\infty F_\infty$ and $g_{M,\lambda} = \Sigma_M F_M$, we get for $\forall x \in \mathrm{supp}(\rho_X)$,
\begin{align*}
| g_{\infty,\lambda}(x) - g_{M,\lambda}(x) |
&= \left| \Sigma_\infty F_\infty (x) - \Sigma_M F_M (x) \right| \\
&= \left| \int_\featuresp K_{\infty,x}(X) F_\infty (X)\mathrm{d}\rho_X 
-  \int_\featuresp K_{M,x}(X) F_M(X) \mathrm{d}\rho_X  \right| \\
&= \left| \int_\featuresp (K_{\infty,x} - K_{M,x})(X) F_\infty (X)\mathrm{d}\rho_X 
-  \int_\featuresp K_{M,x}(X) (F_M(X) -F_\infty(X)) \mathrm{d}\rho_X  \right| \\
&\leq \| K_{\infty,x} - K_{M,x} \|_{\el{2}} \| F_\infty \|_{\el{2}} 
+ \| K_{M,x}\|_{\el{2}} \|F_M - F_\infty\|_{\el{2}} \\
&\leq \frac{1}{\lambda} \| k_{\infty} - k_{M} \|_{\el{\infty}^2} \|g_\rho\|_{\el{2}}
+ \frac{12}{\lambda^2} \| \Sigma_\infty - \Sigma_M \|_{\mathrm{op}} \|g_\rho \|_{\el{2}},
\end{align*}
where we used $k_M(x,x') \leq 12$ for $\forall (x,x') \in \mathrm{supp}(\rho_X) \times \mathrm{supp}(\rho_X)$ 
and inequality (\ref{eq:f_bound}).

Moreover, we get
\begin{align}
|g_{\infty,\lambda} (x)| 
&= | \pd< g_{\infty,\lambda}, K_x >_{\hilsp_\infty}| \notag\\
&\leq \| K_x\|_{\hilsp_\infty} \| g_{\infty,\lambda}\|_{\hilsp_\infty} \notag\\
&\leq 2\sqrt{3} \| \Sigma_\infty (\Sigma_\infty + \lambda I)^{-1}g_\rho \|_{\hilsp_\infty} \notag\\
&\leq 2\sqrt{3} \| \Sigma_\infty^{1+r} (\Sigma_\infty + \lambda I)^{-1} \Sigma_\infty^{-r} g_\rho \|_{\hilsp_\infty} \notag\\
&\leq 2\sqrt{3} \| \Sigma_\infty^{\frac{1}{2}+r} (\Sigma_\infty + \lambda I)^{-1} \Sigma_\infty^{-r} g_\rho \|_{\el{2}} \notag\\
&\leq 2\sqrt{3} \| \Sigma_\infty^{\frac{1}{2}+r} (\Sigma_\infty + \lambda I)^{-1} \|_{\mathrm{op}} \|\Sigma_\infty^{-r} g_\rho \|_{\el{2}} \notag\\
&\leq 2\sqrt{3} \|\Sigma_\infty^{-r} g_\rho \|_{\el{2}}. \label{eq:infty_norm_bound}
\end{align}
where we used Assumption ({\bf A3}) and the isometric map $\Sigma_\infty^{1/2}: \el{2} \rightarrow \hilsp_\infty$.

Hence, we get
\begin{align*}
 \| g_{M,\lambda}\|_{\el{\infty}} 
\leq 
\left( \frac{1}{\lambda} \| k_{\infty} - k_M \|_{\el{\infty}^2}  
+ \frac{12}{\lambda^2} \| \Sigma_\infty - \Sigma_M \|_{\mathrm{op}} \right) \|g_\rho \|_{\el{2}} 
+ 2\sqrt{3} \|\Sigma_\infty^{-r} g_\rho \|_{\el{2}}.
\end{align*}
By the uniform law of large numbers (Theorem 3.1 in \cite{mohri2012foundations}) and the Bernstein's inequality (Proposition 3 in \cite{rudi2017generalization}) to random operators, $\| k_{\infty} - k_M \|_{L_{\infty}(\rho_X \times \rho_X)}$ and $\| \Sigma_\infty - \Sigma_M \|_{\mathrm{op}}$ converge to zero as $M\rightarrow \infty$ in probability.
That is, for given $\lambda > 0$ and $\delta \in (0,1)$, there exists $M_0$ such that for any $M \geq M_0$ the following holds with high probability at least $1-\delta$: 
\begin{align*}
 \| g_{M,\lambda}\|_{\el{\infty}} 
\leq 
\frac{1}{2}\|g_\rho \|_{\el{2}} 
+ 2\sqrt{3} \|\Sigma_\infty^{-r} g_\rho \|_{\el{2}}.
\end{align*}
Combining with (\ref{eq:variance_bound}), we get
\begin{align*}
&\hspace{-10mm}\pd< \expec[ (Y - g_{M,\lambda}(X))^2 K_{M,X}\tensor_{\hilsp_M}K_{M,X} ]h, h>_{\hilsp_M} \notag\\
&\leq 2( 1 + \|g_\rho \|_{\el{2}}^2
+ 24 \|\Sigma_\infty^{-r} g_\rho \|_{\el{2}}^2)  \expec[ \pd< (K_{M,X}\tensor_{\hilsp_M}K_{M,X})h,h>_{\hilsp_M} ].
\end{align*}
\end{proof}

\begin{proof}[Proof of Theorem \ref{thm:asgd_in_approx_ntk}]
Since, stochastic gradient in $\hilsp_M$ is described as 
\[ G^{(t)} = \partial_z \ell(g^{(t)}(x_t),y_t)k_M(x_t,\cdot) = \left(\pd<K_{M,x_t},g^{(t)}>_{\hilsp_M} - y_t \right)K_{M,x_t}, \]
the update rule of Algorithm \ref{alg:ref_asgd} is 
\begin{align*}
g^{(t+1)} 
&= (1-\eta \lambda) g^{(t)} - \eta \left(\pd<K_{M,x_t},g^{(t)}>_{\hilsp_M} - y_t \right)K_{M,x_t} \\
&= \left( I - \eta K_{M,x_t} \tensor_{\hilsp_M}K_{M,x_t} - \eta \lambda I \right) g^{(t)} + \eta y_t K_{M,x_t}.
\end{align*}
Hence, we get
\begin{align}
 g^{(t+1)} - g_{M,\lambda} 
&= \underbrace{\left( I - \eta K_{M,x_t} \tensor_{\hilsp_M}K_{M,x_t} - \eta \lambda I \right)}_{=\alpha_t} \underbrace{(g^{(t)}- g_{M,\lambda})}_{=A_t} \notag \\
& \underbrace{- \eta ( K_{M,x_t} \tensor_{\hilsp_M}K_{M,x_t} + \lambda I )g_{M,\lambda} + \eta y_t K_{M,x_t}}_{=\beta_t}. \label{eq:sgd_recursion}
\end{align}
This leads to the following stochastic recursion: $t \in \{0, \ldots, T-1\}$,
\[ A_{t+1} = \alpha_t A_t + \beta_t = \prod_{s=0}^t \alpha_s A_0 + \sum_{s=0}^t \prod_{l=s+1}^t \alpha_s \beta_s. \]
By taking the average, we get 
\begin{align}
\overline{A}_T &= \frac{1}{T+1}\sum_{t=0}^T A_t \notag\\
&= \underbrace{\frac{1}{T+1}\sum_{t=0}^T\prod_{s=0}^t \alpha_s A_0}_{\textit{Bias term}} 
+\underbrace{\frac{1}{T+1}\sum_{t=0}^T \sum_{s=0}^t \prod_{l=s+1}^t \alpha_s \beta_s}_{\textit{Noise term}}. \label{eq:convergence_skelton}
\end{align}
Thus, the average $\overline{A}_T$ is composed of bias and noise terms. We next bound these two terms, separately.

\paragraph{Bound the bias term.}
Note that the bias term exactly corresponds to the recursion (\ref{eq:sgd_recursion}) with $\beta_t=0$. 
Hence, we consider the case of $\beta_t=0$ and consider the following stochastic recursion in $\hilsp_M$: $A_0 = - g_{M,\lambda}$,
\[ A_{t+1} = ( I - \eta H_t - \eta \lambda I) A_t, \]
where we define $H_t = K_{M,x_t} \tensor_{\hilsp_M} K_{M,x_t}$.
In addition, we consider the deterministic recursion of this recursion: $A'_0 = A_0$,
\[ A'_{t+1} = ( I - \eta \Sigma_M - \eta \lambda I) A'_t. \]
We set 
\[ \overline{A}_T = \frac{1}{T+1}\sum_{t=0}^T A_t, \hspace{3mm} \overline{A'}_T = \frac{1}{T+1}\sum_{t=0}^T A'_t. \]
Then, the bias term we want to evaluate is decomposed as follows: by Minkowski's inequality,
\begin{align}
\left( \expec[\| \overline{A}_T\|_{\el{2}}^2 ] \right)^{\frac{1}{2}} 
\leq \| \overline{A'}_T \|_{\el{2}} + \left( \expec[\| \overline{A}_T - \overline{A'}_T \|_{\el{2}}^2 ] \right)^{\frac{1}{2}}. \label{eq:bias_decomposition}
\end{align}

We here bound the first term in the right hand side of (\ref{eq:bias_decomposition}).
Note that from $4 ( 6 + \lambda ) \eta \leq 1$ and $\|k_M\|_{\el{\infty}^2}\leq 12$, 
we see \footnote{In general, for any operator $F: \el{2}\rightarrow \el{2}$ that commutes with $\Sigma_M$ and has a common eigen-bases with $\Sigma_M$, it follows that $F(\hilsp_M) \subset \hilsp_M$ and inequality $F \succcurlyeq 0$ in $\el{2}$ is equivalent with $F\mid_{\hilsp_M} \succcurlyeq 0$. Hence, we do not specify a Hilbert space we consider in such a case for the simplicity.} $\eta (\Sigma_M + \lambda I) \preccurlyeq  \eta(12 + \lambda)I \prec \frac{1}{2}I$.
Since $A'_t = (I - \eta \Sigma_M - \eta \lambda I)^t g_{M,\lambda}$, its average is 
\begin{align*}
\overline{A'}_T = \frac{1}{T+1}\sum_{t=0}^T A'_t 
= \frac{1}{\eta(T+1)}(\Sigma_M + \lambda I)^{-1}( I - (I-\eta\Sigma_M - \eta \lambda)^{T+1})g_{M,\lambda}.
\end{align*}
Therefore, 
\begin{align}
\| \overline{A'}_T \|_{\el{2}} 
 &= \frac{1}{\eta(T+1)}\| (\Sigma_M + \lambda I)^{-1}( I - (I-\eta\Sigma_M - \eta \lambda)^{T+1}) g_{M,\lambda} \|_{\el{2}} \notag \\
 &\leq \frac{1}{\eta(T+1)}\| (\Sigma_M + \lambda I)^{-1} g_{M,\lambda} \|_{\el{2}}. \label{eq:bias_bound_1}
\end{align}

We bound the second term in (\ref{eq:bias_decomposition}), which measures the gap between $\overline{A}_T$ and $\overline{A'}_T$.
To do so, we consider the following recursion:
\begin{align*}
A_{t+1}-A'_{t+1} 
= A_t - A'_t - \eta( H_t + \lambda I) (A_t - A'_t) + \eta (\Sigma_M - H_t)A'_t.
\end{align*}
Hence, we have
\begin{align*}
\| A_{t+1}-A'_{t+1} \|_{\hilsp_M}^2
&= \| A_t - A'_t \|_{\hilsp_M}^2  \\
&- \eta \pd< A_t - A'_t,  (H_t + \lambda I) (A_t - A'_t) - (\Sigma_M - H_t)A'_t >_{\hilsp_M} \\
&- \eta \pd< (H_t + \lambda I) (A_t - A'_t) - (\Sigma_M - H_t)A'_t, A_t - A'_t >_{\hilsp_M} \\
&+ \eta^2 \| (H_t + \lambda I) (A_t - A'_t) - (\Sigma_M - H_t)A'_t \|_{\hilsp_M}^2.
\end{align*}
Let $(\mathcal{F}_t)_{t=0}^{T-1}$ be a filtration. We take a conditional expectation given $\mathcal{F}_t$:
\begin{align}
\expec[\| A_{t+1}-A'_{t+1} \|_{\hilsp_M}^2 \mid \mathcal{F}_t]
&\leq \| A_t - A'_t \|_{\hilsp_M}^2 - 2\eta \pd< (\Sigma_M + \lambda I)(A_t - A'_t),  A_t - A'_t >_{\hilsp_M} \notag\\
&+ 2\eta^2 \expec[ \| (H_t + \lambda I) (A_t - A'_t) \|_{\hilsp_M}^2 \mid \mathcal{F}_t] \label{eq:bias_term_1}\\
&+ 2\eta^2 \expec[ \| (\Sigma_M - H_t)A'_t \|_{\hilsp_M}^2 \mid \mathcal{F}_t], \label{eq:bias_term_2}
\end{align}
where we used $\|g+h\|_{\hilsp_M}^2 \leq 2(\|g\|_{\hilsp_M}^2 + \|h\|_{\hilsp_M}^2)$.

For $g \in \hilsp_M$, we have
\begin{align}
\pd< \expec[ (K_{M,x_t} \tensor_{\hilsp_M} K_{M,x_t})^2 ] g, g >_{\hilsp_M} 
&= \expec \left[\pd< ( K_{M,x_t} \tensor_{\hilsp_M} K_{M,x_t})^2  g , g >_{\hilsp_M} \right] \notag\\
&= \expec\left[ \pd< \pd< K_{M,x_t},g>_{\hilsp_M} (K_{M,x_t} \tensor_{\hilsp_M} K_{M,x_t}) K_{M,x_t}, g >_{\hilsp_M} \right] \notag\\
&= \expec\left[ \pd< \pd< K_{M,x_t},g>_{\hilsp_M}k_M(x_t,x_t) K_{M,x_t}, g >_{\hilsp_M} \right] \notag\\
&= \expec\left[ \pd< K_{M,x_t},g>_{\hilsp_M} ^2 k_M(x_t,x_t) \right] \notag\\
&\leq 12\expec\left[ \pd< K_{M,x_t},g>_{\hilsp_M} ^2 \right] \notag \\
&= 12\pd< \expec\left[ K_{M,x_t}\tensor_{\hilsp_M} K_{M,x_t} \right] g, g >_{\hilsp_M}. \label{eq:power_cov_bound}
\end{align} 
where we used $k_M(x_t,x_t)\leq 12$ which is confirmed from the definition of $k_M$ and Assumption {\bf(A2)}.
This means that $\expec[H_t^2] = \expec[ (K_{M,x_t} \tensor_{\hilsp_M} K_{M,x_t})^2 ] \preccurlyeq 12\Sigma_M$ on $\hilsp_{M}\times \hilsp_{M}$.
Hence, we get a bound on (\ref{eq:bias_term_1}) as follows:
\begin{align*}
\expec[ \| (H_t + \lambda I) (A_t - A'_t) \|_{\hilsp_M}^2 \mid \mathcal{F}_t]
&= \expec[ \pd< (H_t + \lambda I)^2 (A_t - A'_t), A_t - A'_t >_{\hilsp_M} \mid \mathcal{F}_t] \\
&= \pd< \left(\lambda^2 I + 2\lambda \Sigma_M + \expec[ (K_{M,x_t} \tensor_{\hilsp_M} K_{M,x_t})^2 ] \right) (A_t - A'_t), 
A_t - A'_t >_{\hilsp_M} \\
&\leq \pd< \left(\lambda^2 I + 2(6 + \lambda) \Sigma_M \right) (A_t - A'_t), A_t - A'_t >_{\hilsp_M}.
\end{align*}

Next, we bound a term (\ref{eq:bias_term_2}):
\begin{align*}
\expec[ \| (\Sigma_M - H_t)A'_t \|_{\hilsp_M}^2 \mid \mathcal{F}_t]
&= \expec[ \pd< (\Sigma_M - H_t)^2 A'_t , A'_t >_{\hilsp_M} \mid \mathcal{F}_t] \\
&= \expec[ \pd< (\Sigma_M^2 - \Sigma_M H_t - H_t \Sigma_M + H_t^2) A'_t , A'_t >_{\hilsp_M} \mid \mathcal{F}_t] \\
&= \pd< ( \expec[ H_t^2] - \Sigma_M^2) A'_t , A'_t >_{\hilsp_M} \\
&\leq \pd< \expec[H_t^2] A'_t , A'_t >_{\hilsp_M} \\
&\leq 12\pd< \Sigma_M A'_t , A'_t >_{\hilsp_M}.
\end{align*}

Combining these inequalities, we get
\begin{align}
\expec[\| A_{t+1}-A'_{t+1} \|_{\hilsp_M}^2 \mid \mathcal{F}_t]
&\leq \| A_t - A'_t \|_{\hilsp_M}^2 
- 2\eta \pd< (\Sigma_M + \lambda I)(A_t - A'_t),  A_t - A'_t >_{\hilsp_M} \notag\\
&+ 2\eta^2 \pd< \left(\lambda^2 I + 2(6 + \lambda) \Sigma_M \right) (A_t - A'_t), A_t - A'_t >_{\hilsp_M} \notag\\
&+ 24\eta^2 \pd< \Sigma_M A'_t , A'_t >_{\hilsp_M} \notag\\
&= ( 1 - 2 \lambda \eta + 2 \lambda^2 \eta^2) \| A_t - A'_t \|_{\hilsp_M}^2 
- 2\eta \pd< \Sigma_M (A_t - A'_t),  A_t - A'_t >_{\hilsp_M} \notag\\
&+ 4\eta^2 (6 + \lambda)\pd< \Sigma_M  (A_t - A'_t), A_t - A'_t >_{\hilsp_M} \notag\\
&+ 24\eta^2 \pd< \Sigma_M A'_t , A'_t >_{\hilsp_M} \notag\\
&\leq \| A_t - A'_t \|_{\hilsp_M}^2 
- \eta \pd< \Sigma_M (A_t - A'_t),  A_t - A'_t >_{\hilsp_M} \notag\\
&+ 24\eta^2 \pd< \Sigma_M A'_t , A'_t >_{\hilsp_M}, \label{eq:mirror_discent_recursion}
\end{align}
where for the last inequality we used $4\eta (6+\lambda) \leq 1$.

By taking the expectation and the average of (\ref{eq:mirror_discent_recursion}) over $t \in \{0,\ldots, T-1\}$, we get
\begin{align*}
\frac{1}{T+1} \sum_{t=0}^T \pd< \Sigma_M (A_t - A'_t),  A_t - A'_t >_{\hilsp_M} 
\leq \frac{24\eta}{T+1} \sum_{t=0}^T \pd< \Sigma_M A'_t , A'_t >_{\hilsp_M}.
\end{align*}
Since $\Sigma_M^{1/2}: \el{2} \rightarrow \hilsp_M$ is isometric, we see $\|\Sigma_M^{1/2}(A_t - A'_t)\|_{\hilsp_M} = \|A_t - A'_t\|_{\el{2}}$.
Thus, the second term in (\ref{eq:bias_decomposition}) can be bounded as follows: 
\begin{align}
\expec[\| \overline{A}_T - \overline{A'}_T \|_{\el{2}}^2 ]
&= \expec[\| \Sigma_M^{1/2}( \overline{A}_T - \overline{A'}_T) \|_{\hilsp_M}^2 ] \notag\\
&\leq \frac{1}{T+1} \sum_{t=0}^T \expec[\| \Sigma_M^{1/2}( A_t - A'_t) \|_{\hilsp_M}^2 ] \notag\\
&\leq \frac{24\eta}{T+1} \sum_{t=0}^T \| \Sigma_M^{1/2} A'_t \|_{\hilsp_M}^2 \notag\\
&= \frac{24\eta}{T+1} \sum_{t=0}^T \| \Sigma_M^{1/2} (I - \eta \Sigma_M - \eta \lambda I)^t g_{M,\lambda} \|_{\hilsp_M}^2 \notag\\
&= \frac{24\eta}{T+1} \pd< \sum_{t=0}^T(I - \eta \Sigma_M - \eta \lambda I)^{2t} \Sigma_M^{1/2} g_{M,\lambda}, \Sigma_M^{1/2} g_{M,\lambda}>_{\hilsp_M} \notag\\
&\leq \frac{24}{T+1} \pd< (\Sigma_M + \lambda I )^{-1} \Sigma_M^{1/2} g_{M,\lambda}, \Sigma_M^{1/2} g_{M,\lambda}>_{\hilsp_M} \notag\\
&= \frac{24}{T+1} \| (\Sigma_M + \lambda I )^{-1/2} g_{M,\lambda} \|_{\el{2}}^2 \label{eq:bias_bound_2}
\end{align}
where we used the convexity for the first inequality and we used the following inequality for the last inequality: 
since $\|k_M\|_{\el{\infty}^2}\leq 12$ and $\eta(\Sigma_M + \lambda I) \preccurlyeq \eta (12 +\lambda ) I\preccurlyeq \frac{1}{2}I$,
\begin{equation*} 
\sum_{t=0}^T(I - \eta \Sigma_M - \eta \lambda I)^{2t} \preccurlyeq \frac{1}{\eta} (\Sigma_M + \lambda I)^{-1}. 
\end{equation*}
By plugging (\ref{eq:bias_bound_1}) and (\ref{eq:bias_bound_2}) into (\ref{eq:bias_decomposition}), we get the bound on the bias term:
\begin{align}
\expec[\| \overline{A}_T\|_{\el{2}}^2 ]  
&\leq  2\| \overline{A'}_T \|_{\el{2}}^2 
+ 2 \expec[\| \overline{A}_T - \overline{A'}_T \|_{\el{2}}^2 ] \notag \\
&\leq \frac{2}{\eta^2(T+1)^2}\| (\Sigma_M + \lambda I)^{-1} g_{M,\lambda} \|_{\el{2}}^2 
+ \frac{24^2}{T+1} \| (\Sigma_M + \lambda I )^{-1/2} g_{M,\lambda} \|_{\el{2}}^2.
 \label{eq:convergence_rate_of_bias}
\end{align}

\paragraph{Bound the noise term.}
Note that the noise term in (\ref{eq:convergence_skelton}) exactly corresponds to the recursion (\ref{eq:sgd_recursion}) with $A_0=0$. 
Hence, it is enough to consider the case of $A_0=0$ to evaluate the noise term.
In this case, the average $\overline{A}_T$ can be rewritten as follows:
\begin{align*}
\overline{A}_T = \frac{1}{T+1}\sum_{t=0}^T \sum_{s=0}^t \prod_{l=s+1}^t \alpha_l \beta_s 
= \frac{\eta}{T+1} \sum_{s=0}^T \underbrace{\sum_{t=s}^T \prod_{l=s+1}^t \alpha_l \frac{\beta_s}{\eta}}_{=Z_s}.
\end{align*} 
We here evaluate the noise term.
We set $z_s = (x_s, y_s)$.
Note that since $\expec_{z_s}[\beta_s]=0$, we have for $s < s'$, 
\begin{align*}
\expec_{(z_s, \ldots, z_T)}\left[ \pd< Z_s, Z_{s'}>_{\el{2}} \right]
&= \int_\featuresp \expec_{(z_s, \ldots, z_T)}\left[ \left( \sum_{t=s}^T \prod_{l=s+1}^t \alpha_l \frac{\beta_s}{\eta} \right)
\left( \sum_{t=s'}^T \prod_{l=s'+1}^t \alpha_l \frac{\beta_{s'}}{\eta} \right) \right] \mathrm{d}\rho_X \\
&= \int_\featuresp \expec_{z_s}[\beta_s] \expec_{(z_{s+1}, \ldots, z_T)}\left[ \beta_{s'} \left( \sum_{t=s}^T \prod_{l=s+1}^t \frac{\alpha_l}{\eta} \right)
\left( \sum_{t=s'}^T \prod_{l=s'+1}^t \frac{\alpha_l}{\eta} \right) \right] \mathrm{d}\rho_X  \\
&= 0.
\end{align*}
Therefore, we have
\begin{align}
\expec[ \|\overline{A}_T\|_{\el{2}}^2 ] 
&= \frac{\eta^2}{(T+1)^2} \expec\left[\left\| \sum_{s=0}^T Z_s \right\|_{\el{2}}^2 \right] \notag\\
&= \frac{\eta^2}{(T+1)^2} \expec\left[ \sum_{s,s'=0}^T \pd< Z_s, Z_{s'}>_{\el{2}} \right] \notag\\
&= \frac{\eta^2}{(T+1)^2} \sum_{s=0}^T \expec\left[ \pd< Z_s, Z_{s}>_{\el{2}} \right] \notag\\
&= \frac{\eta^2}{(T+1)^2} \sum_{s=0}^T \expec\left[ \| \Sigma_M^{1/2}Z_s \|_{\hilsp_M}^2 \right]. \label{eq:variance_bound2}
\end{align}
Here, we apply Lemma 21 in \cite{pillaud2018exponential} 
with $A=\Sigma_M$, $H=\Sigma_M + \lambda I$, $C=2( 1 + \|g_\rho \|_{\el{2}}^2
+ 24 \|\Sigma_\infty^{-r} g_\rho \|_{\el{2}}^2) \Sigma_M$.
One of required conditions in this lemma is verified by Lemma \ref{lemma:noise_bound}.
We verify the other required condition described below:
\begin{equation} \label{eq:pillaud_condition}
\expec \left[ (K_{M,X} \tensor_\hilsp K_{M,X} + \lambda I )CH^{-1}(K_{M,X} \tensor_\hilsp K_{M,X} + \lambda I ) \right] 
\preccurlyeq \frac{1}{\eta} C.
\end{equation}
Indeed, we have
\begin{align*}
&\expec \left[ (K_{M,X} \tensor_\hilsp K_{M,X} + \lambda I )CH^{-1}(K_{M,X} \tensor_\hilsp K_{M,X} + \lambda I ) \right] \\
&= \expec \left[  K_{M,X} \tensor_\hilsp K_{M,X} CH^{-1} K_{M,X} \tensor_\hilsp K_{M,X} \right] 
+ 2 \lambda \Sigma_M C H^{-1}
+ \lambda^2 CH^{-1} \\
&\preccurlyeq \expec \left[  K_{M,X} \tensor_\hilsp K_{M,X} CH^{-1} K_{M,X} \tensor_\hilsp K_{M,X} \right] 
+ 6 \lambda ( 1 + \|g_\rho \|_{\el{2}}^2
+ 24 \|\Sigma_\infty^{-r} g_\rho \|_{\el{2}}^2) \Sigma_M,
\end{align*}
where we used $\Sigma_M (\Sigma_M + \lambda I)^{-1} \preccurlyeq I$ and $\lambda (\Sigma_M + \lambda I)^{-1} \preccurlyeq I$.
Moreover, we see
\begin{align*}
&\expec \left[  K_{M,X} \tensor_\hilsp K_{M,X} CH^{-1} K_{M,X} \tensor_\hilsp K_{M,X} \right] \\
&= 2( 1 + \|g_\rho \|_{\el{2}}^2
+ 24 \|\Sigma_\infty^{-r} g_\rho \|_{\el{2}}^2) \expec\left[  K_{M,X} \tensor_\hilsp K_{M,X} \Sigma_M (\Sigma_M + \lambda I)^{-1} K_{M,X} \tensor_\hilsp K_{M,X}  \right] \\
&\preccurlyeq 2( 1 + \|g_\rho \|_{\el{2}}^2
+ 24 \|\Sigma_\infty^{-r} g_\rho \|_{\el{2}}^2)\expec\left[  (K_{M,X} \tensor_\hilsp K_{M,X})^2 \right] \\
&\preccurlyeq 24( 1 + \|g_\rho \|_{\el{2}}^2
+ 24 \|\Sigma_\infty^{-r} g_\rho \|_{\el{2}}^2)\Sigma_M,
\end{align*}
where we used (\ref{eq:power_cov_bound}) for the last inequality.
Hence, we get
\begin{align*}
&\hspace{-10mm}\expec \left[ (K_{M,X} \tensor_\hilsp K_{M,X} + \lambda I )CH^{-1}(K_{M,X} \tensor_\hilsp K_{M,X} + \lambda I ) \right] \\
&\preccurlyeq (24 + 6\lambda) ( 1 + \|g_\rho \|_{\el{2}}^2
+ 24 \|\Sigma_\infty^{-r} g_\rho \|_{\el{2}}^2)\Sigma_M. 
\end{align*}
Since, $4\eta(6+\lambda) \leq 1$, the condition (\ref{eq:pillaud_condition}) is verified.
We apply Lemma 21 in \cite{pillaud2018exponential} to (\ref{eq:variance_bound2}), yielding the following inequality:
\begin{align}
\expec[ \|\overline{A}_T\|_{\el{2}}^2 ] 
&\leq \frac{4}{T+1}\left( 1 + \|g_\rho \|_{\el{2}}^2
+ 24 \|\Sigma_\infty^{-r} g_\rho \|_{\el{2}}^2 \right) \tr{\Sigma_M^2 (\Sigma_M + \lambda I)^{-2}} \notag\\
&\leq \frac{4}{T+1}\left( 1 + \|g_\rho \|_{\el{2}}^2
+ 24 \|\Sigma_\infty^{-r} g_\rho \|_{\el{2}}^2 \right) \tr{\Sigma_M (\Sigma_M + \lambda I)^{-1}}. \label{eq:convergence_rate_of_noise}
\end{align}

\paragraph{Convergence rate in terms of the optimization.}
Finally, by combining (\ref{eq:convergence_rate_of_bias}) and (\ref{eq:convergence_rate_of_noise}) with (\ref{eq:convergence_skelton}), 
we get the convergence rate of averaged stochastic gradient descent to $g_{M,\lambda}$:
\begin{align*}
\expec\left[ \left\| \overline{g}^{(T)} - g_{M,\lambda} \right\|_{\el{2}}^2 \right] 
&\leq \frac{4}{\eta^2(T+1)^2}\| (\Sigma_M + \lambda I)^{-1} g_{M,\lambda} \|_{\el{2}}^2 \\
&+ \frac{2\cdot24^2}{T+1} \| (\Sigma_M + \lambda I )^{-1/2} g_{M,\lambda} \|_{\el{2}}^2 \\
&+\frac{8}{T+1}\left( 1 + \|g_\rho \|_{\el{2}}^2
+ 24 \|\Sigma_\infty^{-r} g_\rho \|_{\el{2}}^2 \right) \tr{\Sigma_M (\Sigma_M + \lambda I)^{-1}},
\end{align*}
where $\overline{g}^{(T)} \defeq \frac{1}{T+1}\sum_{t=0}^T g^{(t)}$.
This finishes the proof.
\end{proof}

\section{Proof of Proposition \ref{prop:high_prob_bound_on_operators}}
We provide Proposition \ref{prop:high_prob_bound_on_operators} which provides the bound on Theorem \ref{thm:asgd_in_approx_ntk}. 
\begin{proof}[Proof of Proposition \ref{prop:high_prob_bound_on_operators}]
From the Bernstein's inequality (Proposition 3 in \cite{rudi2017generalization}) to random operators, the covariance operator $\Sigma_M$ converges to $\Sigma_\infty$ as $M \rightarrow \infty$ in probability.
Especially, there exits $M_0 \in \posintegers$ such that for any $M \geq M_0$, it follows that with high probability at least $1-\delta$,
$\Sigma_\infty - \Sigma_M \preccurlyeq \frac{1}{2}(\Sigma_\infty + \lambda I)$ in $\el{2}$.
Thus, for $\forall f \in \el{2}$, we see
\begin{align*}
&\hspace{-15mm} \pd<  (\Sigma_\infty+\lambda I)^{-1/2} (\Sigma_\infty - \Sigma_M)(\Sigma_\infty+\lambda I)^{-1/2}f, f  >_{\el{2}} \\
&=\pd<  (\Sigma_\infty - \Sigma_M)(\Sigma_\infty+\lambda I)^{-1/2}f,  (\Sigma_\infty+\lambda I)^{-1/2}f  >_{\el{2}} \\
&\leq \frac{1}{2}\pd< (\Sigma_\infty + \lambda I) (\Sigma_\infty+\lambda I)^{-1/2}f, (\Sigma_\infty+\lambda I)^{-1/2}f  >_{\el{2}} \\
&= \frac{1}{2}\|f\|_{\el{2}}^2.
\end{align*} 
Hence, we have
\[ (\Sigma_\infty+\lambda I)^{-1/2} (\Sigma_\infty - \Sigma_M)(\Sigma_\infty+\lambda I)^{-1/2} \preccurlyeq \frac{1}{2}I. \]
Following the argument in \cite{bach2017equivalence}, we have for $\forall f \in \el{2}$,
\begin{align*}
&\pd< (\Sigma_M + \lambda I)^{-1} f, f>_{\el{2}} \\
&= \pd< (\Sigma_\infty + \lambda I + \Sigma_M - \Sigma_\infty )^{-1} f, f>_{\el{2}} \\
&= \pd< \left( I + (\Sigma_\infty + \lambda I)^{-1/2}(\Sigma_M - \Sigma_\infty)(\Sigma_\infty + \lambda I)^{-1/2} \right)^{-1}(\Sigma_\infty + \lambda I)^{-1/2} f, (\Sigma_\infty + \lambda I)^{-1/2}f >_{\el{2}} \\
&= 2\pd< (\Sigma_\infty + \lambda I)^{-1/2} f, (\Sigma_\infty + \lambda I)^{-1/2}f >_{\el{2}} \\
&= 2\pd< (\Sigma_\infty + \lambda I)^{-1} f, f >_{\el{2}}.
\end{align*}
Thus, we confirm that with high probability at least $1-\delta$,
\begin{equation} \label{eq:bach_bound}
(\Sigma_M + \lambda I)^{-1} \preccurlyeq 2(\Sigma_\infty + \lambda I)^{-1}
\end{equation}

Utilizing this inequality, we show the first and second inequalities in Proposition \ref{prop:high_prob_bound_on_operators} as follows.
It is sufficient to prove the second inequality because of 
\begin{align*}
\| (\Sigma_M + \lambda I)^{-1} g_{M,\lambda} \|_{\el{2}}^2 
\leq \frac{1}{\lambda}\| (\Sigma_M + \lambda I)^{-1/2} g_{M,\lambda} \|_{\el{2}}^2 
\end{align*}
Noting that $g_\rho \in \hilsp_\infty$ and $g_{M,\lambda} = (\Sigma_M + \lambda I)^{-1}\Sigma_M g_\rho$, we get
\begin{align*}
\| (\Sigma_M + \lambda I )^{-1/2} g_{M,\lambda} \|_{\el{2}}^2
&= \| (\Sigma_M + \lambda I )^{-1/2} (\Sigma_M + \lambda I)^{-1}\Sigma_M g_\rho \|_{\el{2}}^2 \\
&\leq \| (\Sigma_M + \lambda I )^{-1/2} g_\rho \|_{\el{2}}^2 \\
&\leq 2\| (\Sigma_\infty + \lambda I )^{-1/2} g_\rho \|_{\el{2}}^2 \\
&\leq 2\| \Sigma_\infty^{-1/2} g_\rho\|_{\el{2}}^2 \\
&= 2\| g_\rho\|_{\hilsp_\infty}^2.
\end{align*}

The third inequality on the degree of freedom is a result obtained by \cite{rudi2017generalization}.
\end{proof}

\section{Eigenvalue Analysis of Neural Tangent Kernel}
\subsection{Review of Spherical Harmonics}
We briefly review the spherical harmonics which is useful in analyzing the eigenvalues of dot-product kernels. 
For references, see \cite{atkinson2012spherical,bach2017breaking,bietti2019inductive,cao2019towards}.

Here, we denote by $\tau_{d-1}$ is the uniform distribution on the sphere $\mathbb{S}^{d-1} \subset \realsp^d$. 
The surface area of $\mathbb{S}^{d-1}$ is $\omega_{d-1} = \frac{2\pi^{d/2}}{\Gamma(d/2)}$ where $\Gamma$ is the Gamma function.
In $L_2(\tau_{d-1})$, there is an orthonomal basis consisting of a constant $1$ and the {\it spherical harmonics} $Y_{kj}(x)$, $k \in \integers_{\geq 1}$, $j=1,\ldots, N(d,k)$, where $N(d,k) = \frac{2k+d-2}{k} \left( \begin{array}{c} k+d-3\\ d-2 \end{array} \right)$. 
That is, $\pd< Y_{ki}, Y_{sj}>_{L_2(\tau_{d-1})} = \delta_{ks}\delta_{ij}$ and $\pd< Y_{ki}, 1>_{L_2(\tau_{d-1})} = 0$.
The spherical harmonics $Y_{kj}$ are homogeneous functions of degree $k$, and clearly $Y_{kj}$ have the same parity as $k$.

{\it Legendre} polynomial $P_k(t)$ of degree $k$ and dimension $d$ (a.k.a. {\it Gegenbauer polynomial}) is defined as (Rodrigues' formula):
\begin{equation*}
    P_k(t) = (-1/2)^k \frac{\Gamma(\frac{d-1}{2})}{\Gamma\left(k+\frac{d-1}{2}\right)} (1-t^2)^{(3-d)/2}\left( \frac{\mathrm{d}}{\mathrm{d}t}\right)^k (1-t^2)^{k+(d-3)/2}.
\end{equation*}
Legendre polynomials have the same parity as $k$.
This polynomial is very useful in describing several formulas regarding the spherical harmonics.
\paragraph{Addition formula.} We have the following addition formula:
\begin{equation}\label{eq:addition_formula}
\sum_{j=1}^{N(d,k)} Y_{kj}(x) Y_{kj}(y) = N(d,k) P_k(x^\top y), \ \ \ \ \forall x, \forall y \in \mathbb{S}^{d-1}.
\end{equation}
Hence, we see that $P_k(x^\top \cdot)$ is spherical harmonics of degree $k$.
Using the addition formula and the orthogonality of spherical harmonics, we have 
\begin{equation} \label{eq:addition_formula_app}
\int_{\mathbb{S}^{d-1}} P_j(Z^\top x) P_k(Z^\top y) \mathrm{d}\tau_{d-1}(Z) = \frac{\delta_{jk}}{N(d,k)}P_k(x^\top y). 
\end{equation}

Combining the following equation: for $x = t e_d + \sqrt{1-t^2}x'$, ($x \in \mathbb{S}^{d-1}, x'\in \mathbb{S}^{d-2}, t \in [-1,1])$,
\[ \frac{\omega_{d-1}}{\omega_{d-2}} \mathrm{d}\tau_{d-1}(x) = (1-t^2)^{(d-3)/2}\mathrm{d}t  \mathrm{d}\tau_{d-2}(x'),\] 
we see the orthogonality of Legendre polynomials in $L_2([-1,1], (1-t^2)^{(d-3)/2}\mathrm{d}t)$ and since $P_k(1)=1$ we see
\[ \int_{-1}^1 P_k^2(t)(1-t^2)^{(d-3)/2}dt = \frac{\omega_{d-1}}{\omega_{d-2}}\frac{1}{N(d,k)}. \]

\paragraph{Recurrence relation.} We have the following relation:
\begin{equation} \label{eq:recurrence_relation}
t P_k(t) = \frac{k}{2k + d-2} P_{k-1}(t) + \frac{k+d-2}{2k+d-2}P_{k+1}(t),
\end{equation}
for $k \geq 1$, and for $k=0$ we have $t P_0(t) = P_1(t)$.

\paragraph{Funk-Hecke formula.} The following formula is useful in computing Fourier coefficients with respect to spherical harmonics via Legendre polynomials. 
For any linear combination $Y_k$ of $Y_{kj}$, ($j \in \{1,\ldots, N(d,k)\}$) and any $f \in L_2([-1,1], (1-t^2)^{(d-3)/2}\mathrm{d}t)$, 
we have for $\forall x$,
\begin{equation} \label{eq:fh_formula}
\int_{\mathbb{S}^{d-1}} f(x^\top y) Y_k(y) \mathrm{d}\tau_{d-1}(y)
= \frac{\omega_{d-2}}{\omega_{d-1}} Y_k(x) \int_{-1}^1 f(t) P_k(t) (1-t^2)^{(d-3)/2}\mathrm{d}t.
\end{equation}
This formula say that spherical harmonics are eigenfunctions of the integral operator defined by $f(x^\top y)$ 
and each eigen-space is spanned by spherical harmonics of the same degree.
Moreover, it also provides a way of computing corresponding eigenvalues.

\subsection{Eigenvalues of Dot-product Kernels}
Let $\mu_0$ be the uniform distribution on $\mathbb{S}^{d-1}$.
Note that although $\tau_{d-1}$ and $\mu_0$ are the same distribution, we use two distributions $\tau_{d-1}$ and $\mu_0$ depending on random variables.
First, we consider any activation function $\sigma: \realsp \rightarrow \realsp$ 
and a kernel function $k(x,x') = \expec_{b^{(0)}\sim \mu_0}[ \sigma(b^{(0)^\top}x)\sigma(b^{(0)^\top}x') ]$ on the sphere $\mathbb{S}^{d-1}$.
We show this kernel function is a type of dot-product kernels, that is, there is $\hat{k}: \realsp \rightarrow \realsp$ such that $k(x,x') = \hat{k}(x^\top x')$.
In fact, it can be confirmed as follows. For any $x, x' \in \mathbb{S}^{d-1}$, we take $\theta \in [0,\pi]$ so that $x^\top x' = \cos{\theta}$, 
and an orthogonal matrix $A \in \realsp^{d\times d}$ so that $Ax = (1,0,\ldots,0)^\top$ and $Ax' = (\cos{\theta}, \sin{\theta},0,\ldots,0)^\top$ because $A$ preserves the value of $x^\top x'$.
Then, since $\mu_0$ is rotationally invariant we see
\begin{align*}
k(x,x') 
&= \int_{\mathbb{S}^{d-1}} \sigma(b^{(0)\top} Ax)\sigma(b^{(0)\top} Ax') \mathrm{d}\mu_0(b^{(0)}) \\
&= \int_{\mathbb{S}^{d-1}} \sigma(b^{(0)}_1)\sigma(b^{(0)}_1 \cos(\theta) + b^{(0)}_2 \sin(\theta)) \mathrm{d}\mu_0(b^{(0)}),
\end{align*}
where $b^{(0)} = (b^{0}_1, b^{(0)}_2, \ldots, b^{(0)}_d)$.
In other words, we see $k$ is a function of $\theta = \arccos{(x^\top x')}$, and is a dot-product kernel $k(x,x') = \hat{k}(x^\top x')$.
Hence, we can apply Funk-Hecke formula (\ref{eq:fh_formula}) to $k(x,\cdot)$.

The derivation of eigenvalues of the integral operator follows a way developed by \cite{bach2017breaking,bietti2019inductive,cao2019towards}.
In general, $g \in L_2(\tau_{d-1})$ can be decomposed by spherical harmonics as follows. 
\begin{align}
g - \int_{\mathbb{S}^{d-1}}g(Z)\mathrm{d}\tau_{d-1}(Z)
&= \sum_{k=1}^{\infty} \sum_{j=1}^{N(d,k)} \pd< g, Y_{kj}>_{L_2(\tau_{d-1})} Y_{kj} \notag\\
&= \sum_{k=1}^{\infty} \sum_{j=1}^{N(d,k)} \int_{\mathbb{S}^{d-1}} g(Z) Y_{kj} (Z) Y_{kj}(\cdot) \mathrm{d}\tau_{d-1}(Z) \notag\\
&= \sum_{k=1}^{\infty} N(d,k) \int_{\mathbb{S}^{d-1}} g(Z) P_k(Z^\top \cdot) \mathrm{d}\tau_{d-1}(Z), \label{eq:harmonics_decomposition}
\end{align}
where we used addition formula to the last equality.

Here, we apply this decomposition (\ref{eq:harmonics_decomposition}) to $k(x,\cdot) = \hat{k}(x^\top \cdot)$.
Since $P_k(Z^\top \cdot)$ is a linear combination of spherical harmonics of degree $k$ (see addition formula), we get
\begin{align}
k(x,\cdot) - \int_{\mathbb{S}^{d-1}} \hat{k}(x^\top Z)\mathrm{d}\tau_{d-1}(Z)
&= \sum_{k=1}^{\infty} N(d,k) \int_{\mathbb{S}^{d-1}} \hat{k}(x^\top Z) P_k(Z^\top \cdot) \mathrm{d}\tau_{d-1}(Z) \notag \\
&= \sum_{k=1}^{\infty} \hat{\lambda}_k N(d,k) P_k(x^\top \cdot), \label{eq:kernel_expression}
\end{align}
where we used Funk-Hecke formula (\ref{eq:fh_formula}) and we set $\hat{\lambda}_k = \frac{\omega_{d-2}}{\omega_{d-1}} \int_{-1}^1 \hat{k}(t) P_k(t) (1-t^2)^{(d-3)/2}\mathrm{d}t$.
We note that $\hat{\lambda}_k$ is eigenvalue with multiplicity $N(d,k)$ of the integral operator defined by $k$.

Next, we derive another expression of $k$.
In a similar way, we obtain the following equation:
\begin{equation*}
\sigma( b^{(0)\top} x) - \int_{\mathbb{S}^{d-1}} \sigma( Z^\top x) \mathrm{d}\tau_{d-1}(Z)
= \sum_{k=1}^{\infty} \hat{\mu}_k N(d,k) P_k(b^{(0)\top}x),
\end{equation*}
where $\hat{\mu}_k = \frac{\omega_{d-2}}{\omega_{d-1}} \int_{-1}^1 \sigma(t) P_k(t) (1-t^2)^{(d-3)/2}\mathrm{d}t$.
By the definition of $k$ and the orthogonality of spherical harmonics, we get
\begin{align}
k(x,x') 
&= \expec_{b^{(0)}}\left[ \sigma(b^{(0)\top}x) \sigma(b^{(0)\top}x') \right] \notag\\
&= \int_{\mathbb{S}^{d-1}} \sigma( Z^\top x) \mathrm{d}\tau_{d-1}(Z) \int_{\mathbb{S}^{d-1}} \sigma( Z^\top x') \mathrm{d}\tau_{d-1}(Z) 
+ \sum_{k=1}^{\infty} \hat{\mu}_k^2 N^2(d,k) \expec_{b^{(0)}}\left[ P_k(b^{(0)\top}x) P_k(b^{(0)\top}x') \right] \notag \\
&= \int_{\mathbb{S}^{d-1}} \sigma( Z^\top x) \mathrm{d}\tau_{d-1}(Z) \int_{\mathbb{S}^{d-1}} \sigma( Z^\top x') \mathrm{d}\tau_{d-1}(Z)
+ \sum_{k=1}^{\infty} \hat{\mu}_k^2 N(d,k) P_k(x^\top x'), \label{eq:kernel_expression2}
\end{align}
where we used equation (\ref{eq:addition_formula_app}).
By the rotationally invariance, we can show
\[ \int_{\mathbb{S}^{d-1}} \hat{k}(x^\top Z)\mathrm{d}\tau_{d-1}(Z) 
= \int_{\mathbb{S}^{d-1}} \sigma( Z^\top x) \mathrm{d}\tau_{d-1}(Z) \int_{\mathbb{S}^{d-1}} \sigma( Z^\top x') \mathrm{d}\tau_{d-1}(Z). \]

Thus, comparing (\ref{eq:kernel_expression}) with (\ref{eq:kernel_expression2}), we get $\hat{\lambda}_k = \hat{\mu}_k^2$.

\subsection{Eigenvalues of Neural Tangent Kernels}
Utilizing a relation $\hat{\lambda}_k = \hat{\mu}_k^2$, we derive a way of computing eigenvalues of the integral operator defined by the integral operators $\Sigma_\infty$ associated with the activation $\sigma$.
Recall the definition of the neural tangent kernel:
\begin{align*}
\hspace{-2mm} k_{\infty}(x,x') 
\defeq \expec_{b^{(0)}\sim \mu_0}[ \sigma(b^{(0)\top}x)\sigma(b^{(0)\top}x')] 
+(x^\top x' + \gamma^2)\expec_{b^{(0)}\sim \mu_0}[ \sigma'(b^{(0)\top}x)\sigma'(b^{(0)\top}x') ].
\end{align*}
A neural tangent kernel consists of three kernels:
\begin{align*}
h_1(x,x') &= \expec_{b^{(0)}\sim \mu_0}\left[\sigma(b^{(0)\top}x)\sigma(b^{(0)\top}x') \right], \\
h_2(x,x') &= \expec_{b^{(0)}\sim \mu_0}\left[ \sigma'(b^{(0)\top}x)\sigma'(b^{(0)\top}x') \right], \\
h_3(x,x') &= x^\top x'\expec_{b^{(0)}\sim \mu_0}\left[ \sigma'(b^{(0)\top}x)\sigma'(b^{(0)\top}x') \right].
\end{align*}
By the argument in the previous subsection, $h_1$ and $h_2$ are dot-product kernel, that is, 
there exist $\hat{h}_1$ and $\hat{h}_2$ such that $h_1(x,x')=\hat{h}_1(x^\top x')$ and $h_2(x,x')=\hat{h}_2(x^\top x')$.
Moreover, $h_3$ is a dot-product kernel as well because we get $h_3(x,x') = \hat{h}_3(x^\top x')$ by setting $\hat{h}_3(t) = t \hat{h}_2(t)$.
Hence, theory explained earlier is applicable to these kernels.

Eigenvalues $\hat{\mu}_k$ for kernels $h_1$ and $h_2$ are described as follows:
\begin{align}
\hat{\mu}_k^{(1)} &= \frac{\omega_{d-2}}{\omega_{d-1}} \int_{-1}^1 \sigma(t) P_k(t) (1-t^2)^{(d-3)/2}\mathrm{d}t, \label{eq:eigen_rep_1}\\
\hat{\mu}_k^{(2)} &= \frac{\omega_{d-2}}{\omega_{d-1}} \int_{-1}^1 \sigma'(t) P_k(t) (1-t^2)^{(d-3)/2}\mathrm{d}t, \label{eq:eigen_rep_2}
\end{align}
yielding eigenvalues $\hat{\lambda}_k^{(1)} = (\hat{\mu}_k^{(1)})^2$ and $\hat{\lambda}_k^{(2)} = (\hat{\mu}_k^{(2)})^2$ for $h_1$ and $h_2$, respectively.
As for eigenvalues $\hat{\lambda}_k^{(3)}$ for $h_3$, we have 
\begin{align*}
\hat{\lambda}_k^{(3)} 
&= \frac{\omega_{d-2}}{\omega_{d-1}} \int_{-1}^1 t\hat{h}_2(t) P_k(t) (1-t^2)^{(d-3)/2}\mathrm{d}t \\
&= \frac{k}{2k + d-2} \frac{\omega_{d-2}}{\omega_{d-1}} \int_{-1}^1 \hat{h}_2(t) P_{k-1}(t) (1-t^2)^{(d-3)/2}\mathrm{d}t \\
&+ \frac{k+d-2}{2k+d-2} \frac{\omega_{d-2}}{\omega_{d-1}} \int_{-1}^1 \hat{h}_2(t) P_{k+1}(t) (1-t^2)^{(d-3)/2}\mathrm{d}t \\
&= \frac{k}{2k + d-2}\hat{\lambda}_{k-1}^{(2)} + \frac{k+d-2}{2k+d-2}\hat{\lambda}_{k+1}^{(2)},
\end{align*}
where we used the recurrence relation (\ref{eq:recurrence_relation}).
Since, $h_1$, $h_2$, and $h_3$ have the same eigenfunctions, 
eigenvalues $\hat{\lambda}_{\infty,k}$ of $k_\infty$ is
\begin{equation}
\hat{\lambda}_{\infty,k} 
= \hat{\lambda}_{k}^{(1)} 
+ \gamma^2 \hat{\lambda}_{k}^{(2)} 
+ \frac{k}{2k + d-2}\hat{\lambda}_{k-1}^{(2)} 
+ \frac{k+d-2}{2k+d-2}\hat{\lambda}_{k+1}^{(2)}.
\end{equation}
Hence, calculation of $\{ \hat{\lambda}_{\infty,k} \}_{k=1}^\infty$ results in computing $\hat{\mu}_k^{(1)}$ and $\hat{\mu}_k^{(2)}$ for given activation $\sigma$.

\paragraph{Eigenvalues for ReLU and smooth approximations of ReLU.}
As for ReLU activation, its eigenvalues were derived in \cite{bach2017breaking}.
Let $\sigma$ be ReLU. 
Then, $\hat{\mu}_k^{(1)} = 0$ and $\hat{\mu}_k^{(2)} \sim k^{-d/2}$ when $k$ is odd
and $\hat{\mu}_k^{(1)} \sim k^{-d/2 - 1}$ and $\hat{\mu}_k^{(2)}=0$ when $k$ is even.
Consequently, we see $\hat{\lambda}_{\infty,k} = \Theta(k^{-d})$.

We note that the multiplicity of $\hat{\lambda}_{\infty,k}$ is $N(d,k)$, so that we should take into account the multiplicity to derive decay order of eigenvalues $\lambda_{\infty,i}$ of $\Sigma_\infty$.
Since $1 + \sum_{j=1}^k N(d-1,j) = N(d,k)$ (for details see \cite{atkinson2012spherical}), we see
$\lambda_{\infty, N(d+1,k)} = \Theta( k^{-d})$. 
Moreover, $N(d+1,k) = \Theta(k^{d-1})$ yields $\lambda_{\infty, N(d+1,k)} = \Theta\left( N(d+1,k)^{-1 - \frac{1}{d-1}}\right)$.
As a result, Assumption {\bf(A4)} is verified with $\beta = 1 + \frac{1}{d-1}$ for ReLU.

For the smooth approximation $\sigma^{(s)}$ of ReLU satisfying Assumption {\bf(A1')}, 
we can show that every eigenvalue of $\Sigma_\infty^{(s)}$ derived from $\sigma^{(s)}$ converges to that for ReLU as $s\rightarrow \infty$ because of (\ref{eq:eigen_rep_1}) and (\ref{eq:eigen_rep_2}) with Lebesgue's convergence theorem.

\section{Explicit Convergence Rates for Smooth Approximation of ReLU}
For convenience, we here list notations used in this section.
In this section, let $\sigma$ and $\sigma^{(s)}$ be ReLU activation and its smooth approximation satisfying {\bf (A1')}, respectively, 
and for $M \in \posintegers \cup \{\infty\}$ let $k_M, \Sigma_M, g_{M,\lambda}$, $k^{(s)}_M, \Sigma^{(s)}_M$, $g^{(s)}_{M,\lambda}$ be corresponding kernel, integral operators, and minimizers of the regularized expected risk functions.
Let $\overline{g}^{(T)}$ be iterates obtained by the reference ASGD (Algorithm \ref{alg:ref_asgd}) in the RKHS associated with $k^{(s)}_M$.

We consider the following decomposition:
\begin{align}
\frac{1}{3}\| \overline{g}^{(T)} - g_\rho \|_{\el{2}}^2
&\leq 
\| \overline{g}^{(T)} - g^{(s)}_{M,\lambda} \|_{\el{2}}^2 \label{eq:relu_decomp_1}\\
&+ \| g^{(s)}_{M,\lambda} - g^{(s)}_{\infty,\lambda} \|_{\el{2}}^2 \label{eq:relu_decomp_2}\\
&+ \| g^{(s)}_{\infty,\lambda} - g_{\infty,\lambda} \|_{\el{2}}^2 \label{eq:relu_decomp_3}\\
&+ \| g_{\infty,\lambda} - g_\rho \|_{\el{2}}^2. \label{eq:relu_decomp_4}
\end{align}    
These terms can be made arbitrarily small by taking large $M$ and $s$. 
As for (\ref{eq:relu_decomp_4}) this property is a direct consequence of Proposition \ref{prop:approx_error}. 
Note that Proposition \ref{prop:ntk_approx_error} is not applicable to (\ref{eq:relu_decomp_2}) because this proposition require the specification of the target function by $k_\infty^{(s)}$ which does not hold in general.

In the following, we treat the remaining terms.
\begin{proposition}\label{prop:relu_decomp_123_bound}
Suppose {\bf (A1')} and {\bf (A2')} hold. Then, we have
\begin{enumerate}
\item $\plim_{M\rightarrow \infty} \|k_M^{(s)} - k_\infty^{(s)} \|_{\el{\infty}^2}=0,\ 
\lim_{s\rightarrow \infty} \|k_\infty^{(s)} - k_\infty \|_{\el{\infty}^2} = 0$,
\item $\plim_{M\rightarrow \infty} \left| \tr{\Sigma_M^{(s)} - \Sigma_\infty^{(s)}} \right|=0,\ 
\lim_{s\rightarrow \infty} \left| \tr{\Sigma_\infty^{(s)} - \Sigma_\infty} \right| = 0$,
\item $\plim_{M\rightarrow \infty}\| g^{(s)}_{M,\lambda} - g_{\infty,\lambda}^{(s)} \|_{\el{\infty}} = 0 ,\ \lim_{s \rightarrow \infty}\| g^{(s)}_{\infty,\lambda} - g_{\infty,\lambda} \|_{\el{\infty}} = 0$,
\end{enumerate}
where $\plim$ denotes the convergence in probability.
\end{proposition}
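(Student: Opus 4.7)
The plan is to prove the three claims in order, since (2) reduces to (1) via the trace identity $\tr{\Sigma} = \int k(x,x)\,d\rho_X$ and (3) is obtained by reusing a pointwise bound already derived inside the proof of Lemma \ref{lemma:noise_bound}. For the $M\to\infty$ half of (1), observe that $k_M^{(s)}(x,x')$ is the empirical mean over i.i.d.\ samples $b_r^{(0)} \sim \mu_0$ of
\[ \phi_b(x,x') := \sigma^{(s)}(b^\top x)\sigma^{(s)}(b^\top x') + (x^\top x' + \gamma^2)\sigma^{(s)'}(b^\top x)\sigma^{(s)'}(b^\top x'). \]
Under (A1')--(A2') the family $\{\phi_b\}_{b\in \mathbb{S}^{d-1}}$ is uniformly bounded on the compact set $\mathrm{supp}(\rho_X)^2$ and equi-Lipschitz in $(x,x')$, with constants controlled by $\|\sigma^{(s)'}\|_\infty \leq 2$ and $\|\sigma^{(s)''}\|_\infty \leq C$. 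A standard $\epsilon$-net/uniform-law-of-large-numbers argument (Theorem 3.1 of \cite{mohri2012foundations}) then yields $\plim_{M\to\infty}\|k_M^{(s)} - k_\infty^{(s)}\|_{\el{\infty}^2} = 0$. For the $s\to\infty$ half, the pointwise a.s.\ convergence of $\sigma^{(s)}, \sigma^{(s)'}$ from (A1'), together with the uniform $L^\infty$ majorants on $\mathbb{S}^{d-1} \times \mathrm{supp}(\rho_X)$, lets me apply dominated convergence to the expectation over $b$ defining $k_\infty^{(s)}(x,x')$, yielding pointwise convergence to $k_\infty(x,x')$. Equi-Lipschitzness of $\{k_\infty^{(s)}\}_s$ in $(x,x')$ then promotes pointwise to uniform convergence on the compact domain.

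Claim (2) follows immediately from (1): the relevant integral operators are trace class with continuous kernel, so $\tr{\Sigma} = \int_\featuresp k(x,x)\,d\rho_X(x)$ and hence
\[ \bigl|\tr{\Sigma_M^{(s)} - \Sigma_\infty^{(s)}}\bigr| \leq \|k_M^{(s)} - k_\infty^{(s)}\|_{\el{\infty}^2}, \qquad \bigl|\tr{\Sigma_\infty^{(s)} - \Sigma_\infty}\bigr| \leq \|k_\infty^{(s)} - k_\infty\|_{\el{\infty}^2}, \]
with both right-hand sides vanishing by (1). For (3), the pointwise estimate established inside the proof of Lemma \ref{lemma:noise_bound}, namely
\[ \|g_{M,\lambda} - g_{\infty,\lambda}\|_{\el{\infty}} \leq \frac{\|g_\rho\|_{\el{2}}}{\lambda}\|k_\infty - k_M\|_{\el{\infty}^2} + \frac{12\|g_\rho\|_{\el{2}}}{\lambda^2}\|\Sigma_\infty - \Sigma_M\|_{\mathrm{op}}, \]
is obtained by writing $g_{\cdot,\lambda} = (\Sigma_\cdot + \lambda I)^{-1}\Sigma_\cdot g_\rho$ and expanding the resolvent difference. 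Applying this derivation verbatim with the superscript $(s)$ attached throughout gives the $M\to\infty$ half of (3), once (1) is combined with $\plim_{M\to\infty}\|\Sigma_M^{(s)} - \Sigma_\infty^{(s)}\|_{\mathrm{op}} = 0$ from the operator Bernstein bound (Proposition 3 of \cite{rudi2017generalization}). The $s\to\infty$ half is the same expansion between $g_{\infty,\lambda}^{(s)}$ and $g_{\infty,\lambda}$, together with the elementary Hilbert--Schmidt estimate $\|\Sigma_\infty^{(s)} - \Sigma_\infty\|_{\mathrm{op}} \leq \|k_\infty^{(s)} - k_\infty\|_{\el{\infty}^2}$ on the probability space $(\mathrm{supp}(\rho_X), \rho_X)$.

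The main obstacle is the $s\to\infty$ half of (1), since $\sigma'$ is discontinuous on $\{b^\top x = 0\}$ and (A1') therefore provides only almost sure pointwise convergence of $\sigma^{(s)'}$. For each fixed $(x,x') \in \mathrm{supp}(\rho_X)^2$ the exceptional set $\{b \in \mathbb{S}^{d-1} : b^\top x = 0 \text{ or } b^\top x' = 0\}$ has $\mu_0$-measure zero, so dominated convergence still applies inside the expectation defining $k_\infty^{(s)}(x,x')$ and yields pointwise convergence; the equi-Lipschitz property then recovers uniformity on the compact domain. A secondary technical point is ensuring the complexity/covering-number hypotheses for the uniform LLN in (1), but this is automatic from the Lipschitz control on $\phi_b$ that is uniform in $b$.
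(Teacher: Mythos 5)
Your proof is correct, and its skeleton matches the paper's: the uniform law of large numbers for the $M\to\infty$ half of (1), the identity $\tr{\Sigma}=\int_\featuresp k(x,x)\,\mathrm{d}\rho_X$ for (2), and the resolvent expansion of $g_{\cdot,\lambda}=(\Sigma_\cdot+\lambda I)^{-1}\Sigma_\cdot g_\rho$ for (3). You diverge in two sub-steps, both legitimately. For the $s\to\infty$ half of (1), the paper exploits rotational invariance of $\mu_0$: after peeling off the bounded factors, $\sup_{x}\expec_{b^{(0)}}\bigl[|\sigma^{(s)}(b^{(0)\top}x)-\sigma(b^{(0)\top}x)|\bigr]$ (and likewise for $\sigma^{(s)'}$) is independent of $x\in\mathbb{S}^{d-1}$ and equals its value at $e_1$, so a single application of dominated convergence already gives uniform convergence with no equicontinuity argument. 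Your route --- pointwise convergence off a $\mu_0$-null set plus equi-Lipschitzness of $\{k_\infty^{(s)}\}_s$, which indeed holds with constants controlled by $C$ and $\gamma$ --- is more work but would survive an initialization measure that is not rotation invariant. For the convergence $\|\Sigma_\infty^{(s)}-\Sigma_\infty\|_{\mathrm{op}}\to 0$ needed in part (3), the paper gives a longer argument using the shared spherical-harmonic eigenbasis, eigenvalue-wise convergence, and trace convergence to deduce uniform convergence of the spectra; your one-line bound via the Hilbert--Schmidt norm, $\|\Sigma_\infty^{(s)}-\Sigma_\infty\|_{\mathrm{op}}\le\|k_\infty^{(s)}-k_\infty\|_{\el{\infty}^2}$, is valid (since $\rho_X$ is a probability measure), is simpler, and reduces this step to part (1). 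The remaining ingredients --- the ULLN with the $\epsilon$-net justification, the trace inequality, and the operator Bernstein bound for $\|\Sigma_M^{(s)}-\Sigma_\infty^{(s)}\|_{\mathrm{op}}$ --- coincide with the paper's.
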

\begin{proof}
We show the first statement.
By the uniform law of large numbers (Theorem 3.1 in \cite{mohri2012foundations}), we see the convergence in probability:
\begin{align*}
&\| k_{M}^{(s)} - k_{\infty}^{(s)}\|_{\el{\infty}^2} 
\leq  \sup_{x,x'\in \mathbb{S}^{d-1}}\left| \frac{1}{M}\sum_{r=1}^M\sigma^{(s)}(b_r^{(0)\top}x)\sigma^{(s)}(b_r^{(0)\top}x') - \expec_{b^{(0)}} \left[ \sigma^{(s)}(b^{(0)\top}x)\sigma^{(s)}(b^{(0)\top}x') \right] \right|\\
&+(1 + \gamma^2) \sup_{x,x'\in \mathbb{S}^{d-1}}\left| \frac{1}{M}\sum_{r=1}^M \sigma^{(s)'}(b_r^{(0)\top}x)\sigma^{(s)'}(b_r^{(0)\top}x') - \expec_{b^{(0)}}\left[ \sigma^{(s)'}(b^{(0)\top}x)\sigma^{(s)'}(b^{(0)\top}x') \right] \right| 
\xrightarrow{p} 0,
\end{align*}
where the limit is taken with respect to $M\rightarrow \infty$ and the notation $\xrightarrow{p}$ denotes the convergence in probability.
Next, we have the following convergence: 
\begin{align*}
&\| k_{\infty}^{(s)} - k_{\infty}\|_{\el{\infty}^2} 
\leq  \sup_{x,x'\in \mathbb{S}^{d-1}} \expec_{b^{(0)}} \left[\left| \sigma^{(s)}(b^{(0)\top}x)\sigma^{(s)}(b^{(0)\top}x') -  \sigma(b^{(0)\top}x)\sigma(b^{(0)\top}x') \right| \right] \\
&+(1 + \gamma^2) \sup_{x,x'\in \mathbb{S}^{d-1}}\expec_{b^{(0)}}\left[\left| \sigma^{(s)'}(b^{(0)\top}x)\sigma^{(s)'}(b^{(0)\top}x') -  \sigma'(b^{(0)\top}x)\sigma'(b^{(0)\top}x') \right| \right] \\
&\leq 4\sup_{x \in \mathbb{S}^{d-1}} \expec_{b^{(0)}} \left[ \left|  \sigma^{(s)}(b^{(0)\top}x) - \sigma(b^{(0)\top}x) \right| \right]
+ 4(1+\gamma^2)\sup_{x \in \mathbb{S}^{d-1}} \expec_{b^{(0)}} \left[ \left|  \sigma^{(s)'}(b^{(0)\top}x) - \sigma'(b^{(0)\top}x) \right|
\right] \\
&= 4 \left[ \left|  \sigma^{(s)}(b^{(0)\top}e_1) - \sigma(b^{(0)\top}e_1) \right| \right]
+ 4(1+\gamma^2)\expec_{b^{(0)}} \left|  \sigma^{(s)'}(b^{(0)\top}e_1) - \sigma'(b^{(0)\top}e_1) \right| \rightarrow 0, 
\end{align*}
where for the first inequality we used the boundedness of $\sigma, \sigma', \sigma^{(s)}$, and $\sigma^{(s)'}$ on $[-1,1]$, 
for the equality we used the rotationally invariance of the measure $\mu_0$, and the limit is taken with respect to $s\rightarrow \infty$.
For the final convergence in the above expression we used Assumption {\bf(A1')} and boundedness with Lebesgue's convergence theorem.

In general, for a kernel $k$ and associated integral operator $\Sigma$ with $\rho_X$, we have $\tr{\Sigma} = \int_{\mathbb{S}^{d-1} }k(X,X) \mathrm{d}\rho_X$. 
Hence, $\left| \tr{\Sigma_M^{(s)} - \Sigma_\infty^{(s)}} \right| \leq \|k_M^{(s)} - k_\infty^{(s)} \|_{\el{\infty}^2}$ and $\left| \tr{\Sigma_\infty^{(s)} - \Sigma_\infty} \right| \leq \|k_\infty^{(s)} - k_\infty \|_{\el{\infty}^2}$, and the second statement holds immediately.

Finally, we show the third statement.
In the same manner as the derivation of inequality (\ref{eq:f_bound}), we get
\begin{align*}
\| ((\Sigma_M^{(s)} + \lambda I)^{-1} - (\Sigma_\infty^{(s)} + \lambda I)^{-1}) g_\rho \|_{\el{2}} 
\leq \frac{1}{\lambda^2} \| \Sigma_M^{(s)} - \Sigma_\infty^{(s)} \|_{\mathrm{op}} \|g_\rho \|_{\el{2}}.
\end{align*}
We denote $F_M^{(s)} = (\Sigma_M^{(s)} + \lambda I )^{-1} g_\rho$ and $F_\infty^{(s)} = (\Sigma_\infty^{(s)} + \lambda I )^{-1} g_\rho$.
Noting $g_{M,\lambda}^{(s)} = \Sigma_M^{(s)} F_M^{(s)}$ and $g_{\infty,\lambda}^{(s)} = \Sigma_\infty^{(s)} F_\infty^{(s)}$, 
we get for $x \in \mathbb{S}^{d-1}$,
\begin{align}
&| g^{(s)}_{M,\lambda}(x) - g_{\infty,\lambda}^{(s)}(x) | \notag\\
&= \left| \Sigma_M^{(s)} F_M^{(s)}(x) - \Sigma_\infty^{(s)} F_\infty^{(s)}(x) \right| \notag\\
&= \left| \int_\featuresp K_{M,x}^{(s)}(X) F_M^{(s)}(X) \mathrm{d}\rho_{X} - \int_\featuresp K_{\infty,x}^{(s)}(X') F_\infty^{(s)}(X') \mathrm{d}\rho_{X} \right| \notag\\
&= \left| \int_\featuresp (K_{M,x}^{(s)}-K_{\infty,x}^{(s)})(X) F_M^{(s)}(X) \mathrm{d}\rho_{X} 
- \int_\featuresp K_{\infty,x}^{(s)}(X) (F_M^{(s)} - F_\infty^{(s)})(X) \mathrm{d}\rho_{X} \right| \notag\\
&\leq \| K_{M,x}^{(s)}-K_{\infty,x}^{(s)} \|_{\el{2}} \| F_M^{(s)} \|_{\el{2}}
+ \| K_{\infty,x}^{(s)} \|_{\el{2}} \|F_M^{(s)} - F_\infty^{(s)} \|_{\el{2}}  \notag\\
&\leq 
\frac{1}{\lambda} \| k_M^{(s)} - k_{\infty}^{(s)}\|_{\el{\infty}^2} \|g_\rho \|_{\el{2}}
+ \frac{12}{\lambda^2} \| \Sigma_M^{(s)} - \Sigma_\infty^{(s)} \|_{\mathrm{op}} \|g_\rho \|_{\el{2}}. \label{eq:smooth_approx_error}
\end{align}
The both terms in the last expression (\ref{eq:smooth_approx_error}) converge to $0$ in probability because of the first statement of this proposition and the Bernstein's inequality (Proposition 3 in \cite{rudi2017generalization}) to random operators.
This finishes the proof of the former of the third statement.

In the same manner, we have 
\begin{align}
\| g^{(s)}_{\infty,\lambda} - g_{\infty,\lambda} \|_{\el{\infty}}
\leq 
\frac{1}{\lambda} \| k_{\infty} - k_{\infty}^{(s)}\|_{\el{\infty}^2} \|g_\rho \|_{\el{2}}
+ \frac{12}{\lambda^2} \| \Sigma_\infty - \Sigma_\infty^{(s)} \|_{\mathrm{op}} \|g_\rho \|_{\el{2}}. \label{eq:smooth_approx_error2}
\end{align}
The first term in the right hand side converges to $0$ because of the first statement of this proposition.
We next show the convergence $\| \Sigma_\infty - \Sigma_\infty^{(s)} \|_{\mathrm{op}} \rightarrow 0$ as $s \rightarrow \infty$.
As seen in the previous section, $\Sigma_\infty$ and $\Sigma_\infty^{(s)}$ share the same eigenfunctions and every eigenvalue of $\Sigma_\infty^{(s)}$ converges to that of $\Sigma_\infty$ as $s \rightarrow \infty$.
Let $\{\lambda_{\infty,i}^{(s)}\}_{i=1}^\infty$ and $\{\lambda_{\infty,i}\}_{i=1}^\infty$ be eigenvalues of $\Sigma_\infty^{(s)}$ and $\Sigma_\infty$, respectively.
For an arbitrary $\epsilon > 0$, we can take $i_\epsilon$ such that $\sum_{i=i_\epsilon}^\infty \lambda_{\infty,i} < \epsilon$.
From the convergence $\lambda_{\infty,i}^{(s)} \rightarrow \lambda_{\infty,i}$ and $\tr{\Sigma^{(s)}_\infty} \rightarrow \tr{\Sigma_\infty}$ as $s \rightarrow \infty$, we see that for arbitrarily sufficiently large $s$, $| \lambda_{\infty,i}^{(s)} - \lambda_{\infty,i}| < \epsilon$ ($i < i_\epsilon$) and $\sum_{i=i_\epsilon}^\infty \lambda_{\infty,i}^{(s)} < 2\epsilon$.
Clearly, for $i \geq i_\epsilon$, $| \lambda_{\infty,i}^{(s)} - \lambda_{\infty,i} | \leq \sum_{i=i_\epsilon}^\infty(\lambda_{\infty,i}^{(s)} + \lambda_{\infty,i}) < 3\epsilon$.
Therefore, we conclude the uniform convergence $\sup_{i \in \{1,\ldots,\infty\}} |\lambda_{\infty,i}^{(s)} - \lambda_{\infty,i}| \rightarrow 0$ as $s \rightarrow \infty$.
This implies $\| \Sigma_\infty - \Sigma_\infty^{(s)} \|_{\mathrm{op}} \rightarrow 0$ as $s \rightarrow \infty$.
\end{proof}

So far, we have shown that (\ref{eq:relu_decomp_2}), (\ref{eq:relu_decomp_3}), and (\ref{eq:relu_decomp_4}) can be made arbitrarily small by taking large $s$ and $M$ depending on $\lambda$.
The remaining problem is to show the convergence of (\ref{eq:relu_decomp_1}).
To do so, we establish the counterpart of Theorem \ref{thm:generalization_error_of_asgd_in_approx_ntk} by adapting Theorem \ref{thm:asgd_in_approx_ntk} and Proposition \ref{prop:high_prob_bound_on_operators} to the current setting.
\paragraph{The counterpart of Theorem \ref{thm:generalization_error_of_asgd_in_approx_ntk}.}
In Theorem \ref{thm:asgd_in_approx_ntk}, the condition {\bf(A3)} is required for NTK associated with the smooth activation $\sigma^{(s)}$ and it is not satisfied in general. 
Note that {\bf(A3)} is used for bounding $\| g_{M,\lambda}^{(s)}\|_{\el{\infty}}$ uniformly as seen in the proof of Lemma \ref{lemma:noise_bound}.
Let us consider the decomposition:
\begin{align*}
\| g_{M,\lambda}^{(s)} \|_{\el{\infty}} 
&\leq 
\| g_{\infty,\lambda} \|_{\el{\infty}}
+ \| g_{M,\lambda}^{(s)} - g_{\infty,\lambda}^{(s)} \|_{\el{\infty}} 
+ \| g_{\infty,\lambda}^{(s)} - g_{\infty,\lambda} \|_{\el{\infty}} \\
&\leq 
2\sqrt{3} \| \Sigma_\infty^{-r} g_\rho \|_{\el{2}}
+ \| g_{M,\lambda}^{(s)} - g_{\infty,\lambda}^{(s)} \|_{\el{\infty}} 
+ \| g_{\infty,\lambda}^{(s)} - g_{\infty,\lambda} \|_{\el{\infty}} \\
&\rightarrow 2\sqrt{3} \| \Sigma_\infty^{-r} g_\rho \|_{\el{2}}.
\end{align*}
Here, for the second inequality we used (\ref{eq:infty_norm_bound}). 
Note that the inequality (\ref{eq:infty_norm_bound}) holds for $\Sigma_\infty$ because the condition {\bf(A3)} is supposed for ReLU.
For the last inequality we used Proposition \ref{prop:relu_decomp_123_bound}.
Hence, Theorem \ref{thm:asgd_in_approx_ntk} can be applicable and the same convergence in Theorem \ref{thm:asgd_in_approx_ntk} holds for $\sigma^{(s)}$.
For arbitrarily sufficiently large $s$ and $M$ with high probability, we have
\begin{align}
\expec\left[ \left\| \overline{g}^{(T)} - g_{M,\lambda}^{(s)} \right\|_{\el{2}}^2 \right] 
&\leq \frac{4}{\eta^2(T+1)^2}\| (\Sigma_M^{(s)} + \lambda I)^{-1} g_{M,\lambda}^{(s)} \|_{\el{2}}^2 \notag\\
&+ \frac{2\cdot24^2}{T+1} \| (\Sigma_M^{(s)} + \lambda I )^{-1/2} g_{M,\lambda}^{(s)} \|_{\el{2}}^2 \notag\\
&+\frac{8}{T+1}\left( 1 + \|g_\rho \|_{\el{2}}^2
+ 24 \|\Sigma_\infty^{-r} g_\rho \|_{\el{2}}^2 \right) \tr{\Sigma_M^{(s)} (\Sigma_M^{(s)} + \lambda I)^{-1}}. \label{eq:counterpart_theorem_A}
\end{align}

Next, we adapt Proposition \ref{prop:high_prob_bound_on_operators} to the current setting.
By inequality (\ref{eq:bach_bound}), there exists $M_0 \in \posintegers$ such that $\forall M \geq M_0$, with high probability, 
\begin{equation*}
\| (\Sigma_M^{(s)} + \lambda I )^{-1/2} g_{M,\lambda}^{(s)} \|_{\el{2}}^2
\leq 2\| (\Sigma_\infty^{(s)} + \lambda I)^{-1/2} g_\rho\|_{\el{2}}^2.
\end{equation*}
If $(\Sigma_\infty^{(s)} + \lambda I)^{-1} \preccurlyeq 2(\Sigma_\infty + \lambda I)^{-1}$ holds, 
then we have the counterpart of the second inequality in Proposition \ref{prop:high_prob_bound_on_operators} because
\begin{equation}
\| (\Sigma_\infty + \lambda I)^{-1/2} g_\rho\|_{\el{2}}^2
\leq \| \Sigma_\infty^{-1/2} g_\rho\|_{\el{2}}^2
=  \| g_\rho\|_{\hilsp_\infty}^2,
\end{equation}
where we used the fact that $g_\rho$ is contained in $\hilsp_\infty$ because of {\bf(A3')}.
Note that the first inequality in Proposition \ref{prop:high_prob_bound_on_operators} is a direct consequence of the second one.
We consider eigenvalues $\{\lambda_{\infty,i}^{(s)}\}_{i=1}^\infty$ and $\{\lambda_{\infty,i}\}_{i=1}^\infty$ of $\Sigma_\infty^{(s)}$ and $\Sigma_\infty$, respectively.
Let $i_\lambda$ be an index such that for $\forall i > i_\lambda$, $\lambda_{\infty,i} \leq \frac{\lambda}{2}$.
Since, every eigenvalue of $\{\lambda_{\infty,i}^{(s)}\}_{i=1}^\infty$ converges to that of $\{\lambda_{\infty,i}\}_{i=1}^\infty$ as $s\rightarrow \infty$, 
for an arbitrarily sufficiently large $s$, we have $| \lambda_{\infty,i}^{(s)} - \lambda_{\infty,i}| \leq \frac{\lambda}{2}$ for $\forall i < i_\lambda$, leading to $1/(\lambda + \lambda_{\infty,i}^{(s)}) \leq 2/(\lambda + \lambda_{\infty,i})$.
As for the case $i \geq i_\lambda$, since $\frac{3}{2}\lambda \geq \lambda + \lambda_{\infty,i}$, we have 
$1/(\lambda + \lambda_{\infty,i}^{(s)}) \leq 1/\lambda \leq 3/(2(\lambda + \lambda_{\infty,i}))$.
Combining these, we obtain $(\Sigma_\infty^{(s)} + \lambda I)^{-1} \preccurlyeq 2(\Sigma_\infty + \lambda I)^{-1}$ and 
\begin{align}
\lim_{s\rightarrow \infty} \plim_{M \rightarrow \infty} \| (\Sigma_M^{(s)} + \lambda I )^{-1} g_{M,\lambda}^{(s)} \|_{\el{2}}^2
&\leq 4\lambda^{-1}\| g_\rho\|_{\hilsp_\infty}^2, \label{eq:counterpart_propB_1}\\
\lim_{s\rightarrow \infty} \plim_{M \rightarrow \infty}\| (\Sigma_M^{(s)} + \lambda I )^{-1/2} g_{M,\lambda}^{(s)} \|_{\el{2}}^2
&\leq 4\| g_\rho\|_{\hilsp_\infty}^2. \label{eq:counterpart_propB_2}
\end{align}
These are the counterpart of the first and second inequalities in Proposition \ref{prop:high_prob_bound_on_operators}.

Next, we consider the bound on the degree of freedom in this proposition.
Assume $\lambda \leq \frac{1}{2} \| \Sigma_{\infty} \|_{\mathrm{op}}$.
As seen earlier, an operator $\Sigma_{\infty}^{(s)}$ converge to $\Sigma_{\infty}$ in terms of the operator norm.
Hence, $\lambda \leq \| \Sigma_{\infty}^{(s)} \|_{\mathrm{op}}$ for an arbitrarily sufficiently large $s$ and the bound on the degree of freedom in Proposition \ref{prop:high_prob_bound_on_operators} is applicable.
We get
\begin{equation} \label{eq:pre_dof_bound}
\tr{\Sigma_M^{(s)} (\Sigma_M^{(s)} + \lambda I)^{-1}}  \leq 3\tr{\Sigma_\infty^{(s)} (\Sigma_\infty^{(s)} + \lambda I)^{-1}}. 
\end{equation}
Let us consider upper bounding the right hand side:
\begin{align*}
\tr{\Sigma_\infty^{(s)} (\Sigma_\infty^{(s)} + \lambda I)^{-1}}
&= \sum_{i=1}^{i_\lambda - 1} \frac{\lambda_{\infty,i}^{(s)}}{\lambda + \lambda_{\infty,i}^{(s)}} 
+ \sum_{i=i_\lambda}^{\infty} \frac{\lambda_{\infty,i}^{(s)}}{\lambda + \lambda_{\infty,i}^{(s)}} \\
&\leq \sum_{i=1}^{i_\lambda - 1} \frac{\lambda_{\infty,i}^{(s)}}{\lambda + \lambda_{\infty,i}^{(s)}} 
+ \frac{1}{\lambda}\sum_{i=i_\lambda}^{\infty} \lambda_{\infty,i}^{(s)} \\
&= \sum_{i=1}^{i_\lambda - 1} \frac{\lambda_{\infty,i}^{(s)}}{\lambda + \lambda_{\infty,i}^{(s)}} 
- \frac{1}{\lambda}\sum_{i=1}^{i_\lambda - 1} \lambda_{\infty,i}^{(s)}
+ \frac{1}{\lambda}\tr{\Sigma_\infty^{(s)}}.
\end{align*}
On the other hand, by the definition of $i_\lambda$, 
\begin{align*}
2\tr{\Sigma_\infty (\Sigma_\infty + \lambda I)^{-1}}
&= 2\sum_{i=1}^{i_\lambda - 1} \frac{\lambda_{\infty,i}}{\lambda + \lambda_{\infty,i}} 
+ 2\sum_{i=i_\lambda}^{\infty} \frac{\lambda_{\infty,i}}{\lambda + \lambda_{\infty,i}} \\
&\geq 2\sum_{i=1}^{i_\lambda - 1} \frac{\lambda_{\infty,i}}{\lambda + \lambda_{\infty,i}} 
+ \frac{1}{\lambda}\sum_{i=i_\lambda}^{\infty} \lambda_{\infty,i} \\
&= 2\sum_{i=1}^{i_\lambda - 1} \frac{\lambda_{\infty,i}}{\lambda + \lambda_{\infty,i}} 
- \frac{1}{\lambda}\sum_{i=1}^{i_\lambda - 1} \lambda_{\infty,i}
+ \frac{1}{\lambda}\tr{\Sigma_\infty} \\
&\geq \sum_{i=1}^{i_\lambda - 1} \frac{\lambda_{\infty,i}}{\lambda + \lambda_{\infty,i}} 
- \frac{1}{\lambda}\sum_{i=1}^{i_\lambda - 1} \lambda_{\infty,i}
+ \frac{1}{\lambda}\tr{\Sigma_\infty}.
\end{align*}
Therefore, by inequality (\ref{eq:pre_dof_bound}), the convergence of $\lambda_{\infty,i}^{(s)} \rightarrow \lambda_{\infty,i}$ for $i \in \{1,\ldots, i_\lambda - 1\}$ as $s \rightarrow \infty$, and the second statement in Proposition \ref{prop:relu_decomp_123_bound}, we have 
\begin{equation}\label{eq:dof_bound}
\plim_{s\rightarrow \infty } \lim_{M\rightarrow \infty}\tr{\Sigma_M^{(s)} (\Sigma_M^{(s)} + \lambda I)^{-1}}
\leq 9\tr{\Sigma_\infty (\Sigma_\infty + \lambda I)^{-1}}.
\end{equation}

Combining (\ref{eq:relu_decomp_1})-(\ref{eq:relu_decomp_4}) with (\ref{eq:counterpart_theorem_A}), (\ref{eq:counterpart_propB_1}), (\ref{eq:counterpart_propB_2}), and (\ref{eq:dof_bound}), we establish the counterpart of Theorem \ref{thm:generalization_error_of_asgd_in_approx_ntk}.
For given $\epsilon$, $\lambda$, and $\delta$, there exist sufficiently large $s$ and $M$ such that with high probability $1-\delta$, 
\begin{align}
\expec\left[ \left\| \overline{g}^{(T)} - g_{\rho} \right\|_{\el{2}}^2 \right] 
&\leq \epsilon + \alpha \lambda^{2r} \| \Sigma_\infty^{-r}g_\rho\|_{\el{2}}^2 
+ \frac{\alpha}{T+1}\left( 1 + \frac{1}{\lambda \eta^2(T+1)} \right)  \| g_\rho \|_{\hilsp_{\infty}}^2 \notag\\
&+\frac{\alpha}{T+1}\left( 1 + \|g_\rho \|_{\el{2}}^2
+ \|\Sigma_\infty^{-r} g_\rho \|_{\el{2}}^2 \right) \tr{\Sigma_\infty (\Sigma_\infty + \lambda I)^{-1}}, \label{eq:counterpart_theorem_B}
\end{align}
where $\alpha > 0$ is a universal constant.

\paragraph{Proof of Corollary \ref{cor:fast_rate_smooth_activation}.}
Since conditions {\bf(A1')} and {\bf(A2')} are special cases of {\bf(A1)} and {\bf(A2)}, we can apply Proposition \ref{prop:equiv_asgd} to Algorithm \ref{alg:asgd} for the neural network with the smooth approximation $\sigma^{(s)}$ of ReLU.
Hence, by setting $\eta_t = \eta = O(1)$ satisfying $4 (6 + \lambda) \eta \leq 1$ and $\lambda = T^{-\beta/(2r\beta + 1)}$ where $\beta = 1 + \frac{1}{d-1}$, and by applying $\tr{\Sigma_\infty (\Sigma_\infty + \lambda I)^{-1}} = O(\lambda^{-1/\beta})$ \citep{caponnetto2007optimal} and \[ \| g_\rho \|_{\hilsp_\infty}, \| g_\rho \|_{\el{2}} \leq O\left( \| \Sigma_\infty^{-r} g_\rho \|_{\el{2}} \right) \]
because of $\|\Sigma_\infty\|_{\mathrm{op}} \leq O(1)$, we finish the proof of Corollary \ref{cor:fast_rate_smooth_activation}.

\section{Application to Binary Classification Problems}\label{sec:classification}
In this paper, we mainly focused on regression problems, but our idea can be applied to other applications.
We briefly discuss its application to binary classification problems.
A label space is set to $\labelsp=\{-1,1\}$ and a loss function is set to be the squared loss: $\ell(z,y)=0.5(y-z)^2$. 
The ultimate goal of the binary classification problem is to obtain the Bayes classifier that minimizes the expected classification error, 
\begin{equation*} 
  \error(g) \defeq \prob_{(X,Y)\sim \rho}[\sgn(g(X)) \neq Y], 
\end{equation*}
over all measurable maps.
It is known that the Bayes classifier is expressed as $\sgn(g_\rho(X))$, where $g_\rho$ is the Bayes rule of $\risk(g)=\expec_\rho[l(g(X),Y)]$ (see \cite{zhang2004statistical,bartlett2006convexity}).
Therefore, if $g_\rho$ satisfies a margin condition, i.e., $|g_\rho(x)| \geq \exists \tau > 0$ on $\mathrm{supp}(\rho_X)$, then this goal is achieved by obtaining an $\tau/2$-accurate solution of $g_\rho$ in terms of the uniform norm on $\mathrm{supp}(\rho_X)$.
That is, the required optimization accuracy on $\| g_{\overline{\Theta}^{(T)}} - g_\rho \|_{\el{\infty}}$ to obtain the Bayes classifier depends only on the margin $\tau$ unlike regression problems.
Due to this property, averaged stochastic gradient descent in RKHSs can achieve the linear convergence rate demonstrated in \cite{pillaud2018exponential}.
To leverage this theory to our problem setting, we consider the following decomposition:
\begin{align}
 \| g_{\overline{\Theta}^{(T)}} - g_\rho \|_{\el{\infty}} 
&\leq 
\| g_{\overline{\Theta}^{(T)}} - \overline{g}^{(T)} \|_{\el{\infty}} \label{eq:class_decomp_1}\\
&+ \| \overline{g}^{(T)} - g_{M,\lambda} \|_{\el{\infty}} \label{eq:class_decomp_2}\\
&+ \| g_{M,\lambda} - g_{\infty,\lambda} \|_{\el{\infty}} \label{eq:class_decomp_3}\\
&+ \| g_{\infty,\lambda} - g_\rho \|_{\el{\infty}}. \label{eq:class_decomp_4} 
\end{align}
The last term (\ref{eq:class_decomp_4}) can be made arbitrary small by $\lambda \rightarrow 0$ as shown in \cite{pillaud2018exponential}.
A term (\ref{eq:class_decomp_3}) can be bounded in the same manner as the third statement of Proposition \ref{prop:relu_decomp_123_bound}, yielding the convergence to $0$ as $M\rightarrow \infty$ with high probability.
The convergence of (\ref{eq:class_decomp_2}) was shown in \cite{pillaud2018exponential} and the convergence of (\ref{eq:class_decomp_1}) is guaranteed by Proposition \ref{prop:equiv_asgd}.
As a result, we can show the following exponential convergence of the classification error $\error(g)$ for two-layer neural networks with a sufficiently small $\lambda$ as demonstrated in \cite{pillaud2018exponential}.
\begin{equation*}
    \expec[\error(g_{\overline{\Theta}^{(T)}}) - \error(g_\rho) ] \leq 2\exp( -O(\lambda^2 \tau^2 T) ).
\end{equation*}

In \cite{nitanda2019stochastic}, an exponential convergence was shown for the logistic loss $\ell(z,y)=\log(1+\exp(-yz))$ as well.
Proposition \ref{prop:equiv_asgd} also holds for the logistic loss with an easier proof than the squared loss because of the boundedness of stochastic gradients of the loss. Hence, their theory is also applicable to the reference ASGD in an RKHS.
In summary, (\ref{eq:class_decomp_1}), (\ref{eq:class_decomp_2}), and (\ref{eq:class_decomp_4}) can be bounded by the above argument.
However, we note that bounding (\ref{eq:class_decomp_3}) is not obvious and is left for future work.

\fi
\small

\end{document}